\theoremstyle{plain}
\newtheorem{thm}{Theorem}[section]
\newtheorem{prop}[thm]{Proposition}
\newtheorem{lemma}{Lemma}
\def\etal{{\em et al.}}
\def\eg{{\em e.g.},\ }
\def\ie{{\em i.e.},\ }
\def\PSSP{{\sc pssp}}
\def\CPH{{\sc cox}}  
\def\AFT{{\sc aft}}
\def\RSF{{\sc rsf}}
\def\RSFKM{{\sc rsf-km}}
\def\KM{{\sc km}}
\def\MTLR{{\sc mtlr}}
\def\ISD{{\sc isd}}
\def\CoxKP{{\sc cox-kp}}
\def\CoxENKP{{\sc coxen-kp}}
\def\Indic#1{{\cal I}\left[\,#1\,\right]}
\def\medi#1{\hat{t}^{(0.5)}_{#1}}
\def\tzero{t^*}
\def\inst#1{\vec{x}_{#1}}
\def\b#1{{B\##1}}
\def\One{1_\cdot}
\def\OneOne{1_{t^*}}
\def\OneAll{1_\forall}
\def\PRsub#1#2{S_{#1}(\, #2 \,)}
\def\CPRsub#1#2#3{\PRsub{#1}{#2\,|\,#3}}
\def\surv#1{S(\, #1\,)}
\def\Csurv#1#2{\surv{#1\,|\,#2}}
\def\estP#1#2{\hat{S}_{#1}(\, #2\,)}
\def\estCP#1#2#3{\estP{#1}{#2\,|\,#3}}
\def\pair#1#2{[\,#1,\ #2\,]}
\def\pid#1{Subject\##1}  
\long\def\comment#1{}
\def\RG#1{\note[RG]{#1}} 
\def\RGi#1{\RG{#1}} 
\def\median{median}  
\def\est#1#2{\hat{m}^{#1}_{#2}}
\def\death#1{d_{#1}}
\def\censor#1{c_{#1}}
\def\loss#1#2#3{\ell_{#1}(\, #2,\, #3\,)}
\def\cancer#1#2{#2}  
\def\himHer{him/her}
\def\hding#1#2#3#4{\subsection{
#2: #3 (#4)}}
\def\SAc#1#2#3{[#1,$#2$,#3]}
\def\CP#1{\hbox{CP}(\, #1\,)}
\def\mean#1{\hat{\mu_{#1}}}
\def\NacdCol{{\sc Nacd-Col}}
\definecolor{LightCyan}{rgb}{0.88,1,1}
\newcommand\reallywidehat[1]{%
\savestack{\tmpbox}{\stretchto{%
  \scaleto{%
    \scalerel*[\widthof{\ensuremath{#1}}]{\kern-.6pt\bigwedge\kern-.6pt}%
    {\rule[-\textheight/2]{1ex}{\textheight}}
  }{\textheight}%
}{0.5ex}}%
\stackon[1pt]{#1}{\tmpbox}%
}
\newcolumntype{H}{>{\setbox0=\hbox\bgroup}c<{\egroup}@{}}
\title{Effective Ways to Build and Evaluate Individual Survival Distributions}
\author{
Humza Haider, 
Bret Hoehn,
Sarah Davis,
Russell Greiner 
\\
Department of Computing Science\\
University of Alberta\\
Edmonton, AB T6G 2E8 \\
\small
\texttt{\{hshaider, bhoehn, sdavis1, rgreiner
\}@ualberta.ca} \\
}
\date{\today}
\begin{document}

\comment{
\note[RG]{Confirm: t0 = start time, and t* is represent the reference time for prediction\\
Always CAPITALIZE Concordance\\
And lose the quotes ... ``1-calibration''..
\\
We need to define some terms:\\
for [P,1,g] : FRAMEWORK\\
Cox (or MTLR): MODEL\\
Gail (or some specific learned tool): SYSTEM or learned system or ....\\
}

\note[BH]{I've also requested HH run 2 more experiments:\\
a) run evaluation metrics on a simple ensemble of the best performing models\\
b) take 1 dataset (perhaps NACD Head \& Neck) and run all models with reduced numbers of training examples (but keeping the holdout data for each fold constant)\\
I suspect reviewers of this paper would be critical of our conclusions that MTLR is the best, when we don't compare to more recent methods.  If either of these experiments shows areas where other methods are complementary or perform better than MTLR, I think they would soften our "MTLR is the best!" results
}
}
\maketitle

\def\noCite{}  

\begin{abstract} %
An accurate model of a patient’s individual survival 
distribution  
can help determine the appropriate treatment for terminal patients. 
 Unfortunately, risk scores (\eg from Cox Proportional Hazard models)
 do not provide survival {\em probabilities},
single-time probability models 
(\eg the Gail model, predicting 5 year probability) only provide for a single time point,
and 
standard Kaplan-Meier survival curves 
provide only {\em population averages}\
for a large class of patients
meaning they are not specific to individual patients.
This motivates an alternative class of tools that can learn a model
which provides an individual survival {\em distribution} 
which gives survival probabilities across all times
-- such as 
extensions
to the Cox model%
\nocite{kalbfleisch2002statistical}, Accelerated Failure Time\nocite{kalbfleisch2002statistical},
an extension to Random Survival Forests\nocite{RandomSurvivalForests},
and 
Multi-Task Logistic Regression
\nocite{PSSP-NIPS}.
This paper first motivates such ``individual survival distribution'' (\ISD) models,
and explains how they differ from standard models.
It then discusses ways to evaluate such models -- namely Concordance, 1-Calibration,
Brier score,
and various versions of L1-loss-- and then motivates and defines a novel approach ``D-Calibration'',
which determines whether a model's probability estimates 
are meaningful.
We also discuss how these measures differ, 
and use them to evaluate several \ISD\ prediction tools,
over a range of survival datasets.
\end{abstract}


\def\keyword#1{\medskip \noindent {\bf #1:}}

\keyword{Keywords}
Survival analysis; risk model; patient specific survival prediction; calibration; discrimination

\comment{ Ideally, this would be an entire distribution -- providing useful information about the mean and variance of the survival time,
as well as other statistics (eg, probability of surviving for 1 year, or for 10 years).
Many tools address some aspect of this...
some provide distributions, but only at the class level (KM), but not individual.
Others are individual, but provide only ...
}


\newpage
\section{Introduction} 
\label{sec:Intro}
When diagnosed with a terminal disease, many patients ask about their prognosis~\cite{WantToKnowPrognosis}: 
“How long will I live?”, 
or 
“What is the chance that I will live for 1 year... and the chance for 5 years?”.
Here it would be useful to have a meaningful 
``survival distribution''
$\Csurv{t}{\inst{}}$
that provides, for each time $t\geq 0$, 
 the 
 probability that this specific patient $\inst{}$ will survive at least an additional $t$ months. 
 Unfortunately, 
 many of the standard survival analysis tools 
cannot accurately answer such questions: 
(1)~risk scores (\eg Cox proportional hazard~\cite{cox1972regression}) provide only {\em relative}\ survival measures, but not the calibrated probabilities desired; 
(2)~single-time probability models (\eg the Gail model~\cite{GailModel1999}) 
provide a probability value
but {\em only for a single time point};
and (3)~class-based survival curves (like Kaplan-Meier, \KM~\cite{kaplan1958nonparametric}) are {\em not specific to the patient}, but rather an entire population.

\comment{Option 2: 
Of course, the patient would like information that is directly meaningful, and as accurate as possible.
Unfortunately, many of the standard survival analysis tools are not appropriate here.
Some such tools (including the standard Cox Proportional Hazard model~\cite{cox1972regression}) 
provide a {\em risk score} $r(\inst{}) \in \Re$ for each individual patient $\inst{}$,
with the understanding that patients with higher risk scores, should die first.
These scores are helpful for ``ranking'' patients -- 
\ie to predict whether patient $\inst{1}$ will die before patient $\inst{2}$.
However, they do not directly address the patient's question
``how long will I live?''.%
\footnote{
Note that patients are more likely to ask that question, rather than
"Will I live longer than Mr Smith?".}
A second class of tools provides 
(single time-point) {\em probabilistic}\ estimates:
\eg the Gail model~\cite{GailModel1999}
predicts a woman's probability of developing breast cancer in the next 5 years,
$\estCP{}{t=\hbox{5years}}{\inst{}}\ \in [0,1]$
based on many specific features.
Those tools are appropriate when the user needs information about a single specified time point.
In many situations, however, neither patient nor clinician has a specific time in mind,
but instead wants to know general statistics, such as median/mean survival time, or perhaps the probability of survival at various time points -- \eg 3 months, and also 1 year and also 10 years.
}

\comment{Option 3??
We can get this information from a comprehensive survival {\em distribution}\ 
  (aka survival curve) $\estCP{}{t}{\inst{}}$,
  which provides, for each timepoint $t\geq 0$, 
  the (estimated) probability that this specific patient $\inst{}$ will survive at least time $t$. 
  The Kaplan-Meier estimator~(\KM~\cite{kaplan1958nonparametric})   provides such a survival curve,
  but only at the {\em class level}\ 
(\eg based only on the site and stage of the tumor) 
-- that is, it implicitly assumes that everyone of this class 
will have the same $\estCP{}{t}{\inst{}}\ =\ \estP{}{t}$ curve.
As an example, 
}

To explain the last point,
Figure~\ref{fig:StomCan}[left] shows the \KM\ curve for patients with stage-4 stomach cancer.
Here, we can read off the claim that 
50\% of the patients
will survive 11~months,
and 95\% will survive at least 2~months.%
\footnote{
\label{ftnote:SC}
In general, a survival curve is a plot where each $\pair{x}{y}$ point 
represents
(the curve's claim that) 
there is a $y\%$ chance of surviving at least $x$ time.
Hence, in Figure~\ref{fig:StomCan}[left],
the \pair{11~months}{50\%}\ point means this
curve  
predicts a 50\% chance of living at least 11 months 
(and hence a 100$-$50 = 50\% chance of dying within the first 11 months). 
The \pair{2~months}{95\%}\ point means 
a 95\% chance of surviving at least 2~months, 
and the \pair{51~months}{5\%}\ point means a 5\% chance of surviving at least 51~months.
}
While these estimates do apply to the population,
{\em on average}, 
they are not designed to be “accurate'' for an individual patient since these estimates do not include patient-specific information such as age, treatments administered, or general health conditions. 
It would be better to directly, and correctly, 
incorporate these important factors $\inst{}$ explicitly in the prognostic models.
\comment{
 –- \ie use the clinical and other medical information  (such as blood tests and performance status assessments~\cite{oken1982toxicity}) 
 that doctors routinely collect during the diagnosis and treatment of the disease
 -- corresponding to the patient-specific $\inst{}$ used above.
as such data can reveal important information about the patient’s state, 
 (\eg about his/her immune system and organ functioning), 
it may be very useful for predicting how well a patient will respond to treatments and ultimately how long s/he will survive. 
}
\comment{ 
  Notice the the patient is asking about her specific outcome.
  To be accurate,
  the prediction should be personalized,
based potentially on all available information about her
and not just about her ``class'' 
-- \eg in addition to the site and stage of her cancer,
also including information about her blood tests, metabolomic and genomic information, etc.
(This is why class-level tools, such as Kaplan-Meier~\cite{kaplan1958nonparametric}, 
which might provide statistics about site and stage, are not designed for this task.)
She would also like information that is directly meaningful,
such as the mean survival time.
}

This heterogeneity of patients,
coupled with the need to provide probabilistic estimates at several time points,
has motivated the creation of several
{\em individual survival time distribution}\ (\ISD) tools,
each of which can use this wealth of healthcare information from earlier patients, to learn 
a more accurate prognostic model,
which can then predict the \ISD\ of a novel patient
based on all available patient-specific attributes. 
This paper considers several \ISD\ models:
the Kalbfleisch-Prentice 
extension of 
the Cox (\CoxKP)~\cite{kalbfleisch2002statistical}
and the elastic net Cox (\CoxENKP)~\cite{coxEN} model,
the Accelerated Failure Time (\AFT) model~\cite{kalbfleisch2002statistical}, 
the
Random Survival Forest model with Kaplan-Meier extensions (\RSFKM),
and the 
Multi-task Logistic Regression (\MTLR) model%
~\cite{PSSP-NIPS}.
Figure~\ref{fig:StomCan}(middle, right) show survival curves (generated by \MTLR)
for two of these stage-4 stomach cancer patients, 
which incorporate other information about these
individual 
patients, such as the patient’s age, gender, blood work, etc.
We see that these prognoses are very different;
in particular, 
\MTLR\ predicts that [middle] Patient \#1's median survival time is 20.2 months, while [right] Patient \#2's is only 2.6 months.
The blue vertical lines show the actual times of death;
we see  
that each of these patients passed away 
very close to \MTLR's predictions of their respective median survival times.

\comment{
\begin{figure} 
\comment{Use (a), but for (b),(c): use these patients, but the \MTLR learned from all of NACD!
Explain that we get better results when we use statistics from ALL cancers. }
\hbox{\includegraphics[width=0.33\textwidth,height=1in]{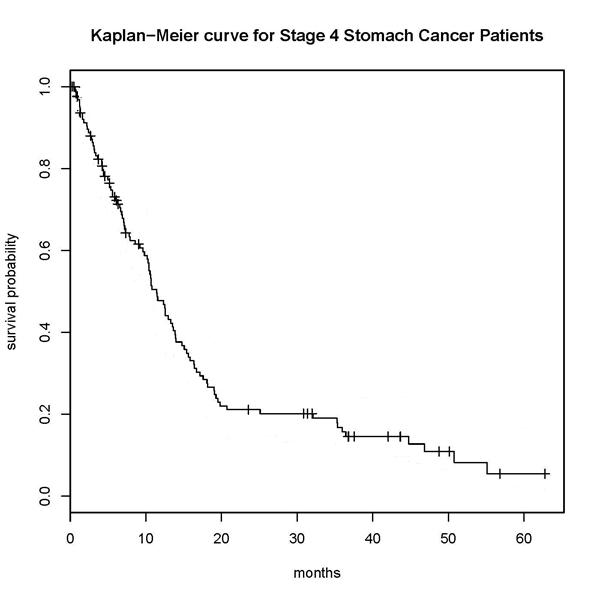}
\includegraphics[width=0.33\textwidth,height=1in]{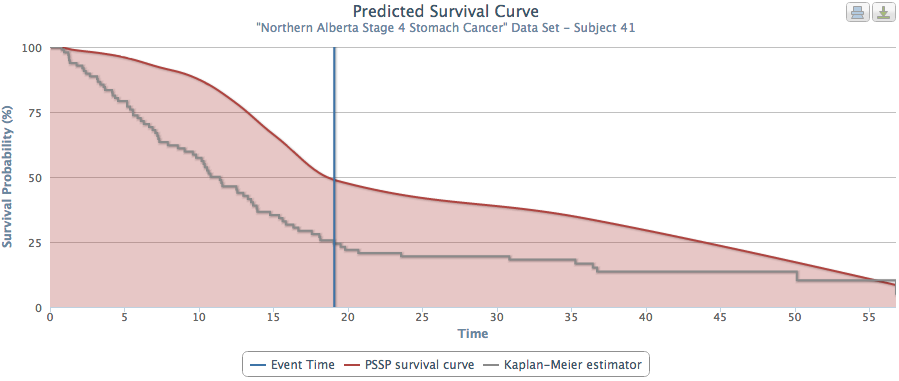}
\includegraphics[width=0.33\textwidth,height=1in]{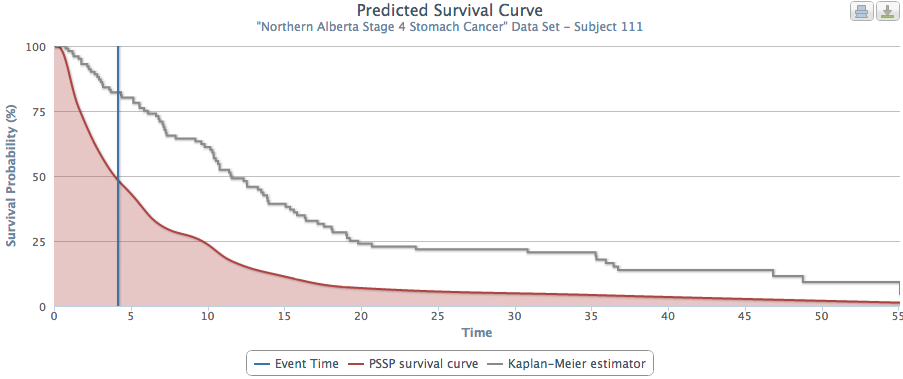}
}
\caption{\label{fig:StomCan}
[left] Kaplan-Meier curve, based on 128 patients with Stage-4 Stomach Cancer.
(middle, right) Two personalized survival curves, for two patients (\#41 and \#111) with Stage-4 Stomach Cancer.}
\end{figure}
}
\begin{figure} 
\includegraphics[width=\textwidth]{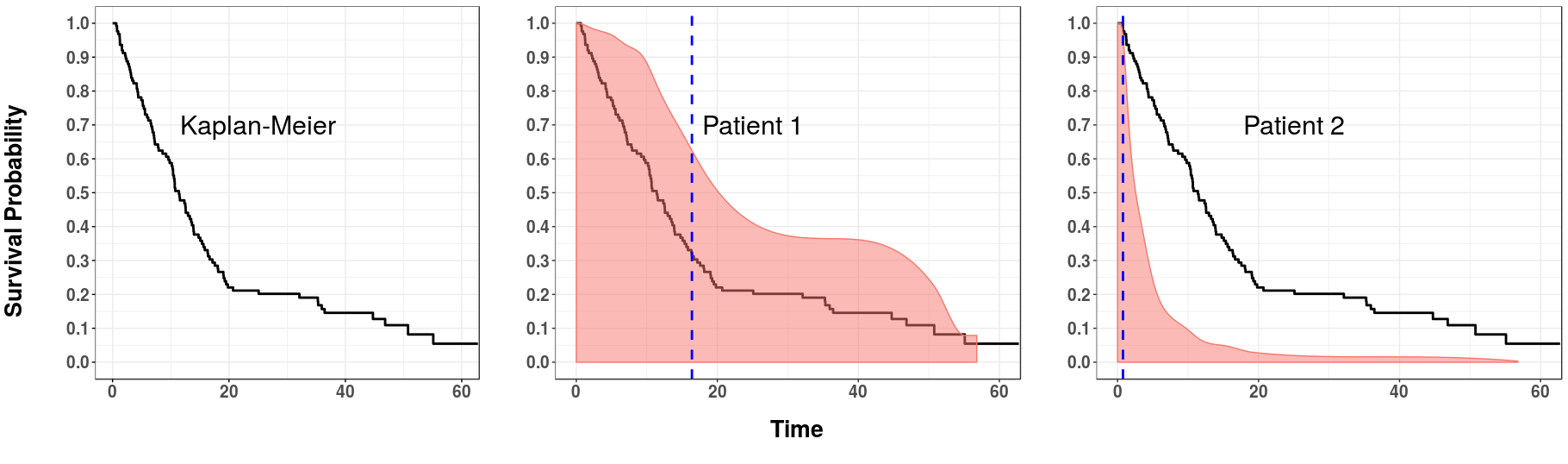}
\caption{\label{fig:StomCan}
[left] Kaplan-Meier curve, based on 128 patients with stage-4 stomach cancer.
(middle, right) Two personalized survival curves, for two patients (\#1 and \#2) with stage-4 stomach cancer. 
The blue dashed lines indicate the true time of death.}
\end{figure}

\comment{
Of course, these decisions will only be helpful if our tool produces meaningful accurate results.
A second contribution of this paper is providing a way to determine whether a survival model,
in terms of ``D-Calibration''.
We can a survival model is 
Fortunately, \MTLR\ is typically ``D-Calibrated'':
that is, when it produces a survival curve for a particular patient, 
it is appropriate to tell that patient that he
has 50\% chance of surviving less than the median survival time, and 
50\% of surviving more than that median,
and that he has a 75\% chance of surviving 
until at least the time associated with the 25\% on the curve, etc. 
Section~\ref{sec:Calibration} below further describes this important characteristic,
and demonstrates that our \MTLR\ typically has this property, 
but that some other survival analysis tools do not.
}

One could then use such curves to make decisions about the individual patient.
Of course, these decisions will only be helpful if the model is giving accurate information
--
\ie only if 
it is appropriate to tell a patient 
that s/he
has a 50\% chance of 
dying before 
the median survival time of this predicted curve,
and 
a 25\% chance of dying before 
the time associated with the 25\% on the curve, 
etc.


We focus on ways to {\em learn}\ such models from 
a 
``survival dataset'' (see below),
describing earlier individuals. 
Survival prediction is similar to regression 
as both involve learning a model that regresses the covariates 
of an individual to 
estimate the value of a dependent real-valued response variable
-- here, that variable is ``time to event'' (where the standard event is ``death'').
But survival prediction differs from the standard regression task as 
its response variable is not fully observed in all training instances
-- this task allows many of the instances 
to be 
``right censored'',
in that we only see a {\em lower bound}\ of the response value.
This might happen if a subject was alive when the study ended,
meaning we only know that she lived {\em at least}\ (say) 5 years
after the starting time, 
but do not know whether she actually lived 5 years and a day, or 30 years.
This also happens if a subject drops out of a study, after say 2.3 years, and is then lost to follow-up; etc.
Moreover, one cannot simply ignore such instances as it is common for many (or often, {\em most}) of the training instances to be right-censored; 
see Table~\ref{tab:datasets}.
Such ``partial label information'' is problematic for standard regression techniques,
which assume the label is completely specified for each training instance.
Fortunately, there are survival prediction algorithms that can learn an effective model, 
from a 
 cohort 
that includes such censored data.
Each such ``survival dataset'' contains descriptions of a set of instances (\eg patients),
as well as two ``labels'' for each:
one is the time, 
corresponding to the 
{\em time from diagnosis to a final date} (either death, or time of last follow-up) and
the other is the {\em  status} bit,
which indicates whether the patient was alive at that final date. 
Section~\ref{sec:MLparadigm} summarizes several popular models for dealing with such survival data.
\comment{\note[RG]{Do we need next sentence? Notice we mention this in a footnote below.}\note[BH]{Since we mention this later, I'm taking it out here}
(We also note that there are many survival {\em analysis}\ tools
that instead use a survival dataset to identify {\em biomarkers}\ -- 
each a feature that, by itself, is related to an individual's survival.
We will later use this facility to reduce the number of features considered
by our survival prediction models.)}

This paper provides three contributions:
(1)~Section~\ref{sec:MLparadigm} motivates the need for such \ISD\ models
by showing how they differ from more standard survival analysis systems.
(2)~Section~\ref{sec:Eval} then discusses several ways to evaluate such models,
including standard measures (Concordance, 1-Calibration, Brier score),
variants/extensions to familiar measures (L1-loss, Log-L1-loss), 
and also a novel approach, ``D-Calibration'' which can be used to assess the quality of the individual survival curves generated by \ISD\ models.
(3)~Section~\ref{sec:EvalISD} evaluates several \ISD\ (and related) models 
(standard: \KM, \CoxKP, \AFT\
and more recent: \RSFKM, \CoxENKP, \MTLR)
on 8 diverse survival datasets, 
in terms of all 5 evaluation measures. 
We will see that \MTLR\ 
does well -- 
typically outperforming 
the other models
in the various measures,
and 
often showing vast improvement 
in terms of calibration metrics.

\comment{ 
  This paper addresses this task, 
  by providing a way to determine whether a survival model is 
  ``D-Calibrated''.
  Section~\ref{sec:Calibration} below describes and defines this important characteristic,
  and demonstrates that our \MTLR\ typically has this property, 
  but that some other survival analysis tools do not.

 ==============

``D-Calibration'' is a feature of the overall probabilistic model
-- \eg the model that led to the plots shown in Figure~\ref{fig:StomCan}.
Many applications involve using only a single point estimate of the survival time for a patient (rather than the entire curve)
-- for providing a single estimate to a patient.
Here, we consider a patient's {\em median survival time};
\eg 
\pid{41}'s median survival time is 18~months.
\RG{Or should this be MEAN?}
Section~\ref{sec:otherEval} demonstrates that the \median\ times,
obtained from a patient's \MTLR-curve,
works effectively.

These measures (calibration, single estimate) 
reflect ways that a clinician will want to use a survival prediction system,
to help that physician treat patients.

 = = = = =

Sections~\ref{sec:Calibration} and~\ref{sec:otherEval} 
present empirical evaluations, based on%
\cancer{our cancer dataset,}{
various real databases, both collected here and elsewhere,}
to demonstrate that our \MTLR\ tool works effectively.
We also show that \MTLR\ is more effective than other more-standard tools used for survival {\em analysis};
Section~\ref{sec:MLparadigm} introduces these systems, and discusses how they differ from \MTLR. 
}  

The appendices provide relevant auxiliary information:
Appendix~\ref{app:SC-to-0}
describes some important nuances
about survival curves. 
Appendix~\ref{app:Evaluation} provides further details concerning all the evaluation metrics 
and 
in particular, how each addresses censored observations. 
It also 
contains some relevant proofs 
about our 
novel D-Calibration metric. 
Appendix~\ref{app:ISDDetails} 
then 
explains some additional aspects 
of the \ISD\ models considered in this paper. 
Lastly, Appendix~\ref{app:EmpiricalDetail} gives the detailed results from empirical evaluation
-- 
\eg 
providing detailed tables corresponding to the
 results shown as figures in Section~\ref{sec:Empirical Results}.

For readers who want an introduction to 
survival analysis and prediction,
we recommend \textit{Applied Survival Analysis} by Hosmer and Lemeshow~\cite{hosmer2011applied}.
Wang et al.~\cite{wang2017machine} surveyed
machine learning techniques and evaluation metrics for survival analysis. 
However, that work primarily 
\comment{\change[RG]{surveyed the many different models applicable for survival analysis, and gave an overview for each.
In addition, they briefly discussed some of the evaluation techniques and application areas that survival analysis has
such as healthcare, customer churn, and student/employment retention.}}
{overviewed the standard survival analysis models,
then briefly discussed some of the evaluation techniques and 
application areas.}
Our work, instead, focuses on the \ISD-based models --
first motivating why they are relevant for survival prediction 
{(with a focus on medical situations)}
then providing empirical results showing the strengths and weaknesses of each of the models considered.


\comment{

Our earlier paper \cite{PSSP-NIPS} provided the formal foundation of our \MTLR\ tool.
This paper describes how clinicians can use this \PSSP\ system, and demonstrates that it works effectively.
Section~\ref{sec:Reln}
first places 
\PSSP\ in the context of related survival analysis tools.
The next two sections
then 
demonstrate that \PSSP\ can effectively provide the information that clinicians (as well as patients, and also medical researchers) will be able to use. 
Section~\ref{sec:Calibration} showing that the \MTLR\ {\em curves}\ 
are ‘calibrated’ -- \ie that its survival curves are meaningful.
}

\comment{
the Cox Proportional Hazard model~\cite{cox1972regression}
typically does not, nor do many of the other survival analysis or risk models.
While Kaplan-Meier curves~\cite{kaplan1958nonparametric} 
are also calibrated, they are not individuated, and so are not very accurate;
see Table~\ref{tab:diffSurvival}.
}
\comment{ 
To further distinguish \MTLR\ from other models,
this section also briefly considers other measures, related to
the accuracy of (a)~its median survival time estimates, 
(b)~its “5 year survival” predictions (as well as its 
``1 year survival''
 or “6 month survival”, or in general, “k year survival"); and
(c)~its Concordance score.
}
\comment{
In particular, we provide three measures: 
(a)~showing that \MTLR\ is ‘calibrated’ -- \ie that its survival curves are meaningful; 
(b)~its “5 year survival” predictions are accurate (as is its “10 year survival” or “6 month survival”, or in general, “k year survival");
and 
(c)~its Concordance score is accurate.
}

\section{Summary of 
Various Survival Analysis/Prediction Systems}
\label{sec:MLparadigm}

There are many different survival analysis/prediction tools,
designed to deal with various different tasks.
We focus on tools that learn the model 
from a survival dataset,
\begin{equation}
D\quad=\quad \{\, 
[\inst{i},\, t_i,\, \delta_i]\, \}_i
\label{eqn:SurData}
\end{equation}
which provides the values for features 
$\inst{i} = [ x_{i}^{(1)},\, \cdots,\, x_{i}^{(k)}]$
for each member of a cohort of historical patients,
as well as the actual time of the ``event'' $t_i \in \Re^{\geq 0 }$
which is either death (uncensored) or the last visit (censored),
and a bit $\delta \in \{0,1\}$ that serves as the indicator for death.%
\footnote{ 
Throughout this work we focus on only Right-Censored survival data. Additionally, we constrain our work to the standard machine-learning framework,
where our predictions are based only on information available at fixed time $t_0$ 
(\eg start of treatment).
\comment{\note[RG]{Should we move next sentence to "future work", at end?}\note[BH]{This next statement seems to just muddy the water and I recommend removing it entirely}
We do {\em not}\ consider the challenge of making predictions at time $t_0$, based on both information known at time $t_0$ and 
also later (at $t_1 > t_0$), etc.}
While these descriptions all apply when dealing with the time to an
arbitrary
{\em event},
our descriptions will refer to ``time to death''.
} 
See Figure~\ref{fig:LearnISD},
in the context of our \ISD\ framework.

Here, we assume $\inst{}$ is a vector of 
 feature values describing a patients,
 using information
that are available when that patient entered the study -- 
\eg when the patient was first diagnosed with the disease, or started the treatment. Additionally, we 
assume each patient has a death time, $d_i$, and a censoring time, $c_i$, 
and assign $t_i := \min\{d_i, c_i\}$ and 
$\delta_i = \Indic{d_i \leq c_i}$
where $\Indic{\cdot}$ is the Indicator function
-- \ie 
$\delta_i := 1$ if $d_i \leq c_i$ or $\delta_i := 0$ if $d_i > c_i$. 
We follow the standard convention that $d_i$ and $c_i$ are assumed independent.

\begin{figure}\centering 
\includegraphics[
height=3.5in 
]
{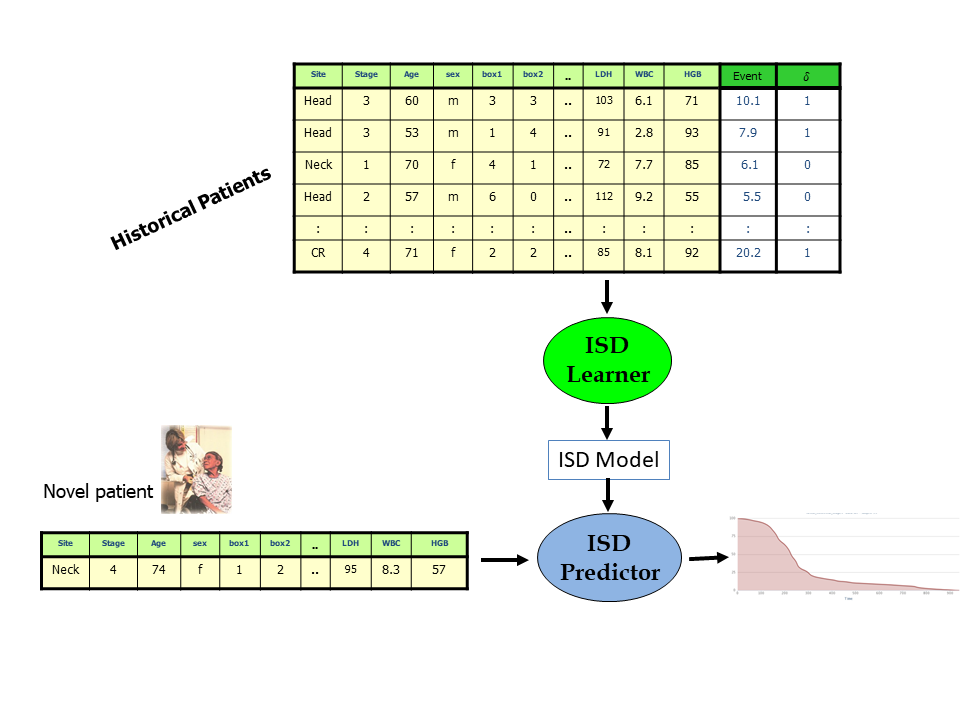}
\vspace{-0.5in}
\caption{Machine Learning paradigm for learning, 
then using,
an \ISD\ (Individual Survival Distribution) Model.}
\label{fig:LearnISD}
\end{figure}

To help categorize the space of survival prediction systems, 
we consider 3 independent 
characteristics:
\begin{itemize}
\item {\em [R vs P]} whether the system provides, 
for each patient, a risk score $r(\inst{}) \in \Re$
versus 
a probabilistic value 
$
\in [0,1]$ (perhaps $\estCP{}{t}{\inst{}}$).

\item {\em [$\OneOne$ vs $\OneAll$ vs $\infty$]}
whether the system returns a {\em single}\ value for each patient
(associated either with a single time 
``$\OneOne$''
or with the overall survival
``$\OneAll$''), 
versus a range of values, one for each time.
Here $\OneOne$ might refer to $\estCP{}{t^*}{\inst{}}\in [0,1]$ for a single time $t^*$ 
and
$\OneAll$ if there is a single ``atemporal'' value 
(think of the standard risk score, which is not linked to a specific time),
vs 
$\infty$ that refers to
$\{\ [t,\, \estCP{}{t}{\inst{}}]\ \}_{t \geq 0}$ over all future times $t\geq 0$.
\comment{
(Note this ``1'' designation also applies if there is a single ``atemporal'' value 
-- 
think of the standard risk score, which is not linked to a specific time.
Section~\ref{sec:OtherIssues} discusses this and related issues.)}
\comment{
Actually, there are 3 different possibilities here:\\
 ``1 value $\forall$ time (1$\forall$)'' -- think Cox\\
``1 value for 1 time (1)'' -- think the 1-time Prob predictors (Gail, ...)\\
``n values for n times ($\forall$)'' -- think ISD, or tCox, ...\\
Need to change figure, and text, to reflect this\\
What symbols to use for this?\\
DECISION: keep just 1 vs $\infty$ ... but 
in comments, note that "1" includes both "1-1" and "1-$\forall$"
-- where "1-1" means 1 value for 1 time (think Gail, or tCox, or ...),
while "1-$\forall$" means "1 value for all time", think Cox.
}
\comment{
\note[RG]{Is this needed? Or just in S2.6?}
Also, this is a statement about the underlying ``model'' -- 
whether that model provides a single quantity versus a range of values.
Hence, a system that provides values for 2 or 3 time points would 
still be viewed as a ``1'' if it uses different models for each of those time points.
}

\comment{
\item {\em [1 vs $\infty$]} whether the value(s) for each patient is  specific to single time, versus an atemporal score
-- \eg $\estCP{}{\tzero}{\inst{}}\in [0,1]$ for a single time $\tzero$, vs 
$\{[t,\, \estCP{}{t}{\inst{}}]\,\}_{t \geq 0}$ over all future times $t\geq 0$.
\note[RG]{Hmmm... is the issue: the NUMBER of values returned, or the claim about each value returned?\\
For $\infty$: Could have prob of death before time t
(for each t), or
a risk associate with time t.\\
For risk, could also have a SINGLE risk, not associated with a single time.\\
For survival: is there a single survival statistic,
that is NOT associated with time?
Perhaps if some people lived forever and others didn't,
could ask for prob of ever dying.
Or with respect to loans: many people will NOT default.
Or if there are competing risks?}
(Or a single risk score for a specific time, versus a set of [time, risk score] pairs.) 
}

\item {\em [i vs g]} whether the result is 
``{\em i}\,''~specific to a single individual patient
(\ie based on a large number of features $\inst{}$)
or is ``{\em g}\,''~general to the population.
This {\em g} also applies 
if the model deals with a {\em fixed set of subpopulations}\
-- perhaps each contains all patients
with certain values of only one or two features
(\eg subpopulation $p1$ is all men under 50,
 $p2$ are men over 50, and $p3$ and $p4$ are corresponding sets of women),
or each subpopulation is a specified range
of some computation
(\eg $p1'$ are those with BMI$<$20,
$p2'$ with BMI$\in [20,\,30]$
and $p3'$, with BMI$>$30).


\end{itemize}
This section summarizes 5 (of the $2 \times 3 \times 2 = 12$) classes of survival analysis tools 
(see Figure~\ref{fig:TypesSA}),
giving typical uses of each,
then discusses how they are interrelated.

\begin{figure} 
\hspace*{-0.4in}\includegraphics[width=1.08\textwidth]{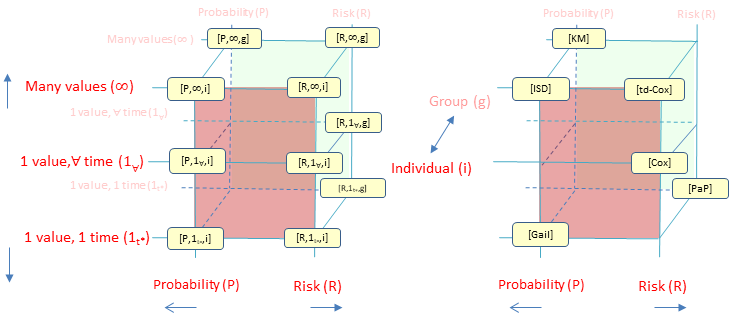} 
\caption{\label{fig:TypesSA}
Dimensions for cataloging types of Survival Analysis/Prediction tools [left] --
and examples of certain tools.}
\end{figure}

\hding{1}{\SAc{R}{\OneAll}{i}}{1-value Individual Risk Models}{\CPH}
\label{sec:GRisk}
An important class of survival analysis tools compute ``risk'' scores, $r(\inst{}) \in \Re$ for each patient $\inst{}$,
with the understanding that $r(\inst{a} ) > r(\inst{b})$
corresponds to predicting that $\inst{a}$ will die before $\inst{b}$.
Hence, this is a {\em discriminative}\ tool
for comparing pairs of patients,
or perhaps for ``what if'' analysis of a single patient
(\eg if he continues smoking, versus if he quits).
These systems are typically evaluated using a discriminative measure, 
such as ``Concordance'' (discussed in Section~\ref{sec:Concordance}).
Notice these tools each return a single real value for each patient.

\comment{
Here, we consider tools that provide atemporal risk assessments;
the ``$\infty$'' in the categorization does NOT mean that that the mode produces a large number of points, but only that its value is not specific to a single time-point.
The \SAc{R}{1}{i}\ class, 
described below,
considers various tools that provide scores for a range of specific time.
}

One standard generic tool here is 
the Cox Proportional Hazard (\CPH) model~\cite{cox1972regression},
which is used in a wide variety of applications.
This models the hazard function%
\footnote{
The hazard function (also known as the failure rate, hazard rate, or force of mortality) 
$h(t; \inst{})\ =\ p(t\,|\, \inst{}) / \Csurv{t}{\inst{}}$ 
is essentially the chance that $\inst{}$ will die at time $t$, given that s/he has lived until this time,
using the survival PDF $p(t\,|\, \inst{})$.
When continuous,
$h(t; \inst{})\ =\ - \frac{d}{dt} \log \Csurv{t}{\inst{}}$.
}
as 
\begin{equation}
h_{\CPH}(\,t,\,\inst{}\,)\quad =\quad \lambda_0(t)\, \exp(\vec{\beta}^T \inst{})
\label{eqn:CoxPH}
\end{equation}
where $\vec{\beta}$ are the learned weights for the features,
and $\lambda_0(t)$ is the baseline hazard function.
We view this as a Risk Model by ignoring $\lambda_0(t)$ (as $\lambda_0(t)$ is the same for all patients), 
and focusing on just
$ \exp(\vec{\beta}^T \inst{}) \ \in \Re^+$.
(But see the \CoxKP\ model below, in \SAc{P}{\infty}{i}.)
\comment{
\note[RG]{Previously had the Use Cox for FS here... moved to S 2.6}
Note many researchers use the Cox model (Equation~\ref{eqn:CoxPH}) for this,
by testing if the $\hat{\beta}_i$ coefficient associated with feature $x_i$ is significantly different from 0.
\note[RG]{We will later use this approach to select features,
as a pre-processing step, before running the actual survival prediction model.
Hmmm... this appears in Sec1 -- should it remain there, and also here?}
}
There are many other tools for predicting an individual's risk score,
typically with respect to some 
disease;
see for example the Colditz-Rosner model~\cite{colditz2000cumulative},
and the myriad of others appearing on the Disease Risk Index website%
\footnote{\url{http://www.diseaseriskindex.harvard.edu/update/}}.
For all of these models, the value returned is atemporal --
\ie it does not depend on a specific time.
There are also tools that produce \SAc{R}{\infty}{i}\ models,
that return a risk score associated across all time point;
see Section~\ref{sec:Concordance}.

\comment{
{\tt http://aje.oxfordjournals.org/content/152/10/950.long}\\
 nonlinear Poisson regression that accounts for time (and summarizes risk to age 70 years\\
 Comparison of the long-term impact of risk factors for breast cancer is often difficult because risk factors frequently change in magnitude and even direction over different periods of life. Cumulative incidence to age 70 years provides one measure that avoids these limitations of age-specific relative risks.\\
 %
 }

\comment{
It is typically not used to predict survival
time since the hazard function is incomplete without the baseline hazard 
$\lambda_0(t)$ function. 
Although we can fit
a non-parametric survival function for $\lambda_0(t)$ after the coefficients of Cox regression are determined [2], 
this requires a cumbersome 2-step procedure  }

\hding{2}{\SAc{R}{\OneOne}{g}}{Single-time Group Risk Predictors: Prognostic Scales}{PPI, PaP}
\label{sec:1timePred}
Another class of risk predictions explicitly focus on a single time,
leading to prognostic scales,
some of which are computed using 
Likert scales~\cite{rogers2001use}.
For example,   
the Palliative Prognostic Index (PPI)~\cite{morita1999palliative}
computes a risk score for each terminally ill patient,
which is then used to assign that patient into one of three groups.
It then uses statistics about each group 
to predict that patients in one group will do better at this specific time (here, 3 weeks),
than those in another group.
Similarly, the Palliative Prognostic Score  (PaP)~\cite{pirovano1999new}
uses a patient's characteristics to assign \himHer\ into one of 3 
risk groups, which can be used to estimate the 30-day survival risk.
(There are many other such prognostic scales, including~\cite{chuang2004prediction,anderson1995palliative,haybittle1982prognostic}.)
Again, these tools are typically evaluated using Concordance.%
\footnote{
Here, they do not compare pairs of individuals from the same group,
but only patients from different groups, 
whose events are comparable (given censoring); 
see Section~\ref{sec:Concordance}.}

\comment{
\RG{also many others in R-calc ...}
There are also \SAc{R}{1}{i}\ tools that compute an {\em individual}\ risk score, associated with a single time, and use that to assign risk to that patient.
\RGi{find some!}
\RGi{Notice this was already computed, but then it was binned.}
}

\comment{
Like the single-time predictors mentioned above, these tools also deal with
only a single time-point for each patient, rather than a curve.
These prognostic scales provide even less information about each patient,
as they place
each patient into one of a few 
``survival bins''
(rather than provide a probability value).
By contrast, our \MTLR\ can provide individual calibrated assessments for each patient,
which give meaningful survival estimates specific to this patient,
for any time.
}
 
\comment{
Chuang Prognostic Score (CPS)
.. 8. Chuang RB, Hu WY, Chiu TY et al. Prediction of survival in terminal cancer patients
in Taiwan: constructing a prognostic scale. J Pain Symptom Manage 2004; 28(2):
115-122.

Palliative Performance Scale (PPS) 
.. Anderson F, Downing GM, Hill J et al. Palliative performance scale (PPS): a new
tool. J Palliat Care 1996; 12(1): 5-11.

and clinical prediction models of various studies 
.. Bozcuk H, Koyuncu E, Yildiz M et al. A simple and accurate prediction model to
estimate the intrahospital mortality risk of hospitalised cancer patients. Int J Clin 
Pract 2004; 58(11): 1014-1019.

.. Reuben DB, Mor V, Hiris J. Clinical symptoms and length of survival in patients
with terminal cancer. Arch Intern Med 1988; 148(7): 1586-1591.

.. Chiang JK, Lai NS, Wang MH et al. A proposed prognostic 7-day survival formula
23 for patients with terminal cancer. BMC Public Health 2009; 9: 365.
}
\hding{3}{\SAc{P}{\OneOne}{i}}{Single-time Individual Probabilistic Predictors}{Gail, PredictDepression}
Another class of single-time predictors 
each produce a {\em survival probability}\ $\estCP{}{\tzero}{\inst{}} \in [0,1]$ 
for each individual patient $\inst{}$,
for a single fixed time $\tzero$ 
-- which is the {\em probability} $\in [0,1]$ that $\inst{}$ will survive to at least time $\tzero$.
For example, the Gail model [Gail]~\cite{GailModel1999}
\footnote{\url{http://www.cancer.gov/bcrisktool/}}
estimates the probability that a woman will develop breast cancer within 5~years %
based on her responses to a number of survey questions.
Similarly, 
the PredictDepression system [PredDep]~\cite{wang2014prediction}
\footnote{\url{http://predictingdepression.com/}}
predicts the probability that a patient will develop a major depressive episode in the next 
4~years based on a small number of responses.
The Apervite\footnote{\url{https://apervita.com/community/clevelandclinic}} 
and R-calc\footnote{\url{http://www.r-calc.com/ExistingFormulas.aspx?filter=CCQHS}} websites
each include dozens of such tools,
each predicting the survival probability
for 1 (or perhaps 2) fixed time points,
for certain classes of diseases.


Notice these probability values 
have semantic content, and are labels for {\em individual patients}
(rather than risk-scores, which are only meaningful 
within the context of other patients' risk scores).
These systems should be evaluated using a calibration measure, such as 
1-Calibration 
{or} Brier score (discussed in Sections~\ref{sec:1-Calib} and~\ref{sec:BrierScore}).

\hding{4}{\SAc{P}{\infty}{g}}{Group Survival Distribution}{\KM}
\label{sec:KM-model}
There are many systems that can produce a survival distribution:
a graph of $[t, \estP{}{t}]$,
showing the survival probability $\estP{}{t} \in [0,1]$
for each time $t\geq 0$;
see Figure~\ref{fig:StomCan}.
The Kaplan-Meier analytic tool (\KM) is at the ``class'' level,
producing a distribution designed to apply to everyone in a sub-population:
$\estCP{}{t}{\inst{}} = \estP{}{t}$, 
for every $\inst{}$ in some class --
\eg the \KM\ curve in Figure~\ref{fig:StomCan}[left] applies to 
every patient $\inst{}$ with stage-4 stomach cancer.
The SEER website\footnote{\url{http://seer.cancer.gov/}} provides a set of Kaplan-Meier curves for various cancers.
While patients can use such information to estimate their survival probabilities,
the original 
goal of that analysis is to better understand the disease itself,
perhaps by seeing whether some specific feature made a difference,
or if a treatment was beneficial.
For example, 
we could produce one curve for all stage-4 stomach cancer patients who had treatment tA,
and another for the disjoint subset of patients who had no treatment;
then run a log-rank test~\cite{LogRankTest} to determine whether (on average) 
patients receiving treatment tA survived statistically longer than those who did not.
Section~\ref{sec:Eval} below
describes various ways to evaluate 
\SAc{P}{\infty}{i}\ 
models;
we will use these measures to evaluate \KM\ models as well.

\comment{
For example, Kaplan-Meier analysis (\KM) 
tries to determine if some specific feature led to statistically different performance.
This involves comparing sub-populations, 
each corresponding to 
a large group of individuals, characterized by only a few features.
For example, we could imagine extending 
Figure~\ref{fig:StomCan}[left] to produce one \KM\ curve that applies to everyone with stage-4 stomach cancer who is {\em male}, 
and another \KM\ curve for everyone with stage-4 stomach cancer who is {\em female},
and then run a log-rank test~\cite{LogRankTest} to determine whether (on average) one gender survives statistically longer than the other. 
In that case (not shown), each of the curves would include all patients with the same site and stage of cancer;
but differ only with respect to gender.
Of course, there are still many differences between individuals within each such sub-population, based on (say) age, histology, etc.
However these \KM\ curves do not incorporate such information,
as they were not designed to tell each specific patient about their
survival distribution.
}

\hding{5}{\SAc{P}{\infty}{i}}{Individual Survival Distribution, \ISD}{\CoxKP, \CoxENKP, \AFT, \RSFKM, \MTLR}
\label{sec:ISD}
The previous two subsections described two frameworks: 
\begin{itemize}
\item 
\SAc{P}{\OneOne}{i}\ 
tools, 
which produce an 
{\em individualized}\ probability value $\estCP{}{\tzero}{\inst{i}} \in [0,1]$,
but only for a single time $\tzero$; and
\item \SAc{P}{\infty}{g}\ 
tools, which produce the entire survival probability curve  $[t,\estP{}{t}]$
{\em for all points $t \geq 0$},
but are not individuated -- \ie the same curve for all patients
$\{\, \inst{i}\,\}$.
\end{itemize}
Here, we consider an important extension: a tool that produces
{\em the entire survival probability curve  $\{\ [t,\,\estCP{}{t}{\inst{i}}]\ \}_t$
for all points $t \geq 0$},
{\em specific to each individual patient, $\inst{i}$}.
As noted in the previous section, 
this is required by any application
that requires knowing meaningful survival probabilities for many time points.
This model also allows us to compute other useful statistics,
such as a specific patient's expected survival time.

We call each such system an ``Individual Survival Distribution'' model, \ISD.
While the Cox model is often used just to produce the risk score,
it can be used as an \ISD, given an appropriate (learned)
baseline hazard function $\lambda_0(t)$;
see Equation~\ref{eqn:CoxPH}.
We estimate this using the Kalbfleisch-Prentice estimator~\cite{kalbfleisch2002statistical},
and call this combination ``\CoxKP'';
we also consider a regularized Cox model, 
namely 
the elastic net Cox with the Kalbfleisch-Prentice extension (\CoxENKP).
We also explore three other models:
Accelerated Failure Time model~\cite{kalbfleisch2002statistical} with the Weibull distribution (\AFT), 
Random Survival Forests with the Kaplan-Meier extension (\RSFKM , 
described in Appendix~\ref{app:RSFKM})~\cite{RandomSurvivalForests} and 
Multi-task Logistic Regression system (\MTLR)~\cite{PSSP-NIPS}.
Figure~\ref{fig:AllCurves} shows the curves
from these various models,
each over the same set of individuals.

\begin{figure} 
\includegraphics[width=\textwidth]
{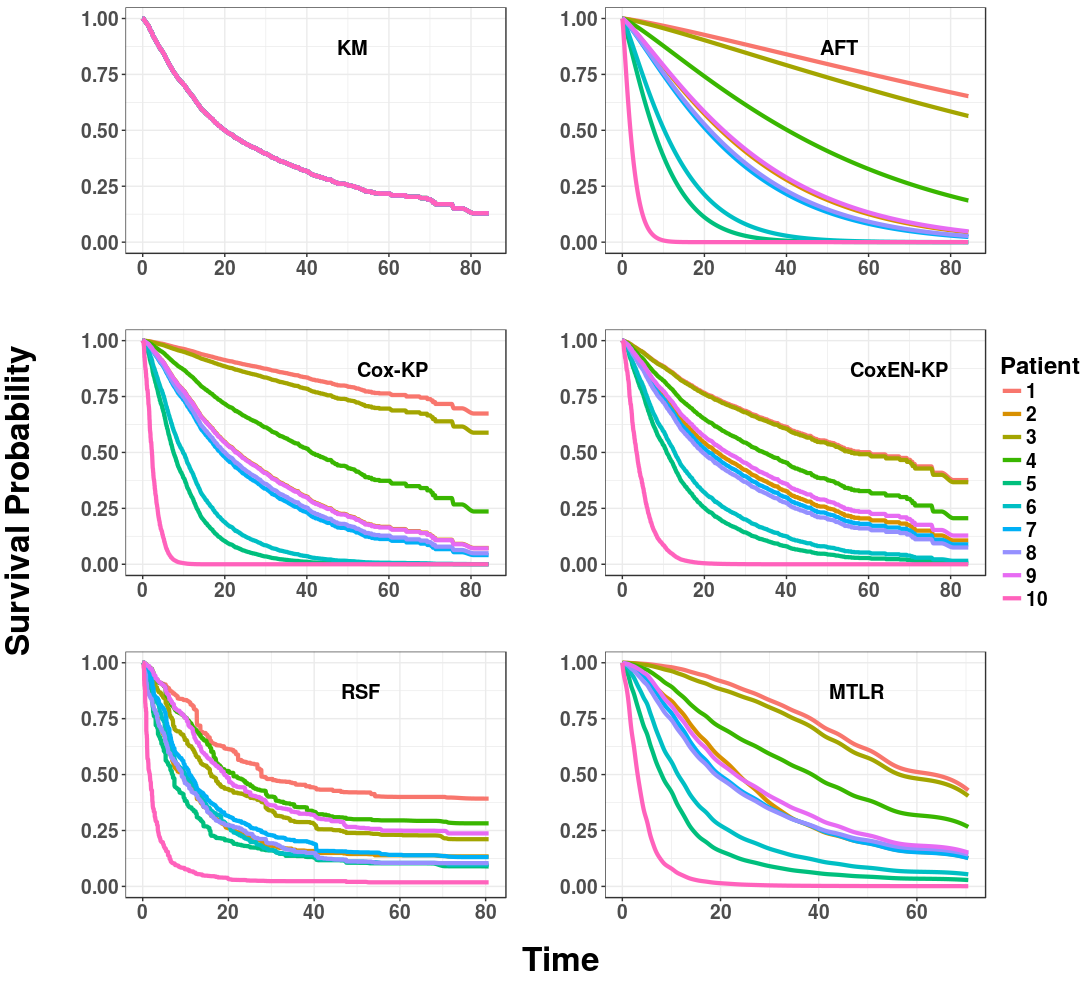}%
\caption{\label{fig:AllCurves}
Survival curves (within one of 5 folds) of 10 cancer patients for
the \KM\ model and all 5 \ISD\ models considered here,  
evaluated on the NACD dataset (described in Section~\ref{sec:DataSets}). 
As the \KM\ curve (top left) is the same for all patients by definition,
we provide only 1 curve
-- here smoothed.
Note that
the set of curves for \AFT\ (with the Weibull distribution),
\CoxKP, and \CoxENKP\ 
each have roughly the same shape, and
do not cross,
due to the 
{proportional hazards assumption},
whereas the curves for \RSFKM\ and \MTLR\ 
can cross. 
}
\end{figure}

\comment{
\begin{figure} 
\includegraphics[width=0.5\textwidth]
{Figures/Cox-results.jpg}%
\includegraphics[width=0.5\textwidth] 
{Figures/aft.pdf}
\caption{\label{fig:Cox}
Survival curves for 4, of the 2402, patients in the NACD dataset,
using the  
[left] Cox Proportional Hazard and [right] Accelerated Failure Time.
Notice the curves in each sub-figure 
share the same basic shape,
and they do not cross.
}
\end{figure}}

\comment{
\begin{figure} 
\includegraphics[width=\textwidth] 
{Figures/15curves-NACD.png}
\caption{\label{fig:allCurves}
\note[HH]{Remove this figure}
\MTLR's survival curves for 15, of the 2402, patients in the NACD dataset.
Note different lines have very different shapes, and they can cross one another.
The dotted-grey curve is the Kaplan-Meier curve for this dataset.
See {\tt  http://pssp.srv.ualberta.ca/predictors/1407/examine\_percentiles}}
\end{figure}
}
Above,
we briefly mentioned three evaluation methods: Concordance, 1-Calibration, and Brier score.
We show below that we can use any of these methods to evaluate a \ISD\ model.
In addition, we can also use variants of ``L1-loss'', to see how far a predicted single-time differs from the true time of death; see Section~\ref{sec:L1-loss}.
Each of these 4 methods considers only a single time point of the distribution,
or an average of scores, each based on only a single time,
or a single statistic (such as its median value).
We also consider a novel evaluation measure, 
``D-Calibration'',
which uses the entire distribution of estimated survival probabilities
;
see 
Section~\ref{sec:d-calibration}.

\comment{
An \ISD\ resembles \KM\ in that it also produces survival curves.
However, as \MTLR\ is a ``survival {\em prediction}'' tool,
it produces a different curve for each patient, 
based on the characteristics of that patient.
Figure~\ref{fig:StomCan}(middle,right) earlier presents two such curves;
see Figure~\ref{fig:allCurves} for 15 such \MTLR\ curves, 
along with the single \KM\ curve for this entire population.
We can also compute simple point estimates
for each patient
-- 
\eg his median survival time --
then see whether these times are close 
to when the patient actually died.
Our earlier paper~\cite{PSSP-NIPS} compared
the quality of these point-estimates based on \KM,
to those based on \MTLR.
Here, we found that \MTLR's were 
significantly better,
which  is not surprising, given that \PSSP\ uses all of the information known about an individual, while \KM\ uses only the few features mentioned above.
\comment{
(but not the calibrated probability that a particular patient will die at any specific time).
This information can be used to estimate whether each individual feature (by itself) is important in understanding the survival time of an individual -- 
}
}

\subsection{Other Issues} 
\label{sec:OtherIssues}
The goal of many Survival {\em Analysis}\ tools is to identify relevant variables,
which is different from our challenge here,
of making a prediction about an individual.
Some researchers use \KM\ to test whether a variable is relevant --
\eg they partition the data into two subsets,
based on the value of that variable,
then run \KM\ on each subset,
and declare that variable to be relevant
if a log-rank test claims these two curves are significantly different~\cite{LogRankTest}.
It is also a common use of the basic Cox model --
in essence, by testing if the $\hat{\beta}_i$ coefficient associated 
with feature $x_i$ 
(in Equation~\ref{eqn:CoxPH}) 
is significantly different from
0~\cite{therneau2013modeling}.
(We will later use this approach 
to select features,
as a pre-processing step, before running the actual survival prediction model; see Section \ref{sec:DataSets}.)

\comment{\annote[RG]{(We will later use this approach to select features,
as a pre-processing step, before running the actual survival prediction model.)}
{Hmmm... this appears in Sec1 -- should it remain there, and also here?  Bret votes remove from section 1}}

Note this ``{\em g vs i}\,'' distinction is not always crisp, 
as it depends on how many variables are involved
-- \eg models that ``describe'' each instance using no variables (like \KM)
are clearly ``{\em g}\,'',
while models that use dozens or more variables,
enough to distinguish 
each  patient from one another, 
are clearly ``{\em i}\,''.
But models that involve 2 or 3 variables typically will 
place each patient into one of a small number of ``clusters'',
and then assign the same values to each member of a cluster.
By convention, 
we will catalog those models as ``{\em g}\,''
as the decision is 
not intended to be at an individual level.

The ``$\OneOne$'' vs ``$\infty$'' distinction can be blurry,
if considering a system that produces a small number $k>1$ 
of predictions for each individual --
\eg the Gail model provides a prediction of both 5 year and 25 year survival.
We consider this system as a pair of ``$\OneOne$''-predictors, as those two models are different.  
(Technically, we could view them as ``Gail[5year]'' versus ``Gail[25year]'' models.)

Finally, recall 
there are two types of frameworks that each return a single value for each instance:
the single value returned by the \SAc{R}{\OneAll}{i}-model \CPH\ 
is {\em atemporal}\ -- \ie applies to the overall model -- 
while each single value returned by 
the \SAc{P}{\OneOne}{i}-model Gail 
and the \SAc{R}{\OneOne}{g}-model PaP,
is for a specific time, $t^*$.
(Note there can also be 
\SAc{P}{\OneAll}{i}- and \SAc{R}{\OneAll}{g}-models that are atemporal.) 
\comment{
\note[RG]{Hmmm... there could be a single survival quantity -- 
which is the probability of the event ever happening 
-- eg with respect to loans: many people will NOT default. Or if there are competing risks? 
Humza votes no to bringing up this tangent since 
it's more difficult to come up with medical applications}}

\subsection{Relationship of Distributional 
Models to Other Survival Analysis Systems}
\label{sec:Relate}
We will use the term ``Distributional Model'' to refer to algorithms within the 
\SAc{P}{\infty}{g}\ 
and 
\SAc{P}{\infty}{i}\ frameworks 
-- \ie both \KM\ and \ISD\ models.
Note that such models can match the functionality
of the first 3 ``personalized'' approaches.
First, 
to emulate 
\SAc{P}{\OneOne}{i}, 
we just need to evaluate the 
distribution at the specified single time $\tzero$
-- \ie $\estCP{}{\tzero}{\inst{}}$.
So for Patient \#1  (from Figure~\ref{fig:StomCan}), for
$\tzero=\,$``48 months'',
this would be 20\%. 
Second, to emulate \SAc{R}{\OneOne}{i},
we can just use the negative of this value as the time-dependent risk score
-- so the 4-year risk for Patient \#1 would be -0.20.
Third, to deal with 
\SAc{R}{\OneAll}{i}, 
we need to reduce the 
distribution to a single real number, 
where larger values indicate shorter survival times.
A simple candidate is 
the individual distribution's median value, 
which is where the survival curve crosses 50\%.%
\footnote{
Another candidate is the mean value of the distribution,
which corresponds to the area under the survival curve;
see Theorem~\ref{thm:conditionalKM}.
}
So for Patient \#1 in Figure~\ref{fig:StomCan}, 
the median is 
$\medi{1}\ =\ $16 months.
We can then view (the negative of) this scalar as the risk score for that patient.
So for Patient \#1,
the ``risk'' would be $r(\inst{1})\ =\ -16$ .
\comment{
A simple candidate is the individual distribution's mean value 
(which corresponds to the area under the survival curve;
see Theorem~\ref{thm:conditionalKM}),
written as 
$
E[\,\estCP{}{\cdot}{\inst{i}}\,] = \mean{i}$.
We can then view (the negative of) this scalar as the risk score for a patient, \(\inst{i}\).
So for Patient \#1,
the ``risk'' would be $r(\inst{1})\ =\ -\mean{1}$ = -22.97 months.%
\footnote{
Another candidate is the median value of the distribution,
which is where the survival curve crosses 50\%.
So for Patient 1 in Figure~\ref{fig:StomCan}, 
the median is 16 months.
}
}
Fourth, to view the \ISD\ model in the \SAc{R}{\OneAll}{g}\ framework,
we need to place the patients into a small number of 
``relatively homogeneous'' bins.
Here, we could quantize the (predicted) mean value
-- \eg
mapping a patient to Bin\#1 if that mean is in [0,\,15),
Bin\#2 if in [15,\,27), and Bin\#3 if in [27,\,70].
(Here, this patient would be assigned to Bin\#2.)
Fifth, to view the \ISD\ model in the \SAc{R}{\OneOne}{g}\ framework,
associated with a time $\tzero$,
we could quantize the $\tzero$-probability 
-- \eg quantize the 
 $\estCP{}{\tzero = \hbox{48 months}}{\inst{}}$
into 4 bins corresponding to the intervals
[0,\,0.20),
[0.20,\,0.57), 
[0.57,\,0.83],
and
[0.83,\,1.0].

These simple arguments show that a distributional 
model can produce the scalars used by five other frameworks
\SAc{P}{\OneOne}{i},
\SAc{R}{\OneOne}{i},
\SAc{R}{\OneAll}{i},
\SAc{R}{\OneAll}{g},
and \SAc{R}{\OneOne}{g}.
Of course, a distributional model can also provide other information about the patient
-- 
not just the probability associated with one or two time points, 
but at essentially any time in the future,
as well as the mean/median value.
Another advantage of having such survival curves is {\em visualization}\,
(see Figure~\ref{fig:StomCan}): 
it allows the user (patient or clinician) to see the {\em shape}\ of the curve,
which provides more information than simply knowing the median, or the chance of surviving 5 years, etc.

There are some subtle issued related to producing meaningful survival curves
-- \eg many curves end at a non-zero value:
note the \KM\ curve in Figure~\ref{fig:AllCurves}(top left)
stops at (83, 0.12), rather than continue to intersect the x-axis at, perhaps (103, 0.0).
This is true for many of the curves produced by
the \ISD{}s.
Indeed, some of the curves do not even cross $y=0.5$,
which means the median time is not well-defined;
{\em cf.}\ 
the top orange line on the \AFT\ curve (top right),
which stops at (83, 0.65),
as well as 
many of the other curves throughout that figure.
This causes many problems, 
in both interpreting and evaluating \ISD\ models.
Appendix~\ref{app:SC-to-0} shows how we address this.

\comment{ However, as \KM\ curves were designed to compare different  
  subpopulations, each curve applies to a large group of individuals, characterized by only a few features -- \eg 
we could extend 
Figure~\ref{fig:StomCan}[left] to produce one graph that applies to everyone with stage-4 stomach cancer who was male, 
and another curve for everyone with stage-4 stomach cancer who was female.  
We could then run a log-rank test~\cite{LogRankTest} to determine whether (on average) one gender survives statistically longer than the other.  
Each such curve is based on only site and stage of the cancer, and also gender.
Of course, there are still many differences between individuals within each such sub-population, based on (say) age, histology, etc.
Note that these \KM\ curves do not incorporate such information,
as they were not designed to tell each specific patient about his/her survival distribution.
}

\comment{ MOVED ABOVE
The Cox Proportional Hazard (\CPH) model is also commonly used in survival analysis~\cite{cox1972regression}.
While this tool can incorporate all features about an individual, its goal is basically to determine a risk score for each individual,
to determine the relative survival times for the patients
(but not the calibrated probability that a particular patient will die at any specific time).
This information can be used to estimate whether each individual feature (by itself) is important in understanding the survival time of an individual -- 
\eg {\em does ``being male''
 increase the risk of dying from this specific cancer, or 
 does it protect against this outcome (or neither).}
 }

\section{Measures for Evaluating Survival Analysis/Prediction Models} 
\label{sec:Eval} 

The previous section mentioned 5 ways to evaluate a survival analysis/prediction model:
Concordance, 1-Calibration, Brier score, L1-loss, and D-Calibration.
This section will describe these -- quickly summarizing the first four (standard) evaluation measures
(and leaving the details, including discussion of censoring, for Appendix~\ref{app:Evaluation})
then providing a more thorough motivation and description of the fifth, D-Calibration.
The next section shows how the 6 distribution-learning 
models perform with respect to these evaluations.

For notation, we will assume models were trained on a training dataset, formed from the same triples as shown in Equation~\ref{eqn:SurData},
that is $D = D_U \cup D_C$ where 
$D_U\ =\ \{\, [\inst{j}, \death{j}, \delta_j = 1]\,\}_j $
is the set of {\em uncensored}\ instances (notice the event time, \(t_j\), here is written as 
$\death{j}$), and
$D_C\ =\ \{\, [\inst{k}, \censor{k}, \delta_k = 0]\, \}_k $
is the set of {\em censored}\ instances (\(t_k\), here is written as 
$\censor{k}$).
Note also that this training dataset $D$ is disjoint from the validation dataset, $V$. Since models are evaluated on \(V\) and we save discussion of censoring for 
Appendix~\ref{app:Evaluation},
we assume here that all of $V$ is uncensored
-- \ie \(V\ =\ V_U\ =\ \{ \,[\inst{j},\, \death{j},\, \delta_j = 1]\, \}_j
\ \approx\ \{ \,[\inst{j},\, \death{j}]\, \}_j\)
(to simplify notation).

\comment{
To illustrate the ideas, we will use examples based on the Northern Alberta Cancer Dataset (NACD),
which includes 2402 cancer patients with tumors at different sites.
About one third of these patients have censored survival times.
Table~\ref{tab:dataset} shows the groupings of cancer
patients in the dataset and the patient-specific attributes for learning survival distributions.%
\footnote{
This dataset includes the 128 stage-4 stomach cancer patients mentioned above.
}

\begin{table}
\vspace{-0.3cm}
\caption{ 
Left: number of cancer patients for each site and stage in the cancer registry dataset. 
Right: features used in learning survival distributions 
}
\label{tab:dataset}
\small
\begin{tabular}{|lcccc|}
\hline
site\textbackslash stage            &1  &2   &3   &4 \\ \hline
Bronchus \& Lung   &61 &44 &186  &390 \\
Colorectal      &15 &157 &233 &545 \\
Head and Neck   &6 &8 &14 &206 \\
Esophagus       &0  &1 &1  &63 \\
Pancreas        &1  &3 &0  &134 \\
Stomach         &0  &0 &1 &128 \\
Other Digestive &0  &1 &0 &77 \\
Misc            &1  &0 &3 &123 \\
\hline
\end{tabular}
\begin{tabular}{|ll|}
\hline
basic &age, sex, weight gain/loss, \\
      &BMI, cancer site, cancer stage  \\  \hline
general  &no appetite, nausea, sore mouth, \\
wellbeing   &taste funny, constipation, pain, \\
			&dental problem, dry mouth, vomit, \\
			&diarrhea, performance status \\ \hline
blood tests &granulocytes, LDH-serum, HGB, \\
           &lyphocytes platelet, WBC count,\\
           &calcium-serum, creatinine, albumin \\
\hline
\end{tabular}
\vspace{-0.3cm}
\end{table}
}


\subsection{Concordance} 
\label{sec:Concordance}

As noted above, each individual risk model 
\SAc{R}{\One}{-}\ 
(\ie \SAc{R}{\One}{i}\ or \SAc{R}{\One}{g},
where $\One$ can be either $\OneOne$ or $\OneAll$)
assigns to each individual $\inst{}$,
a ``risk score'' $r(\inst{}) \in \Re$, where $r(\inst{a}) > r(\inst{b})$ means the model is predicting that $\inst{a}$ will die before $\inst{b}$.
Concordance (a.k.a.~C-statistic, C-index) is commonly used to validate such risk models.
Specifically, Concordance considers each pair of patients, and asks whether the predictor’s values for 
those patients matches what actually happened to them.  
In particular, if the model gives $\inst{a}$ a higher score than $\inst{b}$, 
then the model gets 1 point if 
$\inst{a}$ dies before $\inst{b}$.
If instead $\inst{b}$ died before $\inst{a}$, the model gets 0 points for this pair. 
Concordance computes this for all pairs of {\em comparable}\ patients, and returns the average. 

When considering only uncensored patients, every pair is comparable, 
which means there are 
${ n \choose 2 } = \frac{n\cdot (n-1)}{2}$ pairs from
$n = |V_U|$ elements. 
Given these comparable pairs, Concordance is calculated as,

\comment{
To compute the set of compares, it is sufficient to enumerate all patients .. HH
Given a patient, \(\inst{a}\), all patients who died \textit{after} \(\inst{a}\) died are considered to be comparable pairs, see Figure~\ref{fig:Concordance}. 
\note[BH]{This is a bit confusing: \(\inst{a}\) would be compared to all patients that died before it as well -- perhaps this statement was meant to avoid double-counting of pairs?}

\begin{figure}[t] 
\centering
\includegraphics[width=0.4\textwidth] 
{Figures/ConcordanceUnc.png}
\caption{\label{fig:Concordance} Depiction of Concordance comparisons -- black dots indicate a death of a patient (all uncensored). Here we have assumed \(d_1 < d_2 < d_3 < d_4\). In total there will be \(\frac{4\cdot (4-1)}{2} = 6\) comparable pairs. Figure adapted from \cite{wang2017machine}.}
\end{figure}

Examining the set of comparable pairs for all uncensored patients, \(V_U\), 
consider the earliest patient to die. This patient will be comparable with all other patients except for themself, \ie \(|V_U| - 1\) patients. 
The second earliest patient to die is then comparable with \(|V_U| - 2\) and so on. 
Extrapolating this further, one can see
the number of comparable pairs will be
\(\frac{|V_U| \cdot (|V_U|-1)}{2}\) for a set of uncensored patients. Using this we can mathematically express Concordance: 

\begin{equation}
\reallywidehat{C}(\, V_U\,)\quad =\quad
\frac{1}{\left(\frac{|V_U|\ \cdot\ (|V_U|-1)}{2}\right)}\cdot
\left(\sum_{i \in V_U} \,\sum_{j:\, d_i < d_j} 
\Indic{r(\vec{x}_i) > r(\vec{x}_j)} \right),
\label{eqn:Concordance}
\end{equation}
}
\begin{equation}
\reallywidehat{C}(\, V_U,\, r(\cdot)\,)\quad =\quad \frac{1}
 {\frac{|V_U|\ \cdot\ (|V_U|-1)}{2}}
\cdot
\sum_{[\vec{x}_i, d_i] \in V_U} \,
\sum_{
[\vec{x}_j, d_j] \in V_U\,:\ d_i < d_j} 
\Indic{r(\vec{x}_i) > r(\vec{x}_j)} \ .
\label{eqn:Concordance}
\end{equation}
As an example, consider the table of death times $d_i$ and risk scores,
for 5 patients,
shown in Table~\ref{tab:Concord-ex}[left]. Table~\ref{tab:Concord-ex}[right] shows that these risk scores
are correct in 7 of the ${5 \choose 2} = 10$ pairs,
so the Concordance here is 7/10 = 0.7. 

\begin{table}[t]   \centering
\begin{tabular}{>{\columncolor{LightCyan}}c|c|c}
	\rowcolor{gray!50} Id & $d_i$ & Risk$_i$\\
\hline
1& 1 & 6\\
2& 3 & 3 \\
3& 4 & 5 \\
4& 6 & 2 \\
5& 9 & 4 \\
 \end{tabular}
 \qquad\qquad
 \begin{tabular}{>{\columncolor{LightCyan}}r|ccccc}
 	\rowcolor{gray!50}   &1 & 2 & 3 & 4 & 5\\
   \hline
 1 & \\
 2 & +\\
 3 & + & 0\\
 4 & + & + & +\\
 5 & + & 0 & + & 0\\
 \end{tabular}
 \caption{ \label{tab:Concord-ex}
 Simple example to illustrate Concordance (here, with only uncensored patients).
 Left: time of death, and risk score, for 5 patients.
 Right: ``+'' means the row-patient had a lower risk, and died after, the column-patient; otherwise ``0''.}
\end{table}

 This Concordance measure is very similar to 
the area under the receiver operating curve (AUC) and 
equivalent when \(d_i\) is constrained to
values \{0, 1\}~\cite{li2016multi}.
\comment{
\note[RG]{I don't think we need probabilistic interpretation ...
it is now in a comment...}
Given two patients, \(\inst{a}\) and \(\inst{b}\), 
Concordance is 
the probability that \(r(\inst{a}) > r(\inst{b})\) given \(\inst{a}\) dies before \(\inst{b}\) -- \ie 
\begin{equation}
\reallywidehat{C}(??)\quad =\quad \Pr\left[\ r(\inst{a}) > r(\inst{b}) \ |\  d_a < d_b\ \right].
\end{equation} \note[BH]{Is this equivalent to $P(d_a < d_b \,|\, r_a > r_b)$ ?}

From the probabilistic interpretation one should recognize the similarity between Concordance and 
the area under the receiver operating curve (AUC) 
from binary classification. 
In fact, if \(d_i\) is binary, then Concordance is exactly equivalent to AUC ~\cite{heller2016estimating}. 
}

This Concordance measure is relevant when the goal is to {\em rank}\ or \textit{discriminate} between patients
-- \eg when one wants to know who will live longer between a pair of patients. 
(For example, if we want to transplant an available liver to the patient 
who will die first -- this corresponds to ``urgency''.)
Concordance is the desired metric here due to it's interpretation, \textit{i.e.} given two \textit{randomly selected} patients, \(\inst{a}\) and \(\inst{b}\), if a model with Concordance of 0.9 
assigns a higher risk score to \(\inst{a}\) than \(\inst{b}\),
then there is a 90\% chance that \(\inst{a}\) will die before \(\inst{b}\).

While \SAc{R}{\OneAll}{i}\ models (such as \CPH) 
provide a risk score that is independent of time,
there are also  
 \SAc{R}{\infty}{i}\ models that 
 produces a risk score $r(\inst{}, t)$ 
 for an instance $\inst{}$ that depends on time $t$;
 such as Aalen's additive regression model \cite{aalen2008survival} or
 time-dependent Cox (td-Cox) \cite{fisher1999time},
 which uses time-dependent features. These models can be evaluated using {\em time-dependent Concordance}
(aka, ``time-dependent ROC curve analysis'') \cite{heagerty2005survival}.
%

\comment{
    While the standard Concordance measure will consider the actual time-of-death for each pair of patients,
    this measure will only consider whether one of the patients died before $\tzero$ and the other is still ``at risk'' at $\tzero$
    -- \ie either died after $\tzero$ or was censored after $\tzero$.
    (Appendix~\ref{app:t-Concordance} 
discusses the idea of using $\tzero$-Concordance to evaluate \SAc{R}{\infty}{-}\ models.)
}

\comment{
While \SAc{R}{\OneAll}{i}\ models (such as \CPH) 
provide a risk score that is independent of time,
the \SAc{R}{\OneOne}{i}\ models, such as tCox,\note[HH]{Are we talking about Time-Dependent Cox which takes into account time dependent features? Then evaluating this model at different time points? Wouldn't this be a \SAc{R}{\infty}{i}\ model? I believe for time-truncated concordance we would need a \SAc{R}{\infty}{i}\} model, \textit{i.e.} a model with risk scores that changes over time.}
provide a risk for a specific time $\tzero$;
here we can use a {\em time-truncated Concordance},
associated with this $\tzero$
-- called $\tzero$-Concordance~\cite{Gerds2013EstimatingAT,kamarudin2017time}.%
\footnote{Often called 
``time-dependent ROC curve analysis''.
}
While the standard Concordance measure will consider the actual time-of-death for each pair of patients,
this measure will only consider whether one of the patients died before $\tzero$ and the other is still ``at risk'' at $\tzero$
-- \ie either died after $\tzero$ or was censored after $\tzero$.
(Appendix~\ref{app:t-Concordance} 
discusses the idea of using $\tzero$-Concordance to evaluate \SAc{R}{\infty}{-}\ models.)
}

Finally, the \SAc{R}{-}{g}\ systems  compute a risk score,
but then bin these scores into a small set of intervals.
When computing Concordance,
they then only consider patients in different bins.
For example, if Bin1 = [0, 10] and Bin2 = [11, 20],
then this evaluation would only consider pairs of patients $(\inst{a}, \inst{b})$
where one is in Bin1 and the other is in Bin2 
-- \eg $r(\inst{a}) \in [0,\, 10]$ and $r(\inst{b}) \in [11,\, 20]$.
(Hence, it will not consider the pair $(\inst{c}, \ \inst{d})$ if both
$r(\inst{c}),\ r(\inst{d}) \in [11,\, 20]$.)

See Appendix~\ref{app:Concordance} for more details,
including a discussion of how this measure deals with censored instances and tied risk scores/death times.

\subsection{L1-loss} 
\label{sec:L1-loss}

Survival prediction is very similar to regression: 
given a description of a patient, predict a real number 
(his/her time of death).
With this similarity in mind, one can evaluate a survival model 
using the techniques used to evaluate regression tasks,
such as L1-loss -- 
the average absolute value of the difference between 
the true time of death, $d_i$,
and the predicted time $\reallywidehat{d}_i$: 
$\frac{1}{n}\sum_i | d_i  - \reallywidehat{d}_i|$. 
(We consider the L1-loss, rather than L2-loss which squares the differences,
as the distribution of survival times is often right skewed,
and L1-loss is less swayed by outliers than L2-loss.)

One challenge in applying this measure to
our \SAc{P}{\infty}{-}\ models
is identifying 
 the predicted time, $\reallywidehat{d}_i$.
Here, we will use the predicted median
survival time, that is $\reallywidehat{d}_i = \medi{i}$,
leading to the following measure:
\comment{
\note[RG]{fixed Eq 6, as 
earlier version suggested that $V_U$ is just a set of indices $j$,
and the "input" was a set of $\mu_i$s.}}
\begin{eqnarray}
L1(\ V_U,\ 
 \{\,
\estCP{}{\cdot}{\inst{i}} \, \}_i\,) &=& 
\frac{1}{|V_U|} 
\sum_{[\inst{i}, d_i] \in V_U} 
   \left|\death{i} - \medi{i}\right| .
    \label{eqn:L1Unc}
\end{eqnarray}

While we would like this value to be small,
we should not expect it to be 0:
if the distribution is meaningful, 
there should be a non-zero chance of dying at other times as well.
For example, while the  L1-loss is 0
for the 
Heaviside distribution at the time of death
(shown in green in Figure~\ref{fig:L1=0}),
this is unrealistic.
\comment{
Indeed, the only 
calibrated 
distribution 
with L1-loss being 0, is the Heaviside distribution, 
shown in green in Figure~\ref{fig:L1=0}.
\note[BH]{Here when we say "calibrated", do we mean "perfectly 1-calibrated at all timepoints for all possible bin-sizes"?  Should we be using the term "calibrated" before the next section?  If we meant D-calibrated or relax the "for all timepoints" or "all possible bin-sizes" restrictions, I think I can come up with alternative examples of curves that would give 0 L1-loss and perfect calibration}
\comment{Example for D-calibration: Suppose you have 100 (all uncensored) patients with \(t_1 > t_2 > ... > t_{100}\).  Let \(S_1\) be a heaviside dist with the drop at \(t_1\).  Let \(S_2\) be a line from (0,1) to (\(t_2\), 0.99) connected to a line from (\(t_2\), 0.99) to (\(t_{2b}\),0) where \(t_{2b}\) is found such that the mean of \(S_2\) is \(t_2\).  Similarly \(S_3\) passes through (\(t_3\), 0.98), \(S_4\) passes through (\(t_4\), 0.97), ..., \(S_{100}\) passes through (\(t_{100}\), 0.01).  This construction has 0 L1-loss (by the selection of each \(t_{ib}\)) and has perfect D-calib due to 10 patients being in [0.9,1], 10 in [0.8,0.9), ..., 10 in [0,0.1).}
}

\begin{figure}\centering
\includegraphics[width=0.7\textwidth,
height=1.0in]{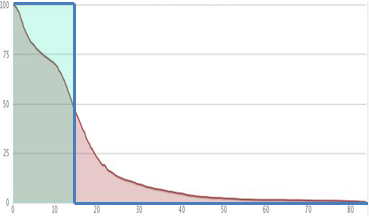}
\caption{
Example of a 
survival curve (in red), 
superimposed (in green) with a degenerate curve 
that puts all of its weight on a single time point 
(which means it assigns 100\% chance of dying at exactly this time).
}
\label{fig:L1=0}
\end{figure}

Appendix~\ref{app:L1-loss} discusses many issues with the L1-loss measure,
related to censored data,
and reasons to consider using the $\log$ of survival time.

\subsection{1-Calibration} 
\label{sec:1-Calib}
\comment{
\note[RG]{Should we re-order so 1-Calibration is later -- and use that to segue to D-calibration?
I.e., make this 3.4, then D-calib 3.5
... I moved L1-loss earlier...}
\note[HH]{I think we could move L1 up but I think it is beneficial to having 1-Calibration before Brier score -- otherwise we have to introduce the concept of calibration within the Brier score section. Doing this would require new motivation for the Brier score-- right now we are using the fact that p-values from 1-Cal. can't be ranked to motivate the use of Brier score.}
}
The \SAc{P}{\OneOne}{i}\ tools estimate the survival probability $\estCP{}{\tzero}{\inst{}} \in [0,1]$
for each instance $\inst{}$, at a single time point $\tzero$.
For example, the PredictDepression system~\cite{wang2014prediction}
predicts the chance that a patient
will have a major depression episode within the next 4 years,
based on their current characteristics
-- \ie this tool produces a single probability value 
$
\hat{S}(\,4\text{yr}\,|\,\inst{i}\,)\ \in [0,1]$ for each patient described
as $\inst{i}$.
We can use 1-Calibration to measure the effectiveness of such predictors.
To help explain this measure,
consider the ``weatherman task'' of predicting, on day $t$, 
whether it will rain on day $t+1$.
Given the uncertainty, forecasters provide probabilities.
Imagine, for example, there were 10 times that the weatherman, Mr.W, predicted 
that there was a 30\% chance that it would rain tomorrow.
Here, if Mr.W was calibrated, we expect that it would rain 
3 of these 10 times -- \ie 30\%.
Similarly, of the 20 times Mr.W claims that there is an 80\% chance of rain tomorrow,
we expect rain to occur 16 = 20 $\times$ 0.8  of the 20 times.

Here, we have described a 
binary
 probabilistic prediction problem
-- \ie predicting the chance that it will rain the next day. 
One of the most common calibration measures for such binary prediction problems is 
the Hosmer-Lemeshow goodness-of-fit test~\cite{hosmer1980goodness}. 
First, we sort the predicted probabilities 
for this time $\tzero$ for all patients $\{\,\estCP{}{\tzero}{\inst{i}}\,\}_i$
and 
group them into a number ($B$) of ``bins''; 
commonly into deciles, \ie \(B = 10\) bins. 
\comment{
(So if there are 200 patients, renumber the patients to match their
$\estCP{}{\tzero}{\inst{i}}$ values -- hence, 
$\estCP{}{\tzero}{\inst{1}} \geq \estCP{}{\tzero}{\inst{2}} \geq \cdots
\geq \estCP{}{\tzero}{\inst{200}}$.
Here, the first bin would include the 20 patients with the largest
$\estCP{}{\tzero}{\inst{i}}$ values 
-- hence $\{\inst{1}, \dots, \inst{20}\}$ --
and the second bin, the patients with the next highest set of values
(hence $\{\inst{21}, \dots, \inst{40}\}$), and so forth,
for all 10 bins.
Next, {\em within each bin},
we calculate the expected number of events $\bar{p_j}$ --
here $\bar{p_1} = \frac{1}{20} \sum_{i=1}^{20} (1- \estCP{}{\tzero}{\inst{i}})$,
then $\bar{p_2} = \frac{1}{20} \sum_{i=21}^{40}(1-\estCP{}{\tzero}{\inst{i}})$,
etc.
}
Suppose there are 200 patients;
the first bin would include the 20 patients with the largest
$\estCP{}{\tzero}{\inst{i}}$ values,
the second bin would contain the patients with the next highest set of values, and so on,
for all 10 bins.
Next, {\em within each bin},
we calculate the expected number of events, $\bar{p_j} = \frac{1}{|B_j|} \sum_{\inst{i} \in B_j} (1- \estCP{}{\tzero}{\inst{i}})$.
We also let
$n_j = |B_j|$ be the size of the $j^{th}$ bin (here, $n_1 = n_2 = \dots = n_{10} = 200/10 = 20$),
and 
$O_j$ be the number of patients (in the $j^{th}$ bin) who died before \(t^*\).
Recalling that \(d_i\) denotes Patient \#i's time of death and 
letting  
\(o_i\ =\ \Indic{d_i \leq \tzero}\)
denote the event status of the \(i^{th}\) patient at $\tzero$:
for the \(j\)th bin, \(B_j\), we have \(O_j = \sum_{\inst{i} \in B_j} o_i \).
Figure~\ref{fig:1-calib} graphs the 10 values of 
observed $O_j$ and expected $n_j\,\bar{p_j}$ 
for the deciles, for two different tests
(corresponding to two different  \ISD-models,
on the same dataset and $\tzero$ time). Additionally, see Appendix~\ref{app:1-Calib} for an example walking through 1-Calibration.

\begin{figure}[tb]
\includegraphics[width = \textwidth]{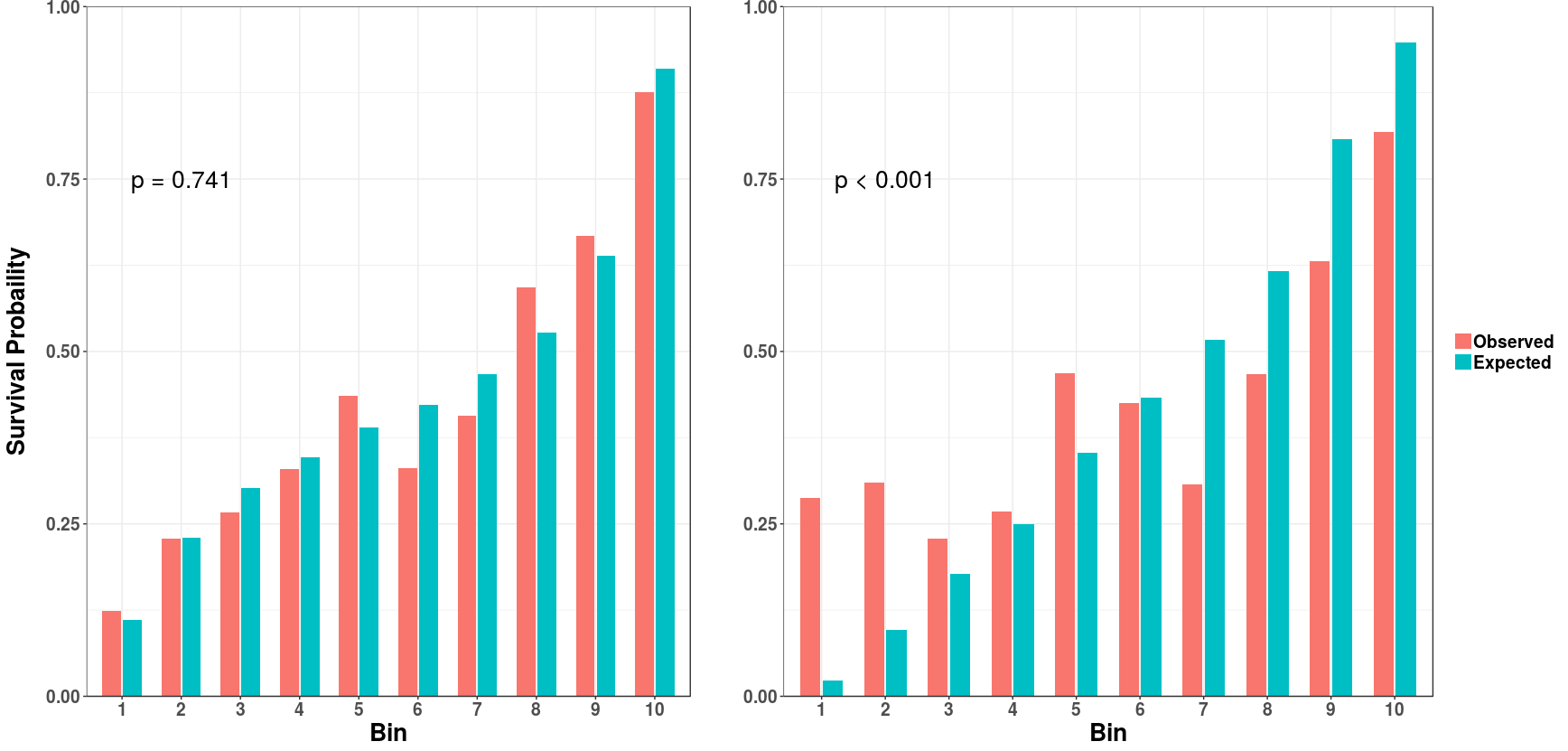}
\caption{ \label{fig:1-calib}
The bin observed and expected probabilities associated with two 1-Calibration computations, for the \MTLR\ [left] model and the \RSFKM\ model applied to the GBM dataset for the 50th percentile of time (369.5 days).}
\end{figure}

For each test, we can then compute 
the Hosmer-Lemeshow test statistic: 
\begin{align}
\label{eqn:hlstat}
\reallywidehat{HL}(\,V_U,\, \estCP{}{\tzero}{\cdot}\,\,)\quad =\quad \sum_{j=1}^B 
\frac{(O_j - n_j\bar{p}_j)^2}{n_j\,\bar{p_j}\,(1-\bar{p_j})},
\end{align}
If the model is 1-Calibrated, then
this statistic follows a \(\chi^2_{B-2}\) distribution,
which then can be used to find a \(p\)-value.%
%
\comment{ this appears above ...
Translating this to survival analysis, we choose a specific time point, \(\tzero\), 
and set the predicted probability 
\(p_i \ =\ 1 - \estCP{}{\tzero}{\inst{i}}
\).
Recalling that we denote the patient's time of death as \(d_i\) and 
letting  
\( o_i = \Indic{d_i \leq \tzero}\) denote the event status of the \(i^{th}\) patient:
for the \(j\)th bin, \(B_j\), we have \(O_j = \sum_{i \in B_j} o_i \)
}
For a given time \(\tzero\), finding \(p < 0.05\)
suggests the survival model is \underline{not} well calibrated at \(\tzero\)
-- \ie the predicted probabilities of survival at \(\tzero\) may not be representative of patient's true survival probability at \(\tzero\).
\comment{ For interpretive purposes,
given a time \(\tzero\), and a \(p\)-value such that \(p < 0.05\), 
this test would suggest the survival model is 
{\em not}\ 
well calibrated at \(\tzero\), 
that is, the predicted probabilities of survival at \(\tzero\) may not be representative of patient's true survival probability at \(\tzero\).}

Returning to Figure~\ref{fig:1-calib}, the HL statistics are 5.99 and 321.44, for the left and right,
leading to the $p$-values $p=$0.741 and $p < \,$0.001 
-- meaning the left one passes but the right one does not.
(This is not surprising, given that
each pair of bars on the left are roughly the same height, 
while the pairs of the right are not.)


Note that a \SAc{P}{\infty}{i} model,
which gives probabilities for multiple time points,
may be calibrated at one time \(t_1\),
but not be calibrated at another time $t_2$,
since
\(O_j\), and \(\bar{p_j}\) are dependent on the chosen time point. 
This issue motivated us to define a notion of calibration across
a distribution of time points,
D-Calibration, in Section~\ref{sec:d-calibration}. 
Appendix~\ref{app:1-Calib} provides further details about 1-Calibration
including ways to handle censored patients.

\subsection{Brier Score} 
\label{sec:BrierScore}
We often want a model to be both 
{discriminative (high Concordance) and 
calibrated (passes the 1-Calibration test).}
While one can rank Concordance scores to compare two models' discriminative abilities, 1-Calibration cannot rank models besides suggesting one model is calibrated (\(p \geq 0.05)\) and one is not \((p < 0.05)\)
(as 
\(p\)-values are not intended to be ranked). 
The Brier score~\cite{brier}
is a commonly used metric that
measures both calibration and discrimination;
see Appendix~\ref{app:BS-decomp}.
Mathematically, the Brier score is the mean squared error between the 
\{0, 1\} event status at time \(\tzero\) and 
the predicted survival probability at \(\tzero\). 
Given a fully uncensored validation set \(V_U\), 
the Brier score, at time $\tzero$, is 
\begin{align}
\label{eqn:brier}
BS_{\tzero}\left(\, V_U, \ 
\estCP{}{\tzero}{\cdot} 
\,\right)
\quad=\quad \frac{1}{|V_U|} 
\sum_{[\inst{i}, d_i] \in V_U} 
\left(\ \Indic{d_i \leq \tzero}\ -\
\estCP{}{\tzero}{\inst{i}}\ \right)^2.
\end{align}
Here, 
a perfect model (that only predicts 1s and 0s as survival probabilities and is correct in every case) 
will get the perfect score of 0,
whereas a reference model 
that gives 
$\estCP{}{\tzero}{\cdot}  = 0.5$
for all patients
will get a score of 0.25. 

\def\IBS{\hbox{IBS}}

An extension of the Brier score to an interval of time points is the \textit{Integrated} Brier score%
, which will give an average Brier score across a time interval,%
\begin{equation}
\IBS(\, \tau
,\,  
V_U,\,
\estCP{}{\cdot}{\cdot}\,  
)\quad =\quad \frac{1}{\tau
}\int_0^{\tau} 
 BS_t\left(\,V_U,\,
 \estCP{}{t}{\cdot}\, \right) dt\ .
\label{eqn:IBS}    
\end{equation}
We will use this measure for our analysis,
where $\tau$ 
is the maximum event time of the combined training and validation datasets -- 
this way,
the interval evaluated is equivalent across cross-validation folds.

As noted above, 
the Brier score measures both calibration and discrimination,
implying it should be used when 
seeking a model that 
must 
perform well on 
both calibration and discrimination,
or when one is investigating the overall performance of survival models. 
Appendix~\ref{app:brier} shows how to incorporate censoring into the Brier score, and discusses the decomposition of the Brier score into calibration and discriminative components.

\subsection{D-Calibration} 
\label{sec:d-calibration}

The previous sections summarized several 
common
ways to evaluate 
standard survival prediction models, 
that produce only a single value 
for each patient
-- \eg the patient's risk score, 
perhaps with respect to a single time,
or the mean survival time.
(Each is a
\SAc{-}{\One}{-}\ model.)
However, the \SAc{P}{\infty}{-}\ tools produce a distribution
-- \ie each is a function that maps $[0, \infty]$ to $[0,1]$  (with some constraints of course),
such as the ones shown in Figure~\ref{fig:AllCurves};
see Footnote~\ref{ftnote:SC}.
It would be useful to have a measure that examines
the entire distribution as a distribution.%
\footnote{While the Integrated Brier score does consider 
all the points across the distribution, 
it 
simply views that distribution as a set of $(x,y)$ points;
see 
Appendix~\ref{app:IBSWeakness} for further explanation.
}

\comment{
Our goal is a tool that can help patients accurately assess their remaining time, 
based on all information available about that patient, acquired at the first meeting (\eg at the time of diagnosis). 
While we cannot know the actual time, we can nevertheless produce a distribution, such as the ones shown in Figures~\ref{fig:StomCan},~\ref{fig:allCurves} and~\ref{fig:Cox}.
(See the explanation of survival curves in Footnote~\ref{ftnote:SC} above.)
}
 \def\estPinv#1#2{\hat{p}^{-1}_{#1}(\, #2\,)}
 \def\DataInt#1#2#3{#1_#2(\, #3\,)} 

A distributional calibration (D-Calibration)~\cite{andres2018novel} measure addresses the critical question:
\begin{equation}
\hbox{\small\em Should the patient believe the predictions implied by the survival curve? }
\label{eqn:BelievePSSP}
\end{equation}
First, consider population-based models \SAc{P}{\infty}{g}, like Kaplan-Meier curves --
\eg Figure~\ref{fig:StomCan}[left], 
for patients with stage-4 stomach cancer.
If a patient has stage-4 stomach cancer, 
should s/he believe 
that his/her median survival time is 11~months,
and that s/he has a 75\% chance of surviving more than 4~months?
To test this,
we could take 1000 new patients (with stage-4 stomach cancer)
and ask whether $\approx$500 of these patients 
lived at least 11 months, and if $\approx$750 lived more than 4 months.

\comment{ Don't need this!
In general, let $\estPinv{}{\cdot}$ be the inverse of the survival function $\estP{}{\cdot}$
-- \ie for any probability $\rho \in [0,1]$,
$\estPinv{}{\rho} = \tau$ iff $\estP{}{\tau} = \rho$.
Hence $\estPinv{}{0.5}$ is the median survival time, which is 11months for Figure~\ref{fig:StomCan}[left].

For any time $\tau$, let $D_{\tau}$ be the instances who died before $\tau$.
We say our \SAc{P}{\infty}{g}\ model $\Theta$ is good (D-Calibrated)
if, for every time $\tau$,
$\estP{\Theta}{\tau}$ corresponds to the ratio of $D_{\tau}$ to $D$
-- \eg as $\estP{\Theta}{11months} = 0.5$,
$D_{11month}$ should be 1/2 of $D$ and 
$D_{4month}$ should contain $\estP{\Theta}{4months} = 0.75$
of $D$.
That is 
\begin{equation}
\frac{|D_{\tau}|}{|D|}\quad=\quad \estP{\Theta}{\tau}
\qquad \hbox{that is}\qquad
\frac{| D_{\estPinv{\Theta}{\rho}} |}{|D|}\quad=\quad \rho
\end{equation}
}

For notation, given a dataset, $D$, and \SAc{P}{\infty}{g}-model $\Theta$,
and any interval $[a,b] \subset [0,1]$,
let 
\begin{equation}
\DataInt{D}{\Theta}{[a,b]} \quad=\quad
 \{\, [\inst{i},\death{i},\delta = 1] \in D \ |\ \estP{\Theta}{\death{i}} \in [a,b]
 \,\}
 \label{eqn:g-DataInt}
\end{equation}
 be the subset of (uncensored) patients in $D$ whose time of death
 is assigned a probability (by $\Theta$) 
 in the interval $[a,b]$.
 For example, $\DataInt{D}{\Theta}{[0.5,\,1.0]}$ is the subset of patients 
who lived at least the median survival time
(using $\estP{\Theta}{\cdot}$'s median),
and 
$\DataInt{D}{\Theta}{[0.25,\,1.0]}$
is the subset who died after the 25th percentile of $\estP{\Theta}{\cdot}$.
By the argument above, we expect 
$\DataInt{D}{\Theta}{[0,\,0.5]}$ to contain about 1/2 of $D$,
and $\DataInt{D}{\Theta}{[0.25,\,1.0]}$ to contain about 3/4 of $D$.
Indeed, for any interval $[a,\,1.0]$,
we expect 
\begin{equation}
\frac{|\DataInt{D}{\Theta}{[a,\,1.0]}|}{|D|}\quad =\quad 1-a
\end{equation}
or in general
\begin{equation}
\frac{|\DataInt{D}{\Theta}{[a,\,b]}|}{|D|}\quad =\quad b-a
\end{equation}

This leads to the idea of a survival distribution 
 \SAc{P}{\infty}{g} model, $\Theta$,
being D-Calibrated:
For each uncensored patient $\inst{i}$,
we can observe when s/he died $\death{i}$,
and also determine the percentile for that time, based on $\Theta$:
 $\estP{\Theta}{\death{i}}$. 
If $\Theta$ is D-Calibrated,
we expect roughly 10\% of the patients to die in the [90\%, 100\%] interval
-- \ie $\frac{|\DataInt{D}{\Theta}{[0.9,\,1.0]}|}{|D|}\ \approx\
 1 - 0.9 = 0.1$
--
and another 10\% to die in the [80\%, 90\%) interval, and so forth for each of the  10 different 10\%-intervals. 
More precisely, 
the set $\{\,\estP{i}{\death{i}}\,\}$ over all of the patients
should be distributed uniformly on $[0,1]$,
which means
that each of the 10 bins would contain 10\% of \(D\).

This suggests a measure to evaluate a 
distributional model: 
see how close each of these 10 bins is to the expected 10\%.
We therefore use Pearson's \(\chi^2\) test:
compute the $\chi^2$-statistic with respect to the ten 10\% intervals,
and ask whether the bins appear uniform, at (say) the $p>\,$0.05 level. 
Theorem~\ref{thm:DCalGOF} (in Appendix~\ref{app:D-Calib})
states and proves the appropriateness of the 
{Pearson's $\chi^2$} {goodness-of-fit}
test.
\comment{
The proof detailing the appropriateness of the
{Pearson's $\chi^2$} {goodness-of-fit}
test is given by Theorem \ref{thm:DCalGOF}
in Appendix~\ref{app:D-Calib}.
}

This addresses the question posed at the start of this subsection
(Equation
~\ref{eqn:BelievePSSP}):
\begin{eqnarray*}
&&\hbox{Yes, 
a patient should believe the prediction from the survival curve}\\
&&\hbox{whenever this goodness-of-fit test reports $p>0.05$.}
\end{eqnarray*}

\subsubsection{Dealing with {\em Individual} Survival Distributions, \ISD} 
Everything above was for a {\em population}-based distributional model \SAc{P}{\infty}{g}.
These specific results do not apply to {\em individual}\ survival distributions
\SAc{P}{\infty}{i}:
For example, consider a single patient,  Patient \#1,
whose curve is shown in Figure~\ref{fig:StomCan}[middle].
Should he believe this plot,
which implies that his median survival time is 18~months,
and that he has a 75\% chance of surviving more than 13~months?

If we could observe 1000 patients exactly identical to this Patient \#1, 
we could verify this claim by seeing their actual survival times:
this survival curve is meaningful if its predictions matched the outcomes of those  
copies
-- \eg if around 250 died in the first 13 months,
another $\approx$250 in months 13 to 18, etc.

Unfortunately, however, we do not have 1000 ``copies'' of Patient \#1.
But here we do have many 
other patients, 
each with his/her own characteristic survival curve, 
including the 4 curves shown in Figure~\ref{fig:4subjects}.
Notice each patient has his/her own distribution, and hence his/her own quartiles -- 
\eg the predicted median survival times 
for Patient A\ (resp., B, C and D),
are 28.6 (resp., 65.7, 11.4, and 13.9) months;
see Table~\ref{tab:4patients}.
For these historical patients, we know the actual event time for each.%
\footnote{
Here we just consider uncensored patients;
Appendix~\ref{app:D-Calib} extends this to deal with censoring.}  
Here, if our predictor is working correctly, 
we would expect that 2 of these 4 would pass away before respective median times, 
and the other 2 after their median times.
Indeed, we would actually expect 1 to die in each of the 4 quartiles;
the blue vertical lines (the actual times of death) 
show that, in fact, this does happen.
See also Table~\ref{tab:4patients}.

\begin{figure} 
\includegraphics[width=\textwidth]{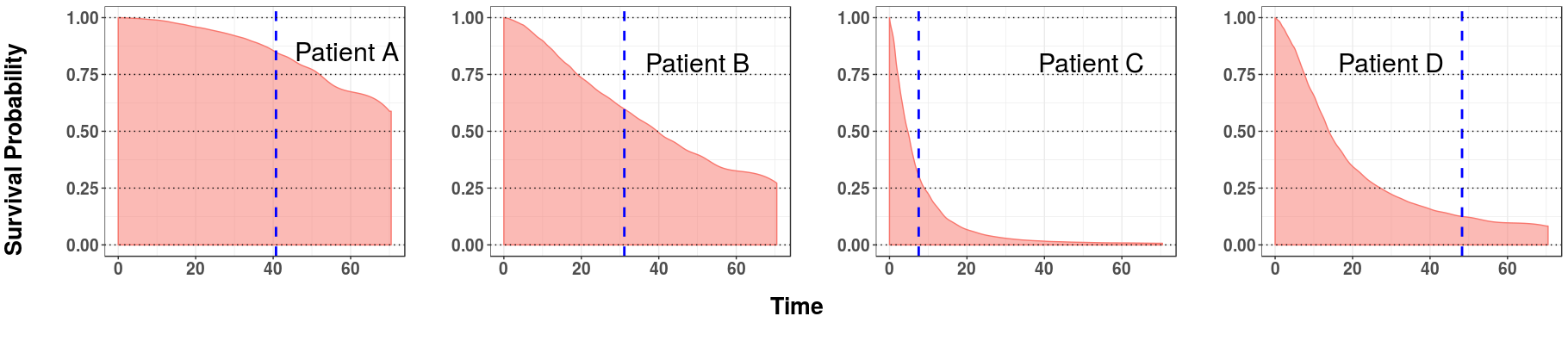}%
\caption{\label{fig:4subjects}
Four patients from the complete NACD dataset. 
Notice each died in a different quartile (shown with a vertical dashed line);
see Table~\ref{tab:4patients}.}
\end{figure}

\comment{
\begin{figure} 
\includegraphics[width=0.25\textwidth,height=1.1in]{Figures/s1069.png}\hspace*{-0.00in}%
\includegraphics[width=0.25\textwidth,height=1.1in]{Figures/s347.png}\hspace*{-0.05in}%
\includegraphics[width=0.25\textwidth,height=1.1in]{Figures/s2050.png}\hspace*{-0.05in}%
\includegraphics[width=0.25\textwidth,height=1.1in]{Figures/s1962.png}
\caption{\label{fig:4subjects}
\note[HH]{We should make these numbers not so random... Change to 1,2,3,4. Its probably worth remaking these graphs to match style of Figure~\ref{fig:AllCurves}}
Four patients from the NACD dataset --  
from left to right: \#1069, \#347, \#2050, \#1962.
Notice each died in a different quartile; see Table~\ref{tab:4patients}.}
\end{figure}
}
\begin{table} 
\centering
\caption{Description of 4 patients from the NACD Dataset. 
(See also Figure~\ref{fig:4subjects})}
\label{tab:4patients}
\begin{tabular}{ >{\columncolor{LightCyan}}c|cccc}
\rowcolor{gray!50} Patient ID&	Median Survival Time& Event time&	Event Percentage&	Quartile\\
\hline
A & 85.5	&\ 43.4 &	84.7	&\#1\\
B&  39.6	&\  31.1 &	59.8&	\#2\\
C& 4.7	&\ 7.5 & 30.4&	\#3\\
D&  13.9	&\ 48.3& 	12.8&	\#4\\
\end{tabular}
\end{table}

With a slight extension to the earlier notation (Equation~\ref{eqn:g-DataInt}),
for a dataset $D$ and \SAc{P}{\infty}{i}-model $\Theta$,
and any interval $[a,b] \subset [0,1]$,
let 
\begin{equation}
\DataInt{D}{\Theta}{[a,b]} \quad=\quad
 \{\ [\inst{i},\,\death{i},\,\delta = 1] \in D \ |\ \estCP{\Theta}{\death{i}}{\inst{i}} \in [a,b]
 \ \}
\end{equation}
 be the subset of (uncensored) patients in the dataset $D$ whose time of death
 is assigned a probability (based on its individual distribution, computed by $\Theta$) 
 in the interval $[a,b]$.
 
As above, we could put these $\estCP{\Theta}{\death{i}}{\inst{i}}$
into ``10\%-bins'', and then ask if each bin holds about 10\% of the patients.
The right-side of Figure~\ref{fig:CalibrationHistogram}
plots that information, for the 
\ISD\ $\Theta$ 
learned by \MTLR\ from the NACD dataset 
(described in Section~\ref{sec:DataSets}),
as a sideways histogram.
\comment{(The left-side shows patient\#27's predicted survival curve,
and her time of death, at 12.7 months;
as $\estCP{\Theta}{d_{27}}{\inst{27}}\ =\ $39.4\%,
this patient contributed to the [30, 40) bin.)}
We see that each of these intervals is very close to 10\%. 
In fact, the 
$\chi^2$
goodness-of-fit test yields $p = \,$0.433,
which suggests that this \ISD\ 
is sufficiently uniform
that 
we can believe that these survival curves are D-calibrated.

\begin{figure} 
\includegraphics[width=\textwidth,height=2.05in]{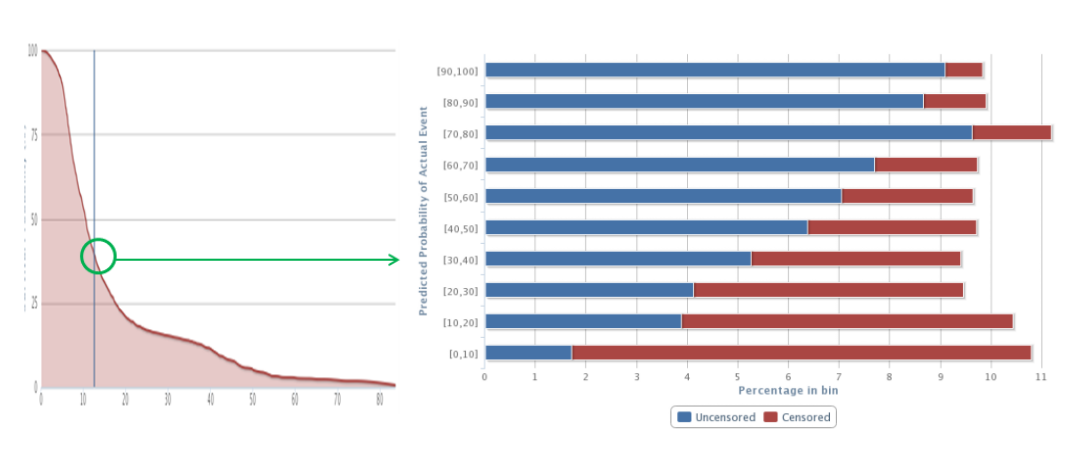}
\caption{\label{fig:CalibrationHistogram}
The right side shows the ``calibration histogram'' associated with the NACD dataset.
The left portion shows the survival curve for a patient $\inst{27}$
-- here we see that this patient's event $d_{27}=$12.7~months,
corresponds to 
$\estCP{
}{d_{27}}{\inst{27}}\ =\ $39.4\%,
which means the patient
contributed to the [30, 40) bin.
In a completely D-calibrated model, 
each of these horizontal bars would be 10\%;
here, we see that each of the 10 bars is fairly close.
See also Figure~\ref{fig:DCal-Hist}.
}
\end{figure}

\comment{

Now imagine we had the ``true'' survival curve for each patient -- call it $p_i(\cdot)$ for patient\#i --
and that this patient died at time $e_i$.
It is easy to show that these true-\ISD\ curves are calibrated:
that is, the set $\{\,p_i(e_i)\,\}$ over all of the patients is distributed uniformly on $[0,1]$.
This would mean, in particular, 
that each of the 20 bins shown in Figure~\ref{fig:CalibrationHistogram}
would have 5\%.
}

Note that Figure~\ref{fig:CalibrationHistogram} is actually showing  5-fold cross-validation results: 
the survival 
curve for each patient was computed based on the model learned from the other 4/5 of the data, which is then applied to this patient~\cite{WekaBook}.
Also, the rust-colored intervals correspond to the censored patients;
see Appendix~\ref{app:D-Calib} for an explanation.

\comment{The next subsections  
discuss how well \PSSP\ works on other survival datasets (in terms of D-Calibration),
then 
whether other survival analysis/prediction models are calibrated 
on these various survival datasets.
}

\subsubsection{Relating D-Calibration to 1-Calibration} 
\label{sec:D-Cal+1-Cal}
This standard notion of 1-Calibration is very similar to D-Calibration, as both involve binning probability values and applying a goodness-of-fit test.
However, 1-Calibration involves a single prediction time -- 
here $\estCP{}{\tzero}{\inst{i}}$, 
which is 
{the}
probability that the patient $\inst{i}$ will survive at least to the specified time, \(\tzero\). 
Patients are then sorted by these probabilities,
partitioned into equal-size bins, and assessed as to
whether the observed survival rates for each bin match the predicted rates
using a Hosmer-Lemeshow test.
By contrast, D-Calibration considers the entire curve,
$\estCP{}{t }{\inst{i}}$ over all times $t$
-- producing curves like the ones shown in Figures~\ref{fig:StomCan},~\ref{fig:AllCurves}, 
and~\ref{fig:4subjects}.
Each curve corresponds to a patient, who has an associated time of death, $\death{i}$.
Here, we are considering the model's (estimated) probability of the patient's survival at his/her time of death, given by 
$\estCP{i}{d_i}{\inst{i}}$. 
These patients 
are then placed into \(B = 10\)
bins,%
\footnote{Note the number of bins does not have to be 10 -- 
we chose 10 to match the typical value chosen for the 
1-Calibration 
test.}
based on the values of their associated probabilities, 
$\estCP{i}{d_i}{\inst{i}}$.
Here the goodness-of-fit test measures whether the resulting bins are approximately equal-sized, as would be expected if \(\Theta\) accurately estimated the true survival curves (argued further in Appendix~\ref{app:D-Calib}). 
\comment{removed text ", then $\estCP{i}{d_i}{\inst{i}}$ will follow a uniform distribution  allowing the goodness-of-fit test to be used where the proportion of patients each bin will be equal to \(\frac{1}{B}\)."}

Note D-Calibration tests the proportion of instances in bins across the entire \([0,1]\) interval,
but this is not required for the ``single probability'' 1-Calibration.
For example, 
the single probability estimates
for the \RSFKM\ curve in Figure~\ref{fig:TypesSA}, at time 20, 
range only from  0.05 to 0.62.
That is, the distribution calibration 
$\{ \, \estCP{i}{\death{i}}{\inst{i}}\, \}$ 
should match the uniform distribution over [0,1],
while the single probability calibration 
 $\{ \, \estCP{i}{\tzero}{\inst{i}}\, \}$
is instead expected to match the empirical percentage of deaths.

\comment{
Also, some $[P,1,i]$ models 
actually involve several models, one for each time (\eg the Gail model has both 5 and 25 year predictions).
One challenge is here how to aggregate the results -- 
\eg if comparing two tools, what if one tool is better the 5-years, but the other is better for 25 years.

Also, one should note that while single time probability models \textit{can} be used to estimate multiple time points, 
\eg 1 year, 5 years, 10 years, there is no clear way of aggregating the results. \note[BH]{I don't understand this statement} 
}

Table~\ref{tab:1calDcal} summarizes the differences between D-Calibration and 1-Calibration.%
\footnote{Further differences occur when considering how censored patients are handled; see Appendices~\ref{app:1-Calib} and~\ref{app:D-Calib}.}
To see that they are different, 
Proposition~\ref{Lma:1Cal.not.DCal}, 
in Appendix~\ref{app:D-Calib},
gives a simple example of a model that is 
perfectly D-Calibrated but clearly not 1-Calibrated, 
and another example that is perfectly 1-Calibrated but clearly not D-Calibrated.
In addition, we will see below several examples of this
-- \eg \CoxENKP\ is D-Calibrated for the GLI dataset,
but it is not 1-Calibrated at any of the 5 time points considered,
and 
\AFT\ is 1-Calibrated for the 50th and 75th percentiles of GBM but 
is not D-Calibrated.

\begin{table}
\caption{\label{tab:1calDcal}
Summary of differences 
between 1-Calibration and D-Calibration.}
\begin{tabular}{ | >{\columncolor{LightCyan}}l || c|c |} 
\hline
\rowcolor{gray!20}  & \quad 1-Calibration  &\quad  D-Calibration \\
\hline\hline
Objective & Evaluate Single Time Probabilities
    & Evaluate Entire Survival Curve \\ 
    \hline 
Values considered & $\{\ \hat{p}(\,\tzero\, |\, \inst{i})\ \}$
    & $\{ \ \hat{p}(\,d_i\, |\, \inst{i})\ \}$\\ 
    \hline
Should match & Empirical number of deaths & Uniform \\ 
  \cline{1-1} \hline
Statistical Test & Hosmer-Lemeshow test & Pearson's $\chi^2$ test \\ 
\hline
\end{tabular}
\end{table}

\section{Evaluating \ISD\ Models} 
\label{sec:EvalISD}
Sections~\ref{sec:KM-model} and~\ref{sec:ISD}
listed several distributional models
(\KM, and the \ISD{}s: \CoxKP, \CoxENKP, \AFT, \MTLR, and \RSFKM), and
Section~\ref{sec:Eval} provided 5 different evaluation measures:
Concordance,  L1-loss, 1-Calibration, Integrated Brier score, 
and D-Calibration.
This section provides an empirical comparison of these 6 models,
with respect to all 5 of these measures, over 8 datasets.

Of course, these 6 models do not include all possible 
survival models;
they instead serve as a sample of the types of models available. 
The \KM, \CoxKP, and \AFT\ model are all very common
--
these are
standard approaches used throughout survival analysis and 
represent non-parametric, semi-parametric, and parametric models, respectively. 
As our preliminary studies with \CoxKP\ suggested it was overfitting,
we also included a regularized extension,
using elastic net, \CoxENKP.
Since Random Survival Forests (\RSF) were introduced in 2008,
they have had a large impact on the survival analysis community.
However, as the Kaplan-Meier extension to transform \RSF\ into an \ISD\
is not well known, 
it is  summarized in Appendix~\ref{app:RSFKM}.
More recent still is the \MTLR\ technique~\cite{PSSP-NIPS} 
that directly learns a survival distribution,
by essentially learning the associated probability mass function 
(whose sequential right-to-left sum, when smoothed, is the survival distribution).
We found some subsequent similar models, including
``Multi-Task Learning for Survival Analysis'' (MTLSA)~\cite{li2016multi},
some deep learning variants~\cite{ranganath2016deep, katzman2016deepsurv,luck2017deep}, and a computationally demanding Bayesian regression trees 
model~\cite{mcculloch2016},
but for brevity, we focused on just the first 
such
model, \MTLR.


Note the distribution class ${\cal D}$ chosen for \AFT\  
 certainly influences its performance 
 -- \eg it is possible that 
 \AFT[Weibull] on a dataset may fail D-Calibration whereas \AFT[Log-Logistic] may pass;
 similarly for 1-Calibration at some time $\tzero$,
and the scores for Concordance, L1-loss and Integrated Brier score
will depend on that distribution class.
This paper will focus on \AFT[Weibull] because,
while still being parametric,
the Weibull distribution is versatile enough to fit many datasets.

\comment{ EARLIER VERSION:
More recent still is the representation of survival analysis as a multi-task learning problem where different times are treated as different tasks
-- each learning the parameters for an interval of a PMF (probability mass function).
We did find some similar models, including
`'`Multi-Task Learning for Survival Analysis'' (MTLSA)~\cite{li2016multi},
was created but the survival curves generated are not bounded above by 1 thus not acting as probabilities and thus MTLSA is not considered here.
Of note is the recent work on adapting survival analysis as a deep learning model \cite{ranganath2016deep, katzman2016deepsurv,luck2017deep}, however, for brevity and the fact that most of these deep learning models do not result in an \ISD, we do not consider deep learning models in our empirical assessments.
}

\subsection{Datasets and Evaluation Methodology} 
\label{sec:DataSets}

There are many 
different
survival datasets;
here, we selected 8 
publicly available
medical datasets 
in order
to cover a 
wide
range of 
sample sizes, number of features, and proportions of censored patients. 
We excluded small datasets (with fewer than 150 instances)
to 
{reduce the variance in the evaluation metrics.} 
Our datasets ranged from 170 to 2402 patients,
from 12 to 7401 features,
and percentage of censoring from 17.23\% to 86.21\%;
see Table~\ref{tab:datasets}.  Note that we have not included extremely high-dimensional data (with tens of thousands of features, often found in genomic datasets), as such data raises additional challenges beyond the scope of standard survival analysis; see \cite{witten2010} for methods to handle extremely high-dimensional data.

The Northern Alberta Cancer Dataset~(NACD),
with 2402 patients and 53 features, 
is a conglomerate of many different cancer patients,
including lung, colorectal, head and neck, esophagus, stomach, and other cancers.
In addition to using the complete NACD dataset,
we considered the subset of 950 patients with colorectal cancer (\NacdCol),
with the same 53 features.

Another four 
datasets were retrieved from data generated by The Cancer Genome Atlas (TCGA) Research Network~\cite{TCGAData}:
Glioblastoma multiforme (GBM; 592 patients, 12 features), 
Glioma (GLI; 1105 patients, 13 features), 
Rectum adenocarcinoma (READ; 170 patients, 18 features), and  
Breast invasive carcinoma (BRCA; 1095 patients, 61 features).
To ensure a variety of feature/sample-size ratios,
we consider only the clinical features in our experiments. 
 
Lastly, we included two high-dimensional datasets:
the Dutch Breast Cancer Dataset (DBCD)~\cite{van2006cross} contains 4919 microarray gene expression levels for 295 women with breast cancer,
and the Diffuse Large B-Cell Lymphoma (DLBCL)~\cite{li2016multi} dataset contains 7401 features focusing on Lymphochip DNA microarrays for 240 biopsy samples.

We applied the following pre-processing steps to each dataset:
We first removed any feature that was missing over 25\% of its values,
as well as any features containing only 1 unique value.
For the remaining features, 
we ``one-hot encoded'' each nominal feature and then passed each feature to a univariate Cox filter,
and removed any feature that was not significant at the $p \leq 0.10$ level. Following feature selection,
we 
 replaced any missing value
with the respective feature's mean value.
(Note this feature selection was found to 
benefit all \ISD\ models across all performance metrics; data not shown.)
Table~\ref{tab:datasets} provides the dataset statistics and 
a full breakdown of feature numbers in each step.

\begin{table}[tb]
\caption{\label{tab:datasets} Overview of datasets used for empirical evaluations. 
From top to bottom:
(1)~the number of patients in each dataset, 
(2)~the percent of patients censored, 
(3)~the number of features contained in the original dataset, 
(4)~the number of features after removal of features containing over 25\% missing data or only 1 unique value,
(5)~the number of features after univariate Cox selection, and 
(6)~the feature-to-sample\_size ratio.}
\centering
	\resizebox{\textwidth}{!}{
	\begin{tabular}{ >{\columncolor{LightCyan}}r|cccccHccc}
  \hline
\rowcolor{gray!20} & GBM & GLI & \NacdCol & NACD & READ & THCA & BRCA &  DBCD & DLBCL \\ 
  \hline
Number of patients: $N$ & 592 & 1105 & 950 & 2402 & 170 & 503 & 1095 & 295 & 240  \\ 
  \% Censored & 17.23 & 44.34 & 51.89 & 36.59 & 84.12 & 96.82 & 86.21 & 73.22 & 42.50 \\ 
  \# Features Originally: $f_{raw}$ & 12 & 13 & 53 & 53 & 18& 21 & 61 & 4921 &  7401  \\ 
  \# Features Post-Processing: $f_{proc}$ & 9 & 10 & 45 & 53 & 13 & 19& 59  & 4921 &  7401  \\ 
  \# Features Selected: $f_{final}$ & 6 & 10 & 34 & 46 & 8 & 14 & 28 & 2330 &  1771  \\ 
  $f_{final}\, / \, N $ & 0.010 & 0.009 & 0.036 & 0.019 & 0.047 & 0.028 & 0.026 & 7.898 & 7.379\\
   \hline
\end{tabular}}
\end{table}

Following feature selection, features were normalized (transformed to zero mean with unit variance)
and passed to models for five-fold cross validation (5CV). 
We compute the folds by sorting the instances by time and censorship,
then placing each censored (resp., uncensored) instance sequentially into
the folds -- meaning all folds had roughly the same distribution of times, and censorships.

For \CoxENKP, \RSFKM, and \MTLR\, 
we used an internal 5CV for hyper-parameter selection.
There were no hyper-parameters to tune for the remaining models: 
\CPH, \KM, and \AFT.

As 1-Calibration required specific time points, 
and as models might perform well on 
some 
survival times  but poorly 
on others, 
we chose five times to assess the calibration results of each model: 
the 10th, 25th, 50th, 75th, and 90th percentiles of survival times for each dataset. 
Here, we used the D'Agostino-Nam translation to include censored patients for these evaluation results -- see Appendix~\ref{app:1-Calib}.
Appendix~\ref{app:1-CalDetails} 
presents 
all 240 values
(6 models $\times$ 8 datasets $\times$ 5 time-points); 
here we instead summarize the number of datasets 
that each model passed 
as 1-Calibrated (at $p\geq$0.05)
for each percentile. 

For all evaluations, 
we report the averaged 5CV results for Concordance, Integrated Brier score, and L1-loss. 
As Concordance requires a risk score,
we use the negative of the median survival time and similarly use the median survival time for predictions for the L1-loss.
To adjust for presence of censored data, 
we used the L1-Margin loss, given in Appendix~\ref{app:L1-loss}, 
which extends the ``Uncensored L1-loss'' given in Section~\ref{sec:L1-loss}
(which considers only uncensored patients).
Additionally, as 1-Calibration (resp., D-Calibration) results are reported as \(p\)-values, 
and it is not appropriate to average over the folds,
we
combined the predicted survival curves from all cross-validation folds
for a single evaluation,
and report the resulting \(p\)-value. 

Empirical evaluations were completed in R version 3.4.4.
The implementations of \KM, \AFT, and \CoxKP\ can all be found in the \textit{survival} package~\cite{survival-package} whereas \CoxENKP\ uses the \textit{cocktail} function found in the \textit{fastcox} package~\cite{coxKPEN}. 
Both \RSF\ and \RSFKM\ come from the \textit{randomForestSRC} package~\cite{rfsrc}. 
An implementation of \MTLR\ (and of all the code 
used in this analysis) is publicly available on the  GitHub account\footnote{https://github.com/haiderstats/ISDEvaluation} 
of the lead author.


\subsection{Empirical Results} 
\label{sec:Empirical Results}

\def\Nice{{\sc Nice}}
\def\HD{{\sc High-Dimensional}}
\def\HC{{\sc High-Censor}}

Below, we consider a dataset to be ``\Nice''
if its feature-to-sample-size ratio was less than 0.05 
{(for the final feature set)}
and its censoring was less than 55\%;
this includes four of the 8 datasets:
GBM, \NacdCol,  GLI, NACD --
{which are shown first in all of our empirical studies.}
We let ``\HC'' datasets refer to READ and BRCA and
``\HD'' datasets refer to the other two (DLBCL and DBCD).

\def\hding#1{\medskip \noindent {\bf #1:}}

\def\hdingtwo#1{\medskip \noindent {\textbf{#1:}}}

\subsubsection{Concordance, Integrated Brier score, and L1-loss Results} 
\label{sec:Emp-C,IBS,L1}

Figures~\ref{fig:ConcEvaluation},~\ref{fig:BrierEvaluation}~and~\ref{fig:L1Evaluation} 
give the empirical results for Concordance, Integrated Brier score, and L1-Margin loss
respectively,
where each circle is the score of the associated model on the dataset,
and lines correspond to one standard deviation 
(over the 5 cross-validation folds).
Appendix~\ref{app:EmpiricalDetail} provides the exact empirical results for these measures.


\hding{Best Performance}
The blue circles represent the best performing models, for each dataset;
here we find that \MTLR\ performs best on a majority of datasets:
six of eight for Concordance and L1-loss, and seven of eight for the Integrated Brier score.

\hdingtwo{\textsc{Nice}\ Datasets}
Recall that the first 4 datasets are \Nice.
Here, we find that most models performed comparably
-- and in particular, 
\AFT\ and \CoxKP\ perform nearly as well as the other, more complex, models.
\AFT\ even performs best in terms of L1-loss on GBM.
The only exception was \RSFKM,
which did much worse on GBM and GLI, in all three measures.

\KM\ was worse than the various \ISD-models for all 3 measures.
(The only exception was \RSFKM,
which was worse on for the datasets GLI and GBM for 
Integrated Brier score, 
and for those datasets and also \NacdCol\ for L1-loss.)

\medskip \noindent \textbf{\textsc{High-Censor} Datasets -- READ and BRCA:}
Note first that the variance in the evaluation metrics is generally higher on READ for all models (except \KM) due to the small number of uncensored patients within each test fold -- this is not present in BRCA due to the larger sample size (1095). 
Again we find that \CoxENKP and \MTLR\ are similar for all three measures, but \RSFKM\ performs consistently worse across all three metrics for both READ and BRCA.
\AFT\ and \CoxKP\ are either comparable
(or inferior)
to the other three \ISD-models:
Concordance: worse performance but within error-bars for READ and BRCA;
Integrated Brier score: similar for both READ and BRCA;
 L1-loss: slightly worse for READ and BRCA. Additionally, \AFT\ and \CoxKP\ tend to show higher variance in evaluation estimates on READ than other models for all three measures.

\KM\ is worse than all 5 \ISD-models for Concordance,
but comparable to the best for Integrated Brier score and L1-loss (actually scoring better than \CoxKP\ and \AFT\ for L1-loss on READ and beating \CoxKP\ on BRCA).

\hdingtwo{\textsc{High-Dimensional}\ Datasets -- DBCD and DLBCL}
There are no entries for \CoxKP\ for these two datasets
as it failed to run on them, 
likely due to the large number of features.
As \AFT\ is  unregularized, 
it is not surprising that it does poorly across all measures for these high-dimensional datasets -- indeed, even worse than \KM, which did not use any features! 
We see that the other three \ISD-models -- \CoxENKP, \MTLR\, and \RSFKM\ --
perform similarly to one another here, and \KM\ 
also achieves similar results
(ignoring Concordance where \KM\ always achieves 0.5, as it gives identical predictions for all patients).

\begin{figure} 
\centering
\includegraphics[height=17.5cm]{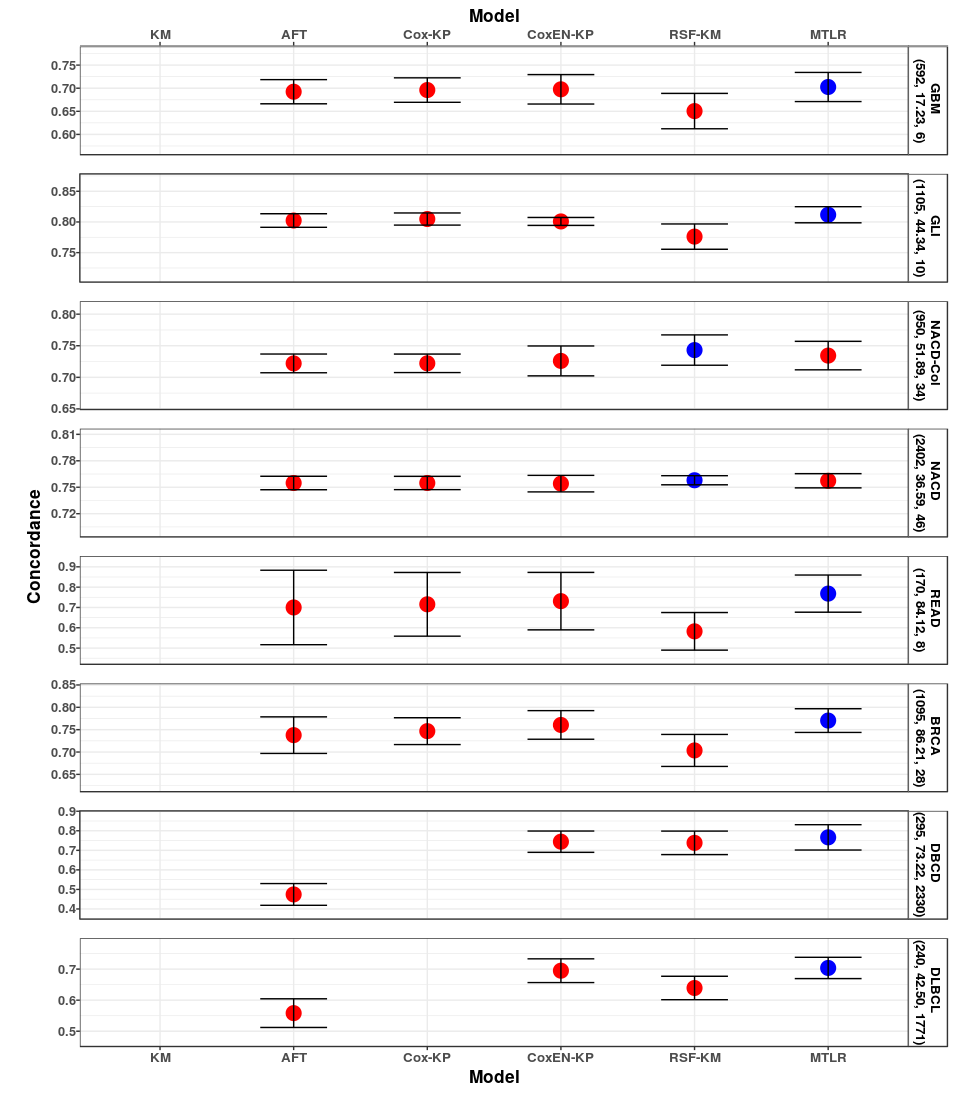}
\caption{\label{fig:ConcEvaluation}
Concordance means and one standard deviation are given by 
circles
and error bars, respectively.
The best (highest Concordance) scoring model is given in blue;
all other models are in red.
We included \KM\ so this figure would ``line up'' with Figures~\ref{fig:BrierEvaluation} and \ref{fig:L1Evaluation},
but left the value blank, 
as the Concordance scores for \KM\ are always 0.5.
For these 3 figures: 
As \CoxKP\ failed to run for datasets DBCD and DLBCL, 
those entries are blank.
The description at the right gives the name of the dataset,
and the 3 numbers ``under'' each dataset name are 
\hbox{($f_{final}$, \% Censored, $N$)}. 
}
\end{figure}

\begin{figure} 
\centering
\includegraphics[height=17.5cm]{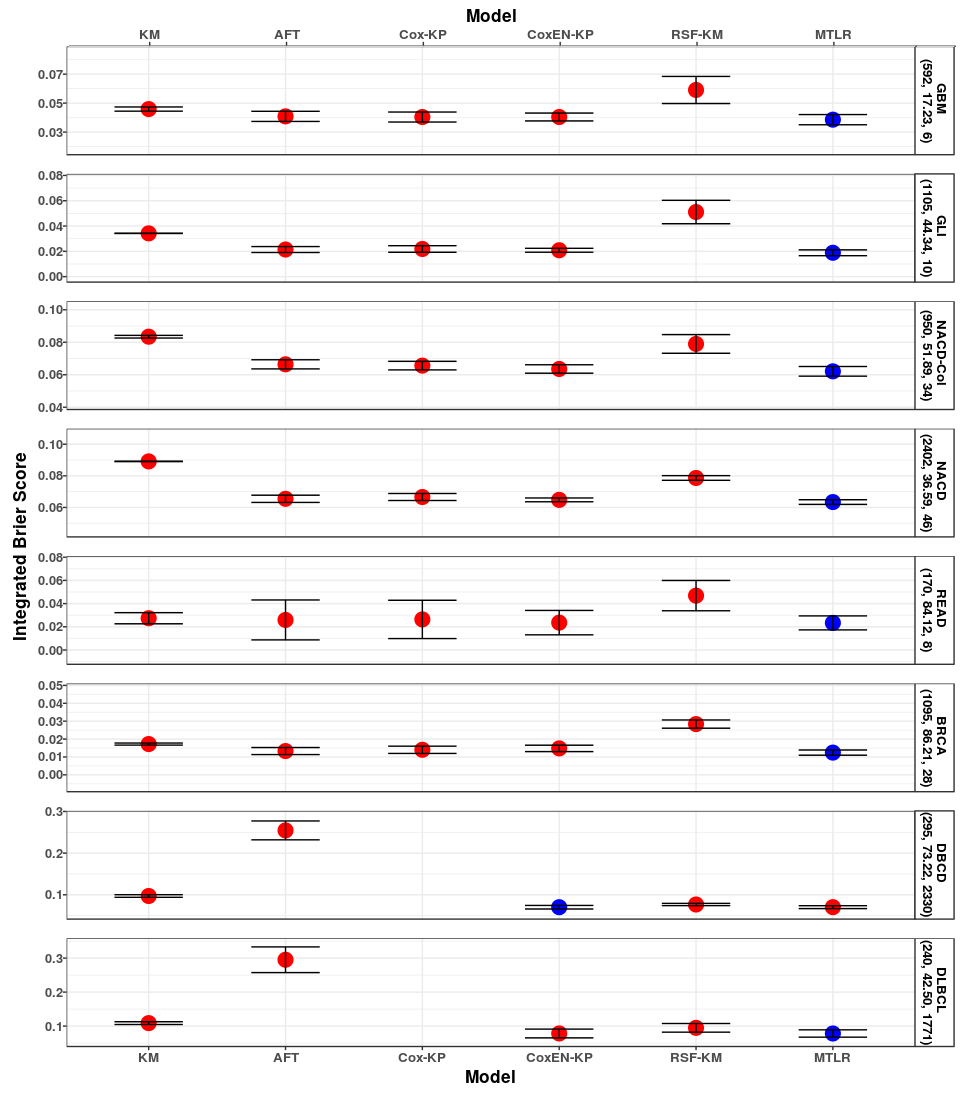}
\caption{\label{fig:BrierEvaluation}
Integrated Brier score means and one standard deviation are given by circles and error bars, respectively.
The best (lowest Integrated Brier score) scoring model is given in blue;
all other models in red.
}
\end{figure}

\begin{figure} 
\centering
\includegraphics[height=17.5cm]{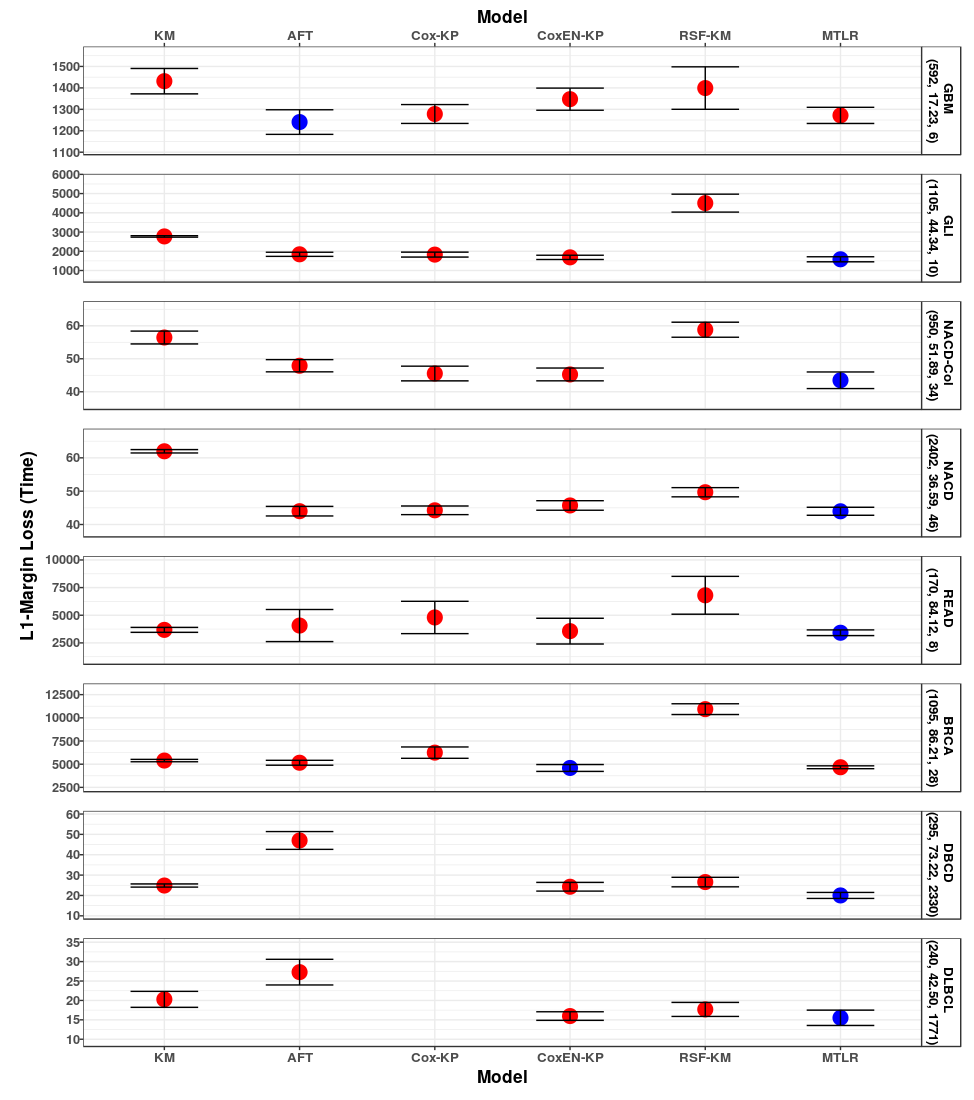}
\caption{\label{fig:L1Evaluation}
L1-loss means and one standard deviation are given by circles and error bars, respectively. 
The best (lowest L1-loss) scoring model is given in blue, all other models in red. 
As different datasets use different time units,
we simply give the units of L1-loss 
as ``Time'' rather than days/months/years.
}
\end{figure}

\begin{table}[t]
\caption{\label{tab:cum1Cal}
	Results from 1-Calibration evaluations. Columns represent percentiles used for each time point and rows indicate the model used.
	Recall there are 8 datasets -- meaning no model performed perfectly for any of the percentiles.
}
\label{tab:1Cal}
\centering
\begin{tabular}{>{\columncolor{LightCyan}}r|cccccc}
	\hline
	\rowcolor{gray!20} & 10th  & 25th & 50th & 75th & 90th \\ 
	\hline
	\AFT & 4 & 2 & 1 & 1 & 0  \\ 
	\CoxKP & 4 & 2 & 2 & 1 & 0 \\ 
	\CoxENKP & 4 & 3 & 1 & 2 & 2 \\ 
	\RSFKM & 4 & 2 & 2 & 1 & 0\\ 
	\MTLR & \textbf{6} & \textbf{8} & \textbf{6} & \textbf{3} & \textbf{4} \\ 
	\hline
\end{tabular}
\end{table}

\subsubsection{1-Calibration Results} 
\label{sec:1-cal-emp}
Table~\ref{tab:1Cal} gives the number of datasets each model passed for 1-Calibration 
{for each time of interest}. 
We see 
that \MTLR\ is typically
1-Calibrated across the percentiles of survival times. 
Specifically, \MTLR\ is 1-Calibrated for at a minimum of four of eight datasets for the 10th, 25th, 50th, and 90th percentiles, outperforming all other models considered. 
The 90th percentile appear to be the most challenging in general,
as some models (\AFT, \CoxKP, \RSFKM) are not 1-Calibrated for any datasets,
\CoxENKP\ is 1-Calibrated for two, and \MTLR\ is 1-Calibrated for four. The 75th percentile also showed to be challenging, however \AFT, \CoxKP, and \RSFKM\ were 1-Calibrated for one, \CoxENKP\ is 1-Calibrated for two, and \MTLR\ is 1-Calibrated for three. 
The most challenging datasets for \RSFKM\ once again were GBM, GLI, BRCA, and READ, for which \RSFKM\ was 1-Calibrated only at the 10th percentile for READ 
-- see Appendix~\ref{app:1-CalDetails}. 
Additional challenging datasets include the complete NACD and DBCD which were challenging for all models.
As \KM\ assigns an identical prediction for all patients,
it cannot partition patients into different bins,
meaning it cannot be evaluated by 1-Calibration.

\comment{
In addition to other performance measures,
we see \MTLR\ performing much better 
in 1-Calibration across all the percentiles of survival times. 
Specifically, \MTLR\ is 1-Calibrated for six of eight dataset for the 10th, 25th and 75th percentiles whereas other models are 1-Calibrated for at most four of the datasets. 
Most challenging appears to be the 90th percentile where most models (\AFT, \CoxKP, and \RSFKM) are not 1-Calibrated for any datasets, \CoxENKP\ is 1-Calibrated for three and \MTLR\ is 1-Calibrated for four.
Note 
that \RSFKM\ has the worst performance,
being 1-Calibrated for at most three datasets for the 10th percentile, 
one dataset for the 25th and 50th percentile and no datasets for the 75th and 90th. 
The two most challenging datasets for 1-Calibration appeared to be GBM and the complete NACD -- see Appendix~\ref{app:1-CalDetails}. 
As \KM\ assigns an identical prediction for all patients,
it cannot partition patients into different bins,
meaning it cannot be evaluated by 1-Calibration.
}

\subsubsection{D-Calibration Results} 
\label{sec:D-cal-emp}

Table~\ref{tab:Dcal},
which gives the D-Calibration \(p\)-values for each model and dataset,
shows that both \KM\ and \MTLR\ pass D-Calibration for every dataset, 
with \KM\ receiving the highest possible \(p\)-value, $p =$1.000, for each.
(In fact, 
Lemma~\ref{lem:KM-D-Calib} in Appendix~\ref{app:D-Calib} 
proves that \KM\ is asymptotically D-Calibrated).
While \KM\ will tend to be D-Calibrated,
it 
is also 
the \textit{least} informative model,
since it 
assigns all patients the same survival curve. 
\MTLR\ is also D-Calibrated for all datasets,
but in addition, it also provides each patients with his/her own survival curve.


Following \KM\ and \MTLR, 
\CoxENKP\ performed next best, only failing to be D-Calibrated for one dataset:
NACD. 
\RSFKM\ followed closely behind, being D-Calibrated for five of eight datasets, failing on GBM, GLI, and NACD.
\AFT\ performed similarly to \CoxKP, each of which being D-Calibrated on three of eight datasets.



Figure~\ref{fig:DCal-Hist} provides (sideways) histograms,
to help visualize D-calibration.
For each subfigure, each of the 10 horizontal bars should be 10\%;
we see a great deal of variance for the not-D-Calibrated \CoxKP\ [left],
a small (but acceptable) variability for the D-Calibrated \MTLR\ [middle],
and essentially perfect alignment for the D-Calibrated \KM\ [right].
See also Figure~\ref{fig:CalibrationHistogram}.

\begin{figure}[tb] 
\centering
\includegraphics[width = \textwidth]{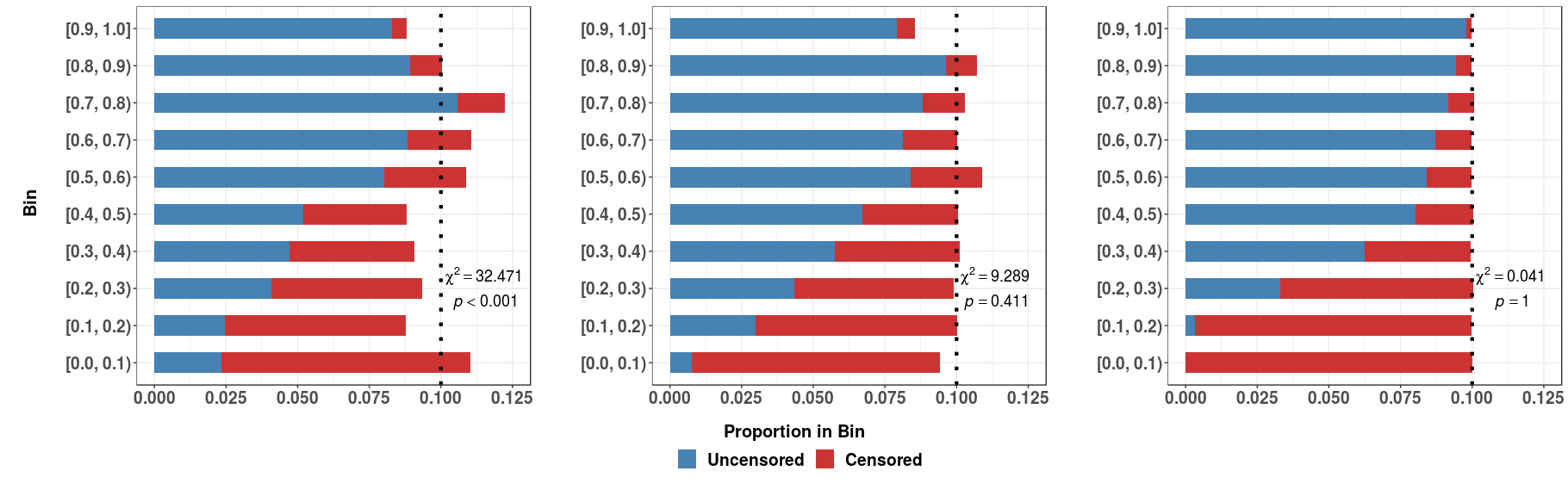}
\caption{\label{fig:DCal-Hist} 
These figures show the (sideways) decile histogram 
used for the D-Calibration test.
Each of these is run on the NACD dataset;
from left to right: running  \CoxKP, \MTLR\ and \KM.}
\end{figure}

\def\NA{-}  

\begin{table}[tb] 
	\caption{\label{tab:Dcal} Results for D-Calibration evaluations. Columns correspond to the dataset and rows to the model. Results are the \(p\)-value from the goodness-of-fit test. 
	\textbf{Bold} values indicate that a model passed D-Calibration, 
	\ie \(p \geq 0.05\); and  ``\NA'' means the algorithm did not return an answer.}
	\centering
	\resizebox{\textwidth}{!}{
		\begin{tabular}{>{\columncolor{LightCyan}}r|cccccHccc|c}
			\hline
			\rowcolor{gray!50}  & GBM & GLI & \NacdCol & NACD & READ & THCA & BRCA & DBCD & DLBCL &Total \\ 
			\hline
		\KM & \textbf{1.000} & \textbf{1.000}& \textbf{1.000} & \textbf{1.000} & \textbf{1.000} & \textbf{1.000} & \textbf{1.000} & \textbf{1.000} & \textbf{1.000} & 8 \\ 
		  \AFT & 0.000 & 0.017 & \textbf{0.290} & 0.000 & \textbf{0.807} & \textbf{1.000} & \textbf{0.988} & 0.000 & 0.000 & 3\\ 
		  \CoxKP & 0.046 & 0.049 & \textbf{0.107} & 0.000 & \textbf{0.939} & 0.000 & \textbf{0.995} & \NA & \NA & 3 \\ 
            \CoxENKP & \textbf{0.447}  & \textbf{0.128} & \textbf{0.691} & 0.000 & \textbf{1.000} & \textbf{1.000} & \textbf{1.000} & \textbf{0.430} & \textbf{0.758} & 7 \\ 
              \RSFKM & 0.000 & 0.000 & \textbf{0.403} & 0.000 & \textbf{0.974} & \textbf{1.000} & \textbf{0.757} & \textbf{0.911} & \textbf{0.574}  & 5\\ 
              \MTLR & \textbf{0.688} & \textbf{0.883} & \textbf{0.656} & \textbf{0.411} & \textbf{1.000} & \textbf{1.000} & \textbf{0.995} & \textbf{0.994} & \textbf{0.755} & 8\\ 
			\hline
		\end{tabular}}

	\end{table}

\section{Discussion} 
\label{sec:Discussion}
\hding{Comparing different \ISD-models}
Steyerberg~\etal~\cite{steyerberg2010assessing} noted two different types of performance measures of a survival
analysis model
-- calibration and discrimination --
each of which can be assessed separately:
\begin{description}
\item [Calibration:] 
``Of 100 patients with a risk prediction of $x$\%, 
do close to $x$ experience the event?''
\item[Discrimination:]
``Do patients with higher risk predictions experience the event sooner than those who have lower risk predictions?''
\end{description}
Discrimination is a very important measure
for some situations -- 
\eg if we have 2 patients who each need a kidney transplant, but there is only a single kidney donor, then we want to know which patient will die faster \textit{without} the transplant~\cite{kamath2007model}.
As discussed in Section~\ref{sec:Concordance}, Concordance measures how well a predictor does, in terms of this discrimination task.


This paper, however,  
motivates and studies models that produce
an individual survival curve for a specific patient. 
Such
\ISD\ tools 
may not be optimal for maximizing discrimination (and therefore Concordance);
and even tools like \CPH\ and \RSF,
that were originally developed for discrimination, 
were then extended to produce these individual survival curves. 
Given this qualifier, we see 
(over the set 
of \ISD\ tools tested),
\MTLR\ scored best on Concordance for six of the eight datasets tested and
\RSFKM\ scored the best on the other two.
(The relatively low performance of \CoxKP\ is unexpected given the claim that
``a method designed to maximize the Cox's partial likelihood also ends up (approximately) maximizing the [concordance]
''~\cite{steck2008ranking}.)
However,
when we look at the 
\Nice\ datasets%
\comment{ with ``nice'' conditions 
 (feature to sample size ratio of less than 0.05 and less than \(\sim 50\)\% censoring 
-- \ie 
GBM, \NacdCol,  GLI, NACD
\note[RG]{I reordered them, to match the tables -- GBM, Nacd-Col, GLI, NACD?
Or is there any reason to keep earlier order??}
)
},
4 of the 5 \ISD-models give nearly identical results
(\RSFKM\ differs by giving noticeably lower performance on GBM and GLI). 
These findings suggest that,
for \Nice\ datasets, 
more complex models (\MTLR, \RSFKM , and \CoxENKP) do not offer large benefits in terms of Concordance. For the \HD\ datasets, \MTLR\ and \CoxENKP\ performed only marginally better than \RSFKM\ for DBCD
but noticeably better than \RSFKM\ on DLBCL. 
Although these are only two datasets, 
this 
suggests 
that \RSFKM\ may not be optimal for these high-dimensional datasets, in terms of Concordance. For the \HC\ datasets \RSFKM\ saw much worse performance for Concordance (among other metrics) suggesting \RSFKM\ may not be suitable for datasets with a high proportion of censored data.


As noted above, Concordance is only one measure for an \ISD\ tool.
Given that an \ISD\ tool can produce 
a survival curve for each patient
(and not just a single real-valued score),
it can be used for various tasks,
with various associated evaluations.
For example, 
consider 
patients who are deciding whether to undergo an intensive medical procedure.
Using 
the plots from Figure~\ref{fig:4subjects},
note that Patient~C 
has a very steep survival curve with a low median survival time,
while Patient~A has a shallow survival curve with a large median survival time.
If we were to use this to predict the outcome of a procedure,
we might expect
Patient~C 
to opt-out of the procedure,
but Patient~A to go through with it.
Note the decision for Patient~C is completely independent of Patient~A,
in that we could give the procedure to one, both, or neither of them.
As these patients are 
not being ranked for a limited procedure, 
Concordance is not 
an appropriate 
metric and instead we need to evaluate such predictors using a calibration score%
\comment{
\note[RG]{How useful is this? Delete?}
To make this calibration versus discrimination distinction more tangible:
Imagine you want a fast car,
and you have a tool that provides a value for estimating the ``fastness'' of 
each car.
Sometimes, this fastness value is only designed to be relative,
to predict whether Car A will beat Car B in a race.
It makes sense to evaluate this fastness estimate based on how well it predicts the outcomes of all of the 2-car races.
This is a ``discriminative'' measure -- such as Concordance.
By contrast, other fastness measures actually estimate the speed of a car.
Such measures are very important if you want to purchase a car that can go at least 100~km/hour.
We should use calibration to evaluate such a fastness measure.
}
 -- perhaps 
 1-Calibration or D-Calibration, 
 as discussed in Sections~\ref{sec:1-Calib} and~\ref{sec:d-calibration}.

As discussed in Section~\ref{sec:1-Calib}, 
1-Calibration is particularly relevant for \SAc{P}{\OneOne}{i}\ models --
\ie models that produce a probability score for only 1 time point (for each patient). 
We also noted that 
\ISD\ models,
that produce individual survival curves,
can also be evaluated using 1-Calibration,
once the evaluator has identified the relevant specific time \(\tzero\).
Here, we evaluated a variety of time points:
the 10th, 25th, 50th, 75th and 90th percentiles of survival times for each dataset. 
We found \MTLR\ to be 
superior to all the models 
considered here
for all percentiles.
\comment{ By passing 1-Calibration for a range of time points 
(over 
a large number of diverse datasets), 
}%
The observation that
\MTLR\ was 1-Calibrated for a range of time points,
across a large number of diverse datasets, 
suggests 
that the probabilities assigned by \MTLR's survival curves 
are representative of the patients' true survival probabilities;
the observation that the other models were not 1-Calibrated as often,
calls into question their effectiveness here. 


Of course, our analysis is performing the 1-Calibration test for 5 models (\KM\ is excluded) across 8 datasets and 5 percentiles,
meaning we are performing 200 statistical tests.
We considered applying some \(p\)-value corrections
-- \eg the Bonferroni correction --
to reduce the chance of ``false-positives'',
which here would mean declaring a model that was truly calibrated, as not.
However, the actual \(p\)-values (see Appendix~\ref{app:1-CalDetails}) 
show that including these corrections would actually benefit \MTLR\ the most, 
further strengthening the claim that \MTLR\ has excellent 1-Calibration performance.

\comment{
A criticism of our evaluation may be that by performing the 1-Calibration test for 6 models across 8 datasets and 5 percentiles,
we are then performing 240 statistical tests 
without any \(p\)-value corrections, \eg the Bonferroni correction.
Recall these correction are often included to avoid ``false-positives'' where a false-positive for 1-Calibration would indicate a model having a lack of fit when the model is truly calibrated. 
By examining the exact \(p\)-values in Appendix~\ref{app:1-CalDetails}, 
one can see that including these corrections would actually benefit \MTLR\ the most, 
thus
strengthening the claim 
that \MTLR\ has 1-Calibration performance.
}

Our D-Calibration results further support 
the use of \MTLR's individual survival curves 
over other \ISD-models,
by showing that
\MTLR\ was the only \ISD-model to be D-Calibrated for all datasets.
(Recall that \KM\ is technically not an \ISD\ since it 
gives one curve for all patients.) 
%
We see that different \ISD-models are quite different for this measure
-- 
\eg \AFT\ and \CoxKP\ produce significantly worse performance for D-Calibration,
being D-Calibrated for only three datasets. 
As discussed in Section~\ref{sec:Empirical Results},
\AFT\ is a completely parametric model,
which 
means it cannot produce different shapes
(see Figure~\ref{fig:AllCurves}[top-right]),
likely impacting its ability to be D-Calibrated. 
(Our analysis showed only that \AFT[Weibull] is here not D-Calibrated;
\AFT[$\chi$] for some other distribution class $\chi$,
might be D-Calibrated for more datasets.)


In addition to discussing discrimination (Concordance) and calibration (1-Calibration, D-Calibration) separately,
we can also consider 
a hybrid evaluation metric
-- the Integrated Brier score --
which measures a combination of both calibration and discrimination -- 
see Section~\ref{sec:BrierScore} and Appendix~\ref{app:brier}. 
We see \MTLR\ performing the best for seven of the eight datasets, however, \MTLR\ is no longer superior for DBCD, one of the high-dimensional datasets, 
even though 
it was 
superior
for Concordance. 
Instead, \CoxENKP, \RSFKM, and \MTLR\ all perform nearly identical for these \HD\ datasets.

The Integrated Brier scores,
along with 1-Calibration and D-Calibration results,
collectively show \MTLR\ outperforms other models (for calibration),
and is followed by \CoxENKP\ and \RSFKM. Specifically, \CoxENKP\ and \RSFKM\ are competitive to \MTLR\ for \HD\ datasets --
the 1-Calibration metric 
shows 
that both \CoxENKP\ and \RSFKM\ match the performance of \MTLR\ for DLBCL (\CoxENKP\ and \MTLR\ are 1-Calibrated across all percentiles and \RSFKM\ is 1-Calibrated across three of five, though \(p\)-values are very close to the 0.05 threshold for the other two). DBCD appeared to be the more challenging \HD\ dataset -- \MTLR\ and \CoxENKP\ were 1-Calibrated for two of five percentiles and \RSFKM\ was 1-Calibrated for one. This, coupled with the findings for Integrated Brier Score and D-Calibration,
suggest that \CoxENKP, \RSFKM\ and \MTLR\ are equally competitive for
modeling individual patients' survival probabilities 
\textit{when dealing with 
a large number of features}. However, this does not apply to
smaller-dimensional datasets.
\comment{
\RSFKM\ 
is 
competitive for the \HD\ datasets --
the 1-Calibration metric 
shows 
that \RSFKM\ matches the performance of \MTLR\ for DLBCL and actually outperforms \MTLR\ for DBCD on the 25th percentile (although \(p\)-values are very close to the 0.05 threshold)
and otherwise matches performance for DBCD
(see Appendix~\ref{app:1-CalDetails}). 
This, coupled with the findings for Integrated Brier Score and D-Calibration,
suggest that \RSFKM\ and \MTLR\ are equally competitive for
modeling individual patients' survival probabilities 
\textit{when dealing with 
a large number of features}. 
However, this does not apply to
smaller-dimensional datasets.
}

\RSFKM\ was not 
1-Calibrated across any percentiles for GBM, GLI, BRCA, and
only 1-Calibrated at the 10th percentile for READ,
and was not D-Calibrated for GBM and GLI.
This, along with the poor performance of \RSFKM\ for all measures of GBM, GLI, READ, and BRCA
suggests that \RSFKM\ does not produce 
effective individual survival curves for low-dimensional 
datasets. 
Other experiments (not shown) suggest that
\hbox{\RSFKM} tends to 
{overfit to the training set}
when given too few features.  
Additional meta-parameter tuning in these experiments was unable to correct for overfitting.

Given that survival prediction looks very similar to regression,
it is tempting to evaluate such models using measures like L1-loss
(which can lead to models like censored support vector regression~\cite{shivaswamy2008support}).
A small L1-loss shows that a model can help with many important tasks,
such as decisions about hospice, 
and for deciding about various treatments, 
based on their predicted survival times.
However, simply because a model has the best performance for L1-loss does not mean the estimates are useful
-- consider the complete NACD dataset where \MTLR\ has the best performance with an average L1-loss of 43.97 months.
While this is the lowest average error,
predicting the time of death up to an error of 43.97 months ($\approx$3.7 years) 
is likely not helpful to a patient,
especially as the maximum follow-up time was 84.3 months.

While the best model may not represent a ``good'' model, our empirical results still showed \MTLR\ had the lowest L1-loss on six of eight datasets,
although all \ISD\ models performed 
comparably
for the four \Nice\ datasets 
(with the exception of \RSFKM). 
We see that \KM\ 
is also competitive 
for the \HC\ datasets, 
but given the construction of the L1-Margin loss,
this is not surprising;
see Appendix~\ref{app:L1-loss}.
Moreover, the three complex models \hbox{(\CoxENKP, \RSFKM, \MTLR)} 
appear comparable for the \hbox{\HD\ datasets.}

We also compared the models in terms of 
``Uncensored L1-loss'',
which just considers the loss on the uncensored instances;
see Table~\ref{tab:L1Unc} in Appendix~\ref{app:L1LossEmpirical}.
We see {\KM} is no longer competitive for the \HC\ 
datasets, showing how influential this effect is.
Instead, at least one of the complex models
\{\CoxENKP, \RSFKM, \MTLR\} 
outperforms {\AFT} and {\CoxKP} for every dataset.

That appendix also motivates and defines the Log L1-loss, and
its Table~\ref{tab:LogL1} shows that \MTLR\ 
performs best in 4 of the datasets, and is either second or third best in the others.

\hding{Which \ISD-Model to Use?}
As shown above, which \ISD-model works best depends on properties of the dataset, and on what we mean by ``best''.
Table~\ref{tab:BestISD} summarizes our results here.

In general, for \Nice\ datasets,
\MTLR\ was superior for calibration
but for discrimination,
all \ISD-models were equivalent, 
 leading us to recommend using the
 simplest models: (\CoxKP, \AFT).
As we found that \RSFKM\ would {overfit} the training data
when the 
number of features was small (here, less than 34),
we recommend avoiding \RSFKM\ when there are so few features.

For \HC\ datasets,
we recommend \MTLR\ or \CoxENKP\ when there are not many features (\eg READ, BRCA)
for both calibration and discrimination. 
Typically \CoxKP\ and \AFT\ 
had poor performance and high variability for \HC\ datasets. 
For \HD\ datasets 
with low censoring (less than 70\% \ie DLBCL),
\MTLR, \CoxENKP, and \RSFKM\ 
had the best
performance
for calibration. 
For discrimination, \RSFKM\ 
seemed slightly worse 
for Concordance and Brier score, suggesting it may be a weaker model.

To explore whether examine if 
these findings hold in general, 
we examined 33 other public datasets --
16 (Low Dimension, Low Censoring), 
12 (Low Dimension, High Censoring),
4 (High Dimension, Low Censoring) and 
1 (High Dimension, High Censoring)
where High Censoring is $\geq$ 70\%. 
Note that all Low Dimensional datasets were taken from the TCGA website whereas the other (High Dimensional)
datasets arise from a variety of sources.
The results from these 33 datasets are consistent with the findings
reported here;
specific results can be found on the lead author's RPubs site%
\footnote{See http://rpubs.com/haiderstats/ISDEvaluationSupplement
}. 
Given the low overall number of \hbox{\HD}  datasets,
these findings should be examined on further datasets.

\begin{table}[tb] 
\centering
\caption{\label{tab:BestISD} 
Our recommendation for 
\ISD\ models, for different types of datasets. 
Note we divide the \HD\ set into Low versus High censoring.
(DBCD is 73.22\% censored.) }   
\resizebox{\textwidth}{!}{
 \begin{tabular}{ccc|cc}
     \rowcolor{gray!50}  \multicolumn{2}{c}{Characteristic of Dataset}& Applicable Datasets& \multicolumn{2}{c}{Evaluation} \\
      \hline
      \rowcolor{gray!50}   \%Censored  & \#Dimensionality   & Name & Calibration & Discrimination\\
      \hline
      Low  & Low   &  GBM, GLI, \NacdCol, NACD & \MTLR & \CoxKP /\AFT \\
      High   & Low  &  READ, BRCA & \MTLR/\CoxENKP & \MTLR/\CoxENKP \\
      Low   & High  &  DLBCL & \MTLR/\CoxENKP/\RSFKM & \MTLR/\CoxENKP\\
      High  &  High & DBCD & \MTLR/\CoxENKP/\RSFKM &  \MTLR/\CoxENKP/\RSFKM\\
    \end{tabular}
}
\end{table}

\hding{Why use \ISD-Models?}
As noted above,
this paper considers only models that generate \ISD{}s
(\ie \SAc{P}{\infty}{i}).
This is significantly different from models that only generate
risk scores  (\SAc{R}{\OneAll}{i}), 
as those models 
can only be evaluated 
using a discriminatory metric.
While this discrimination task (and hence evaluation) 
is helpful for some situations
(\eg when deciding which patients should receive a limited resource),
it is not helpful for others 
(\eg deciding whether a patient should go to a hospice,
or terminate a treatment). 
A patient's primary focus will be on his/her own survival, 
not how they rank among others
-- hence 
the risk score such models produce
do not meaningfully inform individual patients.

The single point probability models, \SAc{P}{\OneOne}{i}, 
are a step in the 
direction for benefiting patients,
but they are still often inadequate,
as they apply only to a single time-point.
While hospital administrators may want to know about 
specific time intervals (\eg $\tzero=$``30-day readmission'' probabilities),
medical conditions seldom, if ever, are so precise.
This is problematic as these probabilities can change dramatically
over a short time interval
-- \ie whenever a survival curve has a very steep drop.
For example, consider Patient \#5 ($P5$) 
in Figure~\ref{fig:AllCurves} 
for the \MTLR\ model. 
Here, we would optimistic about this patient 
if we considered the single point probability model 
at $\tzero\!=\,$6months, as 
$\estCP{MTLR}{P5}{\hbox{6months}} = 0.8$,
but very concerned if we instead used $\tzero\!=\,$12months,
as 
$\estCP{MTLR}{P5}{\hbox{12months}} = 0.3$.
Note this trend holds for the other \ISD-models shown;
and also for many of the patients, including 
$P6,\ P7,\ P10$.

\comment{ Earlier version:
Consider a survival curve that 
has a very steep drop, 
such as patients 6,7, and 10 in Figure~\ref{fig:AllCurves} 
for the various \ISD\ models. 
A single point probability model for these patients would look optimistic for 6 months
-- ranging from 50\% to 80\% chance of survival --
however, by 12 months these same patients would only survive with probabilities ranging from 5\% to 35\%.
}

This suggests a model based on only a single time point 
may lead to inappropriate decisions for a patient.
Note also that such a model might not even provide consistent relative rankings over a pair of patients 
-- \ie it might provide different 
discriminative conclusions.
Consider patients $P2$ and $P9$ in Figure~\ref{fig:AllCurves}[\MTLR].
Here, at $\tzero\!=\,$20months, we would conclude that
the purple $P9$ is doing worse (and so should get the available liver),
but at $\tzero\!=\,$30months, 
that the orange $P2$ is more needy.
(We see similar inversions for a few other pairs of patients in \MTLR,
and also for several pairs in the 
\RSF\ model.)


\comment{
Furthermore, as we typically do not know which times will 
involve these steep drops in survival probabilities, 
we will not know which specific $\tzero$ time
the \SAc{P}{\OneOne}{i}-model should use
-- nor even the set of times.

\note[RG]{Doesn't this just argue we should have evaluate at 12 months, not 6?
Or have both 6 and 12 months?\\
Need to explain why we need yet MORE points?}\note[HH]{Yes... but how do we know ahead of time that we need to do this? Building the curves do this by default.}

\note[RG]{Perhaps also discuss curves that CROSS -- 
eg, for MTLR, perhaps Orange and Purple (? 3 and 9?) --
and note that at time t1=15, \#3 is higher, but at time t2=40, \#9 is higher...?} \note[HH]{Is this a point against 1 -risk score models or...?}
}

Of course, one could argue that we just need to use
multiple single-time models.
Even here, we would need to {\em a priori}\ specific the 
set of time points 
-- should we use 6 months and 12 months,
and perhaps also 30 months?
And maybe 20 months?

This becomes a non-issue if we use
individual survival distribution (\ISD; \SAc{P}{\infty}{i}) models,
which produce an entire survival curve,
specifying a probability values for every future time point.
Moreover, 
while risk score models can only be evaluated using a discrimination metric,
these \ISD\ models can be evaluated using all metrics,
making them an overall more versatile method 
for 
survival analysis.
\comment{ Moved below:
Even when evaluating \ISD\ models discriminatively (using Concordance), 
the risk scores we advocate
(mean/median survival time)
has meaning to clinicians and patients,
whereas a general risk score, in isolation,
has no clinical relevance.
}

Bottom line: In general, a survival task is based on both a dataset,
and an objective, corresponding to the associated evaluation measure.
Our \ISD\ framework is an all-around more flexible approach,
as it can be evaluated using any of the 5 measures discussed here 
(Section~\ref{sec:Eval})
-- both commonly-used and alternative.
Importantly, when evaluating \ISD\ models discriminatively (using Concordance), 
the risk scores we advocate
(mean/median survival time)
have meaning to clinicians and patients,
whereas a general risk score, in isolation,
has no clinical relevance.
Moreover, the resulting survival curves are easy to visualize,
which adds further appeal.

\section{Conclusion}  
\label{sec:conclusion}

\hding{Future Work}

This paper has focused on the most common situation for survival analysis: 
where all instances in the training data are described using a fixed number of features 
(see the matrix in Figure~\ref{fig:LearnISD}),
there is no missing values,
and each instance either has a specified time of death,
or is right-censored -- 
\ie we have a lower bound on that patient's time of death.
There are many techniques for addressing the
first two issues
-- 
such as ways to ``encode'' a time series of EMRs 
as a fixed number of features,
or using mean imputations.
There are also relatively easy extensions to
some of the models (\eg \MTLR)
to handle left-censored instances (where the dataset specifies
an upper-bound on the patient's time of death), or interval censored.
These extensions, however, are beyond the scope of the current paper.

\hding{Contributions}

This paper has surveyed 
several 
different approaches to survival analysis,
including assigning individualized risk scores
\SAc{R}{\OneAll}{i},
assigning individualized survival probabilities
for a single time point
\SAc{P}{\OneOne}{i},
modeling a population level survival distribution,
\SAc{P}{\infty}{g},
and 
primarily \ISD\ (individual survival distribution;
\SAc{P}{\infty}{i}) models. 
We  discussed the advantages of having an
individual survival distribution for each patient,
as this can help patients and clinicians to
make informed decisions about treatments, 
lifestyle changes, and end-of-life care. 
We discussed how \ISD\ models can be 
used to compute 
Concordance measures for discrimination and L1-loss,
but should primarily be evaluated using calibration metrics 
(Sections~\ref{sec:1-Calib}, and~\ref{sec:d-calibration}) 
as these measure the extent to which the individual survival curves
represent the ``true'' survival of patients.

Next, we identified various types of \ISD-models,
and empirically evaluated them over a wide range of survival datasets
-- over a range of \#features, \#instance and \%censoring.
This analysis showed that
\MTLR\ was typically superior for the L1-loss, Integrated Brier score, and Concordance, 
but most importantly,
showed it outperformed or matched all other models for the calibration metrics. 

In conclusion, this paper explains why we encourage researchers, and practioners,
to use \ISD-models (and especially ones similar to \MTLR) 
to produce meaningful survival analysis tools,
by showing how this can help 
patients and clinicians make informed healthcare decisions.

 


\section*{Acknowledgements}
We gratefully acknowledge funding from
NSERC,
Amii,
and Borealis AI (of RBC). We also thank Adam Kashlak for his insightful discussions regarding D-Calibration.

{ \small
\bibliographystyle{abbrv}
\bibliography{survival}
}

\appendix
\newpage

\section{Extending Survival Curves to 0} 
\label{app:SC-to-0}
In practice, survival curves often stop at a non-zero probability
-- see Figure~\ref{fig:AllCurves} and Figure~\ref{fig:linearCurve}[left] below.
This is problematic as it means they do not correspond to 
complete distribution (recall a survival curve should be 
``$1-$CDF(t)'',  
where CDF is the Cumulative Distribution Function)
which leads to problems for many of the metrics,
as it is not clear how to compute the mean, or the median, value of the distribution. 
One approach is to extend each of the curves, horizontally,
to some arbitrary time 
and then drop each to zero 
(the degenerate case being dropping the survival probability to zero at the last observed time point). 
This approach has downsides:
Dropping the curve to zero at the last observed time point
produces curves whose mean survival times 
are actually a lower bound on the patient's mean survival time,
which is likely too small.
In the event that the last survival probability is above 0.5 (as is often the case for highly censored datasets) this may bias our estimate of the L1-loss,
which is based on the median value.
\comment{
-- by dropping the curve to zero at the last observed time point
the mean survival times (recall that we are computing the loss with respect to the model's mean time as its prediction) will likely result in mean survival times that are too small, 
that is by dropping the curve to zero we are predicting a lower bound on the patients mean survival time -- potentially biasing the L1-loss. }
Alternatively, 
if we instead extend each curve to some arbitrary time and 
then drop the curve to zero, 
we need to 
decide on that extension,
which also 
could bias the L1-loss. 


Since both standard approaches have clear downsides 
(and there is no way of knowing how the survival curves act beyond the sampled survival times),
we chose to 
simply
extrapolate survival curves using a simple linear fit:
for each patient $\inst{i}$,
draw a line from (0,\,1) 
-- \ie time is zero and survival probability is 1 --
to the last calculated survival probability,
(\(t_{max},\,
\estCP{}{t_{max}}{\inst{i}}\)),
then extend this line to the time for which survival probability equals 0
--  \ie \((t^0(\inst{i}),\, 0)\) 
-- see Figure~\ref{fig:linearCurve}[right].
Note that curves cannot cross within the extended interval,
which means this extension will not change the discriminatory criteria.

\begin{figure}[!htb]
\centering
\includegraphics[width = \textwidth]{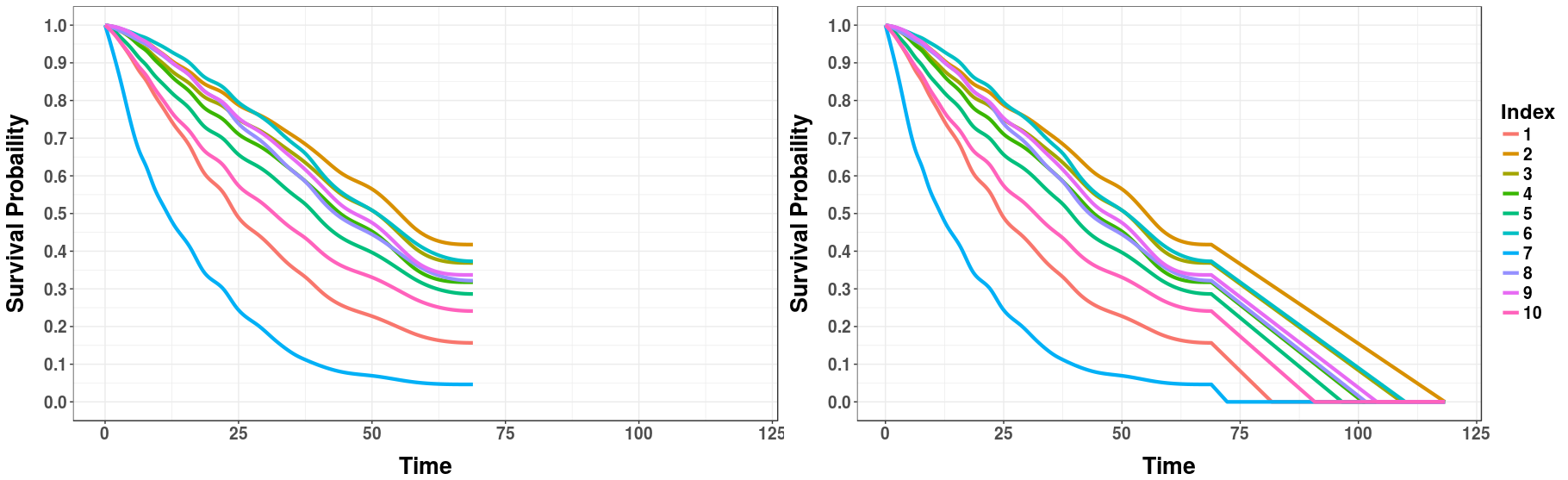}
\caption{\label{fig:linearCurve}
On left, survival curves generated from \MTLR\ for the 
\NacdCol\ 
dataset. 
Left shows this model's survival curves end at 68.9 months.
On right, linear extensions of those survival curves go as far as 118 months. 
}
\end{figure}

There are extreme cases where a survival model will predict
a survival curve with survival probabilities of 1 (up to machine precision) for all survival times 
(think ``a horizontal line, at $p=1$'')
--
this occurred for unregularized models on high-dimensional datasets. 
In these cases,
this linear extrapolation will never reach 0. 
To address this, 
we fit the 
Kaplan-Meier curve with the linear extension described above
to compute $t^0_{KM}$;
we then replace any infinite prediction with this value.
Additionally, as the Kaplan-Meier curve is to represent the survival curve on a \textit{population} level, 
we also 
{truncated any patient's median survival time by}
$t^0_{KM}$.

\comment{ \note[RG]{Is this policy really guaranteed:
$t^0_{KM}$ is just the 
max value of the MEAN time over the population;
couldn't some patients have a mean time that is more than this?}\note[BH]{$t^0_{KM}$ is not the mean time for KM, but the MAX time -- all patients are deceased by this time}
}

\section{Evaluation Measures Supplementary Information} 
\label{app:Evaluation}
This appendix provides additional information about the various evaluation measures.

\comment{\note[BH]{Below, would it aid readability if we substituted "observed" in many places that we have "uncensored"?}}

\subsection{Concordance} 
\label{app:Concordance}

As discussed 
in Section~\ref{sec:Concordance}, 
Concordance 
is designed 
to measure the discriminative ability of a model. 
This is challenging for 
censored data. 
For example, suppose we have two patients who were censored at \(t_1\) and \(t_2\). 
Since both patients were censored,
there is no way of knowing which patient died first and
hence the risk scores for these patients are incomparable.
However, 
if one patient's censored time is later than the death time of another patient, we do know the true survival order of this pair: the second patient died before the first.

To be precise, we first need to define the set of \textit{comparable pairs}, 
which is the subset of pairs of indices
(here using the validation dataset (\(V\)) and recalling that \(\delta = 1\) indicates a patient who died (uncensored)) containing all pair of instances when we know which patient died first:
\begin{equation}
\CP{V}\ =\ \left\{ [i,j]\ \in\ V \times V\  
{\left|\ 
t_i < t_j\ \hbox{and}\ 
\delta_i = 1\
\comment{\begin{array}{l}
\delta_i = 1\  \hbox{and}\ \delta_j = 1 \quad \hbox{or}\\
\delta_i = 1\  \hbox{and}\ \delta_j = 0\ 
\end{array} }
\right\}} \right.
\end{equation}
Notice when the earlier event is uncensored (a death),
we know the ordering of the deaths
(whether the second time is censored or not)
\comment{ That is, either both 
{events are observed, or 
one is censored while the other is observed,}
and the censored event occurs after the observed event }
-- see Figure~\ref{fig:ConcordanceCensored}.
The $t_i<t_j$ condition is to prevent double-counting such that
$| \CP{V} | \leq \binom{|V|}{2} $.

\def\est#1#2{r(\inst{#2})}

\begin{figure}[tb] 
\centering
\includegraphics[width=0.6\textwidth] 
{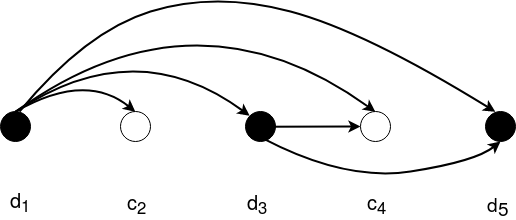}

\caption{\label{fig:ConcordanceCensored} 
Depiction of Concordance comparisons,
including censored patients. 
Black and white circles indicate uncensored and censored patients, respectively. 
Each $d_i$ is the death time for an uncensored patient, 
and each $c_j$ is the censoring time for a censored patient.
We can only compare:
uncensored patients who died \textit{prior} to a 
censored patient's censoring time,
or an uncensored patient's death time.
Here, time increases as we go left-to-right; hence $\death{1} < \censor{2} < \death{3} < \censor{4} < \death{5}$.
Here, we can compare 6 of the ${5 \choose 2} = 10$ pairs of patients. 
Figure adapted from \cite{wang2017machine}. }
\end{figure}

We then consider how many of the possible pairs our predictor put in the correct order:
That is, of all $[i,j]$ pairs in $\CP{V}$, we want to know how often
$\est{}{i} > \est{}{j}$ given that $t_i < t_j$.
Hence, the Concordance index of $V$, with respect to the risk scores, $r(\cdot)$, is
\begin{equation}
\hat{C}(\,V, \ r(\cdot)\,) \quad =\quad 
\frac{1}{|\CP{V}|} \sum_{i: \delta_i = 1}\, \sum_{j:\, t_i < t_j} \Indic{r(\inst{i}) > r(\inst{j})}.
\label{eqn:CI}
\end{equation}

One issue is how to handle ties, in either
risk scores or death times 
-- \ie for two patients, Patient~A and Patient~B,
consider either \(r(\inst{A}) = r(\inst{B})\) or \(d_A = d_B\). 
The two standard approaches 
are  
(1)~to give the model a score of 0.5 for ties (of either risk scores or death times), or 
(2)~to remove tied pairs entirely~\cite{yan2008investigating}.
The first option 
is equivalent to Kendall's tau, 
while the second leads to 
the Goodman-Kruskal gamma. 
The empirical evaluations 
(given in Section~\ref{sec:Empirical Results})
use the first, 
as this 
gives Kaplan-Meier a Concordance index of 0.5 for all models.
If we use the second option (excluding ties),
then the Concordance for the 
Kaplan-Meier model is not well-defined.

\comment{
\subsubsection{
\SAc{R}{\infty}{i}\ 
models and $\tzero$-Concordance} 
\label{app:t-Concordance}
\comment{
\note[HH]{Once we have a reference for tCox we will try to redo this section.}
\note[RG]{Rewrote... ok?}
\note[HH]{I think there is something wrong regarding the \SAc{R}{\OneOne}{i}\ models. First of all, is there an example of these models other than "tCox"? Additionally, I think time-truncated concordance is for \SAc{R}{\infty}{i}\ models. I am going to do a rewrite below, currently excluding single time models, since I don't believe there is a neatly defined concordance measure for these models.}
\note[RG]{I am still confused here..}

There are \SAc{R}{\OneOne}{i}\ models that produce a patient's risk score, for a single time $\tzero$.
We can evaluate this using  $t^*$-Concordance,
which compares the risk scores of any pair of patients
with their death times,
given that the smaller death time occurred 
before time $\tzero$ 
-- \ie 
\begin{equation}
\reallywidehat{C}_{\tzero}(\, V_U,\, r(\cdot)\,) \quad=\quad 
\Pr\left[\ r(\inst{a}) > r(\inst{b}) \ |\ d_a < d_b, \ d_a \leq \tzero\ \right]
\ .
\label{eqn:t-Concord}
\end{equation}
For many models, the relative ordering of risks will, typically, not change over time but the $t_i$-Concordance gives different scores because the
set of patients ``at risk'' differs at the different times 
$\{\,t_i\,\}$.
\comment{\note[RG]{How do you compute the probability?
See  https://cran.r-project.org/web/packages/survAUC/survAUC.pdf
}}

\note[RG]{I am confused. Is the above formula time-truncated concordance?
Or is this $t^*$-concordance? 
Is time-truncated concordance the sum of this over many $t^*$s, from [0, $\tau$]?
Or ..}

There are also \SAc{R}{\forall}{i}\ 
\note[RG]{I assume you mean  $\infty$ here... but not $\OneAll$ -- yes?}
models
that produce a patient's risk score
for each time;
here one could compute a $\tzero$-Concordance score for a few times
(say 1 year, 5 years and 10 years),
but it is not immediately clear how to combine these.
By exploiting the relationship between Concordance and AUC, 
Heagerty and Zheng~\cite{heagerty2005survival}
showed that a time-truncated Concordance
-- \eg over the range \((0, \tzero)\) --
is equal to a weighted average of time-specific-AUCs with weights 
that 
integrate to 1.
Empirically this involves computing the time-specific Concordance score 
for many different times $\{ t_1, t_2, \dots, t_{max} \}$, then returning the average. 

\note[HH]{Rewrite:}
}

There are also \SAc{R}{\infty}{i}\ models
that produce a patient's risk score
for each time
-- such as Time-Dependent Cox models, 
that use time-varying variables.
To evaluate such models,
we could compute several Concordance score,
each for a different time
(say 1 year, 5 years and 10 years);
however, it is not immediately clear how to combine these scores.
By exploiting the relationship between Concordance and AUC, 
Heagerty and Zheng~\cite{heagerty2005survival}
showed that a time-truncated Concordance
-- \eg over the range \((0, \tzero)\) --
is equal to a weighted average of \textit{time-specific-AUCs},
where the time-specific AUC
of a dataset $V_U$,
  time $t$
 and time-dependent risk model $r(\inst{},\, t)$,
is defined as,
\[
\textrm{AUC}(\, V_U,\, r(\cdot)\,, t\,) \quad=\quad 
\Pr\left[\ r(\inst{a},\, t) > r(\inst{b}, \,t) \ |\ d_a = t,  \ d_b > t \right],
\]
By defining time-truncated Concordance as the weighted average across all times in \((0, \tzero)\),
we have that
\note[RG]{should the risk $r(..)$ below also depend on $t$??}
\begin{equation}
\reallywidehat{C}_{\tzero}(\, V_U,\, r(\cdot)\,) \quad=\quad 
\Pr\left[\ r(\inst{a}) > r(\inst{b}) \ |\ d_a < d_b, \ d_a \leq \tzero\ \right]
\ .
\label{eqn:t-Concord}
\end{equation}
 Kamrudin~\etal~\cite{kamarudin2017time}
 provides methods for empirically evaluating
 this \( \textrm{AUC}(\, V_U,\, r(\cdot)\, , t\,) \).

\comment{ EARLIER VERSION:
There are \SAc{R}{\infty}{-}\ models that produce a patient's risk score
for each time;
here one could compute a $\tzero$-Concordance score for a few times
(say 1 year, 5 years and 10 years),
but it is not immediately clear how to combine these.
By exploiting the relationship between Concordance and AUC, 
Heagerty and Zheng~\cite{heagerty2005survival}
showed that a time-truncated Concordance
-- \eg over the range \((0, \tzero)\) --
is equal to a weighted average of time-specific-AUCs with weights 
that 
integrate to 1.
Empirically this involves computing the time-specific Concordance score 
for many different times $\{ t_1, t_2, \dots, t_{max} \}$, then returning the average. 
This time-truncated Concordance 
means that the risk scores of any pair of patients
 will be concordant with their death times,
given that the smaller death time occurred in \((0,\tzero)\)
-- \ie 
$$\reallywidehat{C}_{\tzero}(\, V_U,\, r(\cdot)\,) \quad=\quad 
\Pr\left[\ r(\inst{a}) > r(\inst{b}) \ |\ d_a \leq d_b, \ d_a < \tzero\ \right]
\ .
$$
For many models, the relative ordering of risks will, typically, not change over time but the $t_i$-Concordance gives different scores because the
set of patients ``at risk'' differs at the different times 
$\{\,t_i\,\}$.
\comment{\note[RG]{How do you compute the probability?
See  https://cran.r-project.org/web/packages/survAUC/survAUC.pdf
}}
}
}
\subsection{L1-loss, and variants} 
\label{app:L1-loss}

As discussed in Section~\ref{sec:L1-loss}, 
survival analysis can be 
viewed 
as a regression problem
that is 
attempting to minimize the difference between an estimated time of death and the true time of death. 
However, typical regression problems 
require having precise target values for each instance;
here, many instances are censored 
-- \ie providing only lower bounds for the target values.
One option is to simply remove all the censored patients and use the L1-loss given by Equation~\ref{eqn:L1Unc}
(which we call ``Uncensored L1-Loss'');
however, this will likely bias the true loss.
Table~\ref{tab:L1Unc} in
Appendix~\ref{app:L1LossEmpirical} provides
the results for this Uncensored L1-loss 
over the 8 datasets.
(We see that \MTLR\ is best for 6 of these datasets.)

\def\bestGuess#1{B\!G(#1)}
\def\weight#1{\alpha_{#1}}
One way to incorporate 
censoring is to use the Hinge loss for censored patients,
which  assigns $0$ loss to any patient 
whose censoring time $c_k$ is prior to the estimated median survival time,
 $\medi{k}$ -- \ie a loss of 0 if 
 $c_k < \medi{k}$ --
 and a loss of \(c_k - \medi{k}\) 
 if the censoring time is greater than 
 $\medi{k}$. 
 That is: 
\begin{eqnarray}\small
L1_{hinge}(\,V,\, \{\medi{j}\}_j\,) &=& \frac{1}{|V|}
 \left[\,\,\sum_{j \in V_U} |\death{j} - \medi{j}|  
 \ +\ 
 \sum_{k \in V_C} [\censor{k} - \medi{k}]_+
   \right].
     \label{eqn:L1-hinge}
\end{eqnarray}
where  \(V_U\)
is the subset of the validation dataset that is uncensored,
and \(V_C\) is the censored subset, 
and \([a]_+\) is the positive part of \(a\), \ie 
$$[a]_+ \quad =\quad \max\{a, 0\}\quad =\quad\left\{
\begin{array}{ll}
a & \hbox{if}\ a \geq 0\\
0 & \hbox{otherwise}
\end{array}.
\right.$$
This formulation is an optimistic lower bound on the L1-loss for two reasons: 
(1)~it gives a loss of 0 if the censoring occurs prior to the estimated survival time,
implying that \(d_k =\medi{k}\),
and (2)~it gives 
 a loss of \(c_k - \medi{k}\) if the censoring time occurs after the estimated survival time,
 which assumes that \(d_k = c_k\).
 Both are the best possible values for the unknown $d_k$,
 given the constraints..

One weakness of the L1-Hinge loss is that
if a model predicts very large survival times for all patients (both censored and observed),
the hinge loss will give 0 loss for the censored patients;
in datasets with a large proportion of censored patients, this leads to an optimistic score overall. Thus the hinge loss will favor models that tend to largely overestimate survival times as opposed to those models underestimating survival time.

A third variant of L1-loss, 
the {\em L1-Margin loss}, 
assigns a ``Best-Guess'' value to the death time corresponding to \(c_k\),
which is  
the patient's conditional expected survival time given they have survived up to \(c_k\) -- given by
\begin{equation}
\bestGuess{c_k} \quad =\quad  \censor{k}\ + \
\frac{\int_{\censor{k}}^\infty S(t) 
\, dt }
     {S(\censor{k})}
     \label{eqn:BG}
\end{equation}
where \(S(\cdot)\) is the survival function;
Theorem~\ref{thm:conditionalKM} proves this value corresponds to the conditional expectation. 
In practice we use Kaplan-Meier estimate, \(\estP{KM}{\cdot}\), 
generated from the training dataset (disjoint from the validation dataset) 
as our estimate of \(S(\cdot)\) 
in Equation~\ref{eqn:BG}.

We also realized that these $\bestGuess{c_k}$ estimates
are more accurate for some patients, 
than for others.
If $\censor{k} \approx 0$ --
that is, if the patient was censored 
near the beginning time --
then we know very little about the true timing of when the death occurred, so the estimate
$\bestGuess{\censor{k}}$ is quite vague, 
which suggests we should give very little weight to the associated loss, 
 $|\bestGuess{\censor{k}} - \medi{k}|$.
 Letting $\weight{k}$ be the  weight associated with these terms,
we would like $\weight{k} \approx 0$.
 On the other hand, if $\censor{r}$ is large 
 -- towards the longest survival time observed 
 (call it $d_{max}$) --
then there is a relatively narrow gap of time where this $\inst{r}$ could have died
(probably within the small interval $(\censor{r}, d_{max})$);
here, we should 
give a large weight to loss associated with this estimate. 

This motivates us to define 
\begin{eqnarray}
L1_{margin}(\,V,\, \{\medi{j}\}\,) &=& 
\small \frac{1}{|V_U| + \sum_{k \in V_C} \alpha_k}
 \left[\,\,\sum_{j \in V_U} |\death{j} - \medi{j}|
 \ +\ 
 \sum_{k \in V_C}
  \weight{k} |\bestGuess{\censor{k}} - \medi{k}|
   \right]\hspace{1cm}  \label{eqn:L1-margin}
\end{eqnarray}
where $\weight{k}$ reflects the confidence in each Best-Guess estimate.
\comment{   Moved this above
If $\censor{k} \approx 0$ --
that is, if the patient was censored at the beginning time --
then we know very little about the true timing of the event, so the estimate
$\bestGuess{\censor{k}}$ is quite vague, and so we give very little weight to the associated loss, 
 $|\bestGuess{\censor{k}} - \mean{k}|$,
 that is, we would like $\weight{k} \approx 0$.
 On the other hand, if $\censor{k}$ is large 
 -- towards the longest survival time observed 
 (call it $d_{max}$),
then there is a relatively narrow gap of time where this $\inst{k}$ could have died
(probably between $(\censor{k}, d_{max})$);
here, we should weight the loss heavily. 
}%
To implement this, 
we set 
$\alpha_k = 1 - \hat{S}_{KM}(c_k)$,
which 
gives little weight to instances with early censor times 
but considers late censor times to be almost equivalent to an observed death time. Note this is the version of L1-loss we presented in 
\hbox{Figure~\ref{fig:L1Evaluation}},
with details in \mbox{Table~\ref{tab:L1}}.

For completeness, 
we prove Equation~\ref{eqn:BG}.
\comment{
include the details showing that $\bestGuess{c_k} =  \censor{k} + 
\frac{\int_{\censor{k}}^\infty S(x)\, dx }{S(\censor{k}) }$.
}%
(This claim is also proven by Gupta and Bradley~\cite{gupta2004representing},
which uses 
\textit{mean residual life} rather than
{\em expected total life}.) 

\begin{thm}
The conditional expectation of time of death, \(D\), 
given that a patient was censored at time \(c\),
is given by: 
$E[D\,|\, D> c]\ =\  c + \frac{\int_{c}^\infty S(x)\, dx }{S(c)}$.
\label{thm:conditionalKM}
\end{thm}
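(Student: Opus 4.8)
The plan is to express the conditional expectation through the tail-sum (layer-cake) representation of the mean of a nonnegative random variable, applied to the law of $D$ conditioned on the event $\{D > c\}$. Recall that for any nonnegative random variable $Y$ one has $E[Y] = \int_0^\infty \Pr[Y > x]\, dx$. Since $D$ is a (nonnegative) time of death, $D$ conditioned on $\{D>c\}$ is again nonnegative, so I would simply apply this identity to the conditional distribution rather than manipulate densities.

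First I would compute the conditional survival function $\Pr[D > x \mid D > c]$. For $x < c$ the event $\{D>c\}$ already forces $D > x$, so this probability equals $1$. For $x \ge c$ the events nest, $\{D>x\} \subseteq \{D>c\}$, giving
\begin{equation}
\Pr[D > x \mid D > c] \;=\; \frac{\Pr[D>x]}{\Pr[D>c]} \;=\; \frac{S(x)}{S(c)}.
\end{equation}
Substituting into the tail-sum identity and splitting the integral at $c$ then yields
\begin{equation}
E[D \mid D > c] \;=\; \int_0^c 1\, dx \;+\; \int_c^\infty \frac{S(x)}{S(c)}\, dx \;=\; c + \frac{\int_c^\infty S(x)\, dx}{S(c)},
\end{equation}
which is exactly the claimed formula.

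As a cross-check (and an alternative route), I would also verify the result directly from the conditional density $f(x)/S(c)$ supported on $[c,\infty)$, computing $E[D\mid D>c] = \frac{1}{S(c)} \int_c^\infty x\, f(x)\, dx$ and integrating by parts using $f(x) = -S'(x)$. The main obstacle in this second approach is the boundary term at infinity: integration by parts produces a contribution of the form $\bigl[-x\,S(x)\bigr]_c^\infty$, so one must justify that $x\,S(x) \to 0$ as $x \to \infty$. This holds precisely when $E[D]<\infty$, a finite-mean assumption that is implicitly needed for the conditional expectation to be well defined in the first place; I would state it explicitly and invoke it to discard the boundary term. Because the layer-cake argument sidesteps this delicate limit entirely, I would present it as the primary proof and relegate the integration-by-parts version to a confirming remark.
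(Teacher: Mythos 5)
Your proof is correct, and it reaches the identity by a genuinely different (though closely related) route from the paper's. The paper works directly with the conditional density $P(D=t)/S(c)$ on $[c,\infty)$: it splits $t = c + (t-c)$, writes $(t-c)=\int_c^t dx$, and then explicitly swaps the order of integration via Tonelli's theorem to turn the inner integral into $\int_c^\infty S(x)\,dx$. You instead compute the conditional survival function $\Pr[D>x\mid D>c]$ (which is $1$ for $x<c$ and $S(x)/S(c)$ for $x\ge c$) and feed it into the tail-sum identity $E[Y]=\int_0^\infty \Pr[Y>x]\,dx$. These are the same mathematics at bottom --- the Tonelli swap in the paper is essentially a from-scratch proof of the layer-cake formula restricted to the tail beyond $c$ --- but your presentation is cleaner: it makes the contribution of the interval $[0,c)$ (namely the additive $c$) transparent rather than extracting it from a $c\cdot S(c)/S(c)$ cancellation, and it outsources the measure-theoretic interchange to a standard cited identity instead of performing it inline. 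The paper's version buys self-containedness at the cost of a slightly more opaque manipulation. Your remark about the integration-by-parts alternative and its boundary term $x\,S(x)\to 0$ is a sensible aside; as you note, the layer-cake route avoids that issue entirely (and when $E[D]=\infty$ both sides of the claimed identity are $+\infty$, so nothing is lost).
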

\begin{proof}
Let $D$ be the r.v.\ for the time when a patient dies,
and define 
$$S(c)\quad =\quad P(D > c) \quad=\quad \int_{c}^{\infty} P(D = t)\, dt$$ 
as the survival function -- \ie the probability that the patient dies after time $c$.
Given this, 
 the conditional probability is 
\def\Pr#1{P(\, #1\,)}
\def\CPr#1#2{\Pr{#1\,|\, #2}}
$$
\CPr{ D=t}{D > c} 
\quad =\quad \frac{\Pr{ D=t, \ D > c} }{\Pr{D > c}}
\quad =\quad \frac{\Pr{ D=t, \ D > c} }{S(\, c\,)}
\quad =\quad 
\left\{
\begin{array}{rl}
 0    &\hbox{if}\ t<c  \\
 \frac{\Pr{ D=t} }{S(\, c\,)}  &\hbox{otherwise}
\end{array}
\right.
.
$$
\comment{
Further, as
$$\Pr{D=t,\ D > c}\quad =\quad 
\left\{ \begin{array}{ll}
0 & \hbox{if}\ D \leq c\\
\Pr{D=t} &\hbox{otherwise}
\end{array}
\right.
$$
we have
$$\CPr{ D=t}{D > c}\quad =\quad 
\left\{ \begin{array}{ll}
0 & \hbox{if}\ d \leq c\\[1ex]
\frac{\Pr{D=t}}{S(\,c\,)} &\hbox{otherwise}
\end{array}
\right.
$$
\note[BH]{It seems obvious to me that $\CPr{X}{d > c} $ should be 0 (or not defined) on the interval violating the condition (ie. when $d\leq c$).  Is there any way to get to this faster? Also seems like you could easily skip the first couple lines of your next expansion, separating the integral from 0 to inf to [0, c] and [c, inf].  And I don't think it's necessary to change the bounds on the integrals either, since you just end up changing them back.}
Using this, we have the conditional expectation:
}
\comment{
\begin{eqnarray}
E[ \,d\,|\, d> c\,] &= & \int_{0}^{\infty} t\, P(d = t \,|\, d>c)\, dt, \nonumber\\
 &= &\int_{0}^{c} t\, \CPr{d = t}{d>c}\, dt
 \quad+ \quad \int_{c}^{\infty} t\, \CPr{d = t}{d>c}\, dt  \nonumber \\
 &=&  0 \quad+ \quad\int_{c}^{\infty} t\, \frac{\Pr{d = t}}{S(\,c\,)}\, dt  \nonumber \\
 &=&  \frac{1} {S(\,c\,)}
 \int_{0}^{\infty} (c +t)\, \Pr{d = c+t}\, dt  \nonumber\\
 &=&   \frac{1}{S(\,c\,)} \left[
 \int_{0}^{\infty} c\,\, \Pr{d = c+t}\, dt
    \quad+\quad {\int_{0}^{\infty} t\, \Pr{d = c+t}\, dt} \right]  \nonumber\\
 &= & \frac{1}{S(\,c\,)} \left[c\, \int_{0}^{\infty} \Pr{d = c+t}\, dt
    \quad+\quad  \int_{c}^{\infty} (t-c)\, \Pr{d = t}\, dt \right]  \nonumber \\
 &= &  c \,\, \frac{S(\,c\,)}{S(\,c\,)}
    \quad+\quad \frac{1} {S(\,c\,)}\left[\int_{c}^{\infty} \left(\int_{c}^{t}dx\right)\, P(d= t)\, dt\right]  \nonumber\\
 &= &  c \,\,+\quad \frac{1} {S(\,c\,)}\left[\int_{c}^{\infty} \left(\int_{x}^{\infty}P(d= t)\, dt\right)dx\, \right]
     \label{eqn:Tonelli} \\
 &= & c \,\,+\quad \frac{\int_{c}^{\infty} S(\, x\,)\, dx} {S(\,c\,)}  \nonumber.
\end{eqnarray}}
\begin{eqnarray}
E[ \,D\,|\, D> c\,] &=& \int_{c}^{\infty} t\, \frac{\Pr{D = t}}{S(\,c\,)}\, dt  \nonumber \\
 &=&   \frac{1}{S(\,c\,)} \left[
 \int_{c}^{\infty} c\,\, \Pr{D =t}\, dt
    \quad+\quad {\int_{c}^{\infty} (t-c)\, \Pr{D = t}\, dt} \right]  \nonumber\\
 &= & \frac{1}{S(\,c\,)} \left[c\, S(\,c\,)
    \quad+\quad  \int_{c}^{\infty} \left(\int_{c}^{t}dx\right)\, P(D= t)\, dt\right]  \nonumber\\
 &= &  c \,\,+\quad \frac{1} {S(\,c\,)}\left[\int_{c}^{\infty} \left(\int_{x}^{\infty}P(D= t)\, dt\right)dx\, \right]
     \label{eqn:Tonelli} \\
 &= & c \,\,+\quad \frac{\int_{c}^{\infty} S(\, x\,)\, dx} {S(\,c\,)}  \nonumber.
\end{eqnarray}
\end{proof}
\noindent Step~\ref{eqn:Tonelli} is an application of Tonelli's theorem \cite{saks1937theory},
which lets 
us swap the order of integration for a non-negative function.
As desired, this quantity, $E[\, D\,|\, D> c\,]$, is always at least $c$.
Moreover, when $c=0$, this is 
$$0 \ +\ \frac{\int_{0}^{\infty} S(\, t\,)\, dt} {1}
\quad=\quad
\int_{0}^{\infty} S(\, t\,)\, dt
\quad=\quad E[\, D\,]
$$
which is the expected value of the distribution for this survival curve
(and exactly the claim of the Theorem).

\subsubsection{Log L1-loss} 
\label{app:LogL1-loss}
\def\MrJ{J}
\def\MsS{S}

The L1-loss measure implicitly assumes that the quality of a prediction,
$\medi{j}$,
depends only on how close it is to the truth $\death{j}$-- 
\ie on $|\death{j} - \medi{j}|$.
But this does not always match how we think of the error:
if we predict Patient A will live for 120~months
then found that he actually lived 117~months, 
we would consider our prediction very accurate.
By contrast, if we predict Patient B
will live 1 month, but then find she lived 4 months,
we would consider this to be a poor prediction.
Notice, however,
the L1-loss for Patient~A
is 
$|\death{A} - \medi{A}|\ =\ |120 - 117| =\ $3 months,
which is the same as the L1-loss for Patient~B:
$|\death{B} - \medi{B}|\ =\ |1 - 4|\ =\ $3 months!

This motivates us to consider the {\em relative}\ error, 
rather than an {\em absolute}\ error:
here, as our prediction for Patient A
is off by only 3 / 120 = 2.5\%,
we consider it good, 
whereas our prediction for Patient B
is off by 3 / 1 = 300\%.
The Log-L1-loss reflects this:%
\footnote{
Note that the times mentioned in 
 ``Doc, do I have a day, a week, a month or a year?''
are basically in a log-scale.
}
\begin{equation}
\loss{LogL1}{\death{i}}{\medi{i}}\ =\ |\log(\death{i}) - \log(\medi{i})|
\label{eqn:LogL1-simple}
\end{equation}
To compute the average Log-L1-loss over the dataset $V_U$,
we can use Equation~\ref{eqn:L1Unc}
but using $\log(\death{j})$ rather than $\death{j}$, etc.
To avoid taking $\log 0$, 
we replace 0 with half the minimum, positive death time (see Section~\ref{sec:SubtlePts}).
Table~\ref{tab:LogL1} in
Appendix~\ref{app:L1LossEmpirical} provides
the results here, over the 8 datasets.
(We see that \MTLR\ is best for 4 of these datasets.)

\subsection{1-Calibration} 
\label{app:1-Calib}
To demonstrate the description from Section~\ref{sec:1-Calib},
consider the following example:
If there are $n=50$ patients, then 50/10 = 5 will be in each bin,
and the first bin $\b{1}$ will contain the 5 with lowest predicted probability values,
and the second bin $\b{2}$ will contain the next smallest 5 values,
and so forth
-- \eg
\begin{eqnarray*}
\b{1} & =& \{ 0.32,\ 0.34,\ 0.43,\ 0.43,\ 0.48 \}\\
\b{2} & =&\{ 0.55,\ 0.56,\ 0.61,\ 0.61,\ 0.72 \} \\
&\vdots&\\
\b{10} & =&\{ 0.85,\ 0.85,\ 0.86,\ 0.87,\ 0.87 \} \\
\end{eqnarray*}
Now consider the 5 patients who belong to $\b{1}$.
As the average of their probabilities is
$\frac{0.32 + 0.34 + 0.43 + 0.43 + 0.48}{5}\ =\ 0.4$, 
we should expect 40\% of these 5 individuals
to die in the next 5 years -- that is, 2 should die.
We can then compare this prediction ($0.40 \times 5 = 2$)
with the actual number of these $\b{1}$ patients who died.
We can similarly compare the number of $\b{2}$ patients who actually died to the number predicted 
(based on the average of these 5 probability values, 
which here is $0.61 \times 5 = 3.05$), 
and so forth.

In general, we say that the predictor is 
1-Calibrated 
if these $B$ predictions, for the $B = 10$ bins, 
are sufficiently close to the actual number of deaths with respect to these bins. 
Here, we use the Hosmer–Lemeshow statistical test
(given in Section~\ref{sec:1-Calib}) 
to see if the observed results were significant;
repeating Equation~\ref{eqn:hlstat}:
\begin{align*}
\reallywidehat{HL}\,(\,V_U,\ 
\estCP{}{\tzero}{\cdot}
\,) 
\quad  = \quad \sum_{j=1}^B 
\frac{(O_j - n_j\,\bar{p}_j)^2}{n_j\,\bar{p_j}\,(1-\bar{p_j})},
\end{align*}
where \(O_j\) is the number of observed events, \(n_j\) is the number of patients, \(\bar{p_j}\) is the average predicted probability, and subscript \(j\) refers to within the \(j\)th  of \(B\) bins. 

\subsubsection{Incorporating Censoring into 1-Calibration} 
Survival data typically contains some amount of censoring, making the exact number of deaths for the \(j\)th bin, \(O_j\), unobservable when the bin contains patients censored before \(\tzero\). 
That is, given a censored patient whose censoring time occurred before the time of interest (\(c_i < \tzero\)) the patient may or may not have died by \(\tzero\). 
There are many standard techniques for incorporating censoring~\cite{guffey2013hosmer};
we use the D'Agostino-Nam translation~\cite{dagnostionam}, 
which uses the \textit{within bin} Kaplan-Meier curve in place of \(O_j\). Specifically, the test statistic is given by, 
\begin{align}
\label{eqn:HosmerDag}
\reallywidehat{HL}_{DN}\,(\,V,\ 
\estCP{}{\tzero}{\cdot}
\,) 
\quad =\quad
\sum_{j=1}^B \frac{(\ n_j\ (1-KM_j(\tzero))\ -\ n_j\,\bar{p}_j)^2}{n_j\,\bar{p_j}\,(1-\bar{p_j})},
\end{align}
where \(KM_j(\tzero)\) is the height of the Kaplan-Meier curve
generated 
by the patients in the \(j\)th bin, evaluated at \(\tzero\). 
We use  \(1-KM_j(\tzero)\) as
we are predicting the {\em number of deaths}\ 
and not 
\(KM_j(\tzero)\) 
which instead gives the probability of \textit{survival} at \(\tzero\).
Note also 
that \(\reallywidehat{HL}_{DN}\) follows a \(\chi^2_{B-1}\) distribution,
as opposed to the \(\chi^2_{B-2}\) distribution 
for Equation~\ref{eqn:hlstat}.

\subsection{Brier Score Details} 
\label{app:brier}
This section supplements the description of the Brier score given in 
Section~\ref{sec:BrierScore}, discussing
(1)~the decomposition of the Brier score into calibration and discrimination components, (2)~the failure of the Integrated Brier score to incorporate the full distribution of probabilities in survival curves, and (3)~how to incorporate censoring into the Brier score.

\subsubsection{Brier Score Decomposition} 
\label{app:BS-decomp}
As mentioned in Section~\ref{sec:BrierScore}, the Brier score can be separated into calibration and discriminatory components. The original separations were the the work of Sanders \cite{sanders1963subjective} and Murphy \cite{murphy1972scalar,murphy1973new} and later put into the context of calibration and discrimination (also known as refinement) by DeGroot and Fineberg \cite{degroot1983comparison}. 

Recall the notation and mathematical expression of the Brier score for a set of uncensored instances, \(V_U\),
\begin{align*}
BS\left(\ 
\estCP{}{\tzero}{ \cdot}
,\ \{ \inst{i} \}\ \right)
\quad= \quad \frac{1}{|V_U|} \sum_{i \in V_U} \left(\Indic{d_i \leq \tzero} - \hat{S}(\tzero| \inst{i})\right)^2.
\end{align*}
To simplify notation, let \(p_i = 
\estCP{}{\tzero}{ \inst{i}}
\). 
The separation of the Brier score requires that a discrete, distinct number of predictions exist;
here, assume there are \(K\) distinct values for \(p_k\) for \(k = 1,\ldots K\). 

Further, let \(n_k\) be the total number of patients with \(p_k\)
as their prediction and 
hence \(|V_U| = \sum_{k=1}^K n_k\).
Finally, let \(\lambda_k\) be the observed proportion of patients 
who have died by \(\tzero\) and thus
\((1- \lambda_k)\) is the proportion still alive. 
The separation theorem of the Brier score states that \(BS = C + D\), where \(C\) and \(D\) are nonnegative calibration and discriminatory scores where
\begin{eqnarray}
C & =& \frac{1}{|V_U|} \sum_{k=1}^K n_k(\lambda_k - p_k)^2
 \label{eqn:calibration}
 \\
D &=& \frac{1}{|V_U|} \sum_{k=1}^K n_k\lambda_k(1 - \lambda_k).
\label{eqn:discrimination}
\end{eqnarray}

Note the calibration score, \(C\), is nearly equivalent 
(up to a factor of \(n_k\)) to the numerator of the Hosmer-Lemeshow test (Equation~\ref{eqn:hlstat}). 
However, the Hosmer-Lemeshow test subscript refers to bins whereas here the subscript refers to a distinct value of \(p_k\). 
One can see that \(C\) represents a calibration score as the estimated probabilities, \(p_k\), must be close to the true proportion of deaths,
\(\lambda_k\) in order to have a small score (lower is better). 
In fact, to satisfy \(C=0\), all predictions, \(p_k\) must be equal to \(\lambda_k\) (Equation~\ref{eqn:calibration}). 

There are also similarities between \(D\) and the denominator of the Hosmer-Lemeshow test. However, note Equation~\ref{eqn:discrimination} uses the the true proportion of deaths \(\lambda_k\),
whereas the Hosmer-Lemeshow test uses an estimated value, \(\bar{p}\). 
\comment{
\note[BH]{Doesn't the HL test only use an estimate when you incorporate censored data?}\note[HH]{No... \(\bar{p}\) is based on survival probabilities whereas \(\lambda\) is the true survival probabilities, i.e. this would be equivalent to \(O_i\)}
}
Note that \(D\) has
a ``good'' (low) score 
if all patients associated with a prediction probability \(p_k\)
have the same status 
-- \ie they either all die or are all still alive. 
 \comment{
Here, \(D\) can be viewed as a discriminatory measure 
since,
for every unique prediction probability \(p_k\), 
a ``good'' (low) score occurs if patients have a similar event
-- \ie they either all die or are all still alive. 
}
To understand why this means $D$ is a discriminatory measure,
consider 
the extreme case where \(BS(\cdot,\cdot) = 0\), which means both $D=0$ and $C=0$. 
For \(D = 0\), all patients associated with each probability value
must either be dead by \(\tzero\) or all be alive at \(\tzero\)
-- \ie \(\lambda_k \in \{0,1\}\) for \(k = 1,2\);
note only $K=2$ is possible here. 
In turn, for \(C = 0\), we require \(p_k = \lambda_k\) for \(k = 1 ,2\), 
that is \(p_k \in \{0,1\}\)
-- 
all predictions will be 1 or 0. 
Here we are discriminating perfectly between the patients who have died and the patients who are still alive,
with a model that predicts only 1's or 0's.
Of course, we should not require a model to 
estimate survival probabilities
to be precisely 1 or 0,
for the same reason that we do not expect 
the learned distribution 
to correspond to the Heaviside distribution 
shown in Figure~\ref{fig:L1=0}.

\subsubsection{Integrated Brier score does not involve 
the Entire Distribution} 
\label{app:IBSWeakness}
At the beginning of Section~\ref{sec:d-calibration},
we claimed the Integrated Brier score (IBS) does not 
utilize 
the survival curves' full distribution of probabilities over all times.
For example, on a \KM\ curve, 
we expect that 10\% of patients will die in every 10\% interval
-- \eg 10\% of all patients will die in the [0.5, 0.6) interval. 
While D-Calibration will 
debit 
a model that fails to do this,
this Integrated Brier score 
{does not require this}. 
The most obvious example 
is the perfect model,
where each patient is given the appropriate Heaviside distribution
(Figure~\ref{fig:L1=0}) at his/her time-of-death:
here the only probabilities are \{0,1\}
-- here  IBS\((\cdot, \cdot) = 0\), even though no patient's 
$\estCP{Heaviside}{\death{i}}{\inst{i}}$ is ever in [0.5, 0.6).
However, as we have previously noted, 
the inherent stochasticity of the world means
{ that 
meaningful distributions should include non-zero probabilities in other places as well,
rather than placing all weight on a single time point.}

Since the Integrated Brier score fails to account for this, 
there is no guarantee that probabilities are meaningful across individual survival curves.
This motivated us to introduce D-Calibration,
to determine whether a proposed \ISD-model produces meaningful distributions,
with probabilities that reflect the number of deaths that have occurred in the population.
\comment{
For this reason we have introduced  D-Calibration
which ensures that models build meaningful distributions
--  \ie whose probabilities reflect the number of deaths that have occurred in the population.
}
To see that these two metrics are 
measuring different aspects, 
note that 
the Integrated Brier scores for the (\AFT, \CoxKP, \CoxENKP, and \MTLR) models 
are all well within 1 standard error of one another for the GBM dataset,
but only \CoxENKP\ and \MTLR\ are D-Calibrated.
(This is also true for the GLI dataset.)

\subsubsection{Incorporating Censoring into the Brier score}
\label{sec:BS-censor}
In 1999, Graf~\etal~\cite{graf1999assessment}
proposed a 
way to compute the Brier Score for censored data, by 
 using \textit{inverse probability of censoring weights} (IPCW), 
which requires estimating the censoring survival function,
denoted as \(\hat{G}(t)\) over time points $t$. 
We can estimate \(\hat{G}(t)\) by the Kaplan-Meier curve of the \textit{censoring distribution}
-- \ie swapping those who are censored with those who are not, 
(\(\delta^{Cens}_i = 1 - \delta_i\)) and 
building the standard Kaplan-Meier model. 
Intuitively, this IPCW weighting counteracts the sparsity of later observations -- 
if a patient dies early, there is a good chance that $d_i < c_i$ 
meaning the event is observed, 
but if the patient survives for a long time,
it becomes more likely that $c_i < d_i$ meaning this patient will be censored.
Gerds~\etal~\cite{gerds2008performance, gerds2006consistent}
formalizes and proves this intuition. 

The censored version of the Brier score for a given time, \(t^*\), is calculated as 
{\footnotesize  
\begin{align}
\label{eqn:censoredBrier}
BS_{\tzero}\left(\, V, \ \hat{S}(\tzero| \cdot)\,\right)
\,\,=\,\,
\frac{1}{|V|} \sum_{i = 1}^{|V|} 
\left[\frac{\Indic{t_i \leq \tzero, \delta_i = 1}
\left(0 - \hat{S}(\tzero| \inst{i})\right)^2}{\hat{G}(t_i)}  + \frac{\Indic{t_i > \tzero}\left(1 - \hat{S}(\tzero| \inst{i})\right)^2}{\hat{G}(\tzero)}\right],
\end{align}
}

\noindent
where \(t_i = \min\{d_i, c_i\}\). 
The first part of  Equation~\ref{eqn:censoredBrier}
considers only uncensored patients whereas the second part counts all patients whose event time is greater than \(\tzero\). 
The patients who were censored \textit{prior} to \(\tzero\) 
are not explicitly included,
but contribute based on their influence in 
$\hat{G}(\cdot)$. 

As \(\hat{G}(t)\) is a decreasing step 
function of $t$, 
\(\frac{1}{\hat{G}(t)}\) is increasing,
which means 
that patients who survive longer than \(\tzero\) have larger weights than patients that died earlier, 
since the longer surviving patients were more likely to become censored.
\comment{
\note[BH]{I think it would be useful to point out that G is decreasing, so 1/G is increasing and observations of survival beyond \(\tzero\) are thus given higher weight (because patients are less likely to have reached that point without being censored)}\note[HH]{I added a blurb above... is this what you were trying to get at?}
}

\subsection{D-Calibration} 
\label{app:D-Calib}

\def\tSF#1#2{S(\, #1\,|\, \inst{#2}\,)}
\def\SFinv#1#2{S^{-1}(\, #1\, |\, \inst{#2}\,)}
\def\eSF#1#2#3{\hat{S}(\, #1\,|\, \inst{#2},\,#3\,)}
\def\meas#1{m_{#1}}
\def\eprobAA#1#2#3{\hat{g}_{#1}(\,#2\,)} 
\def\eprob#1#2{\eprobAA{}{#1}{#2}}
\def\eprobP#1#2{\eprobAA{+}{#1}{#2}}
\def\eprobN#1#2{\eprobAA{-}{#1}{#2}}
\def\tprob#1{g_{-}(\,#1\,)}
\def\tprobC#1{g_{+}(\,#1\,)}
\def\pr#1{P(\,#1\,)}
\def\cpr#1#2{\pr{#1\,|\,#2}}

We begin this section by justifying why, 
in the case of all uncensored patients, 
(1)~the distribution of the survival function, \(\{S(t)\}_t\), should follow a uniform distribution,
then (2)~%
Following this discussion, we show 
how to incorporate censored patients into the D-Calibration estimate,
and finally,
(3)~that 
this combination of censored and uncensored patients will produce a uniform distribution for the goodness-of-fit test to test against.

For this analysis,
we assume each patient $\inst{i}$ has a true survival function, 
$\tSF{t}{i}$,
which is the probability that this patient will die after time $t$.
Assume each patient has a time of death, \(d_i\) and a censoring time, \(c_i\), and \(t_i = \min{\{d_i, c_i\}}\) is the observed event time. 
We also 
assume 
that censoring time is independent of death time, \(c_i \perp d_i\). 
Given a validation set \(|V|\),
we first examine the case of all uncensored patients
-- \ie 
\(t_i = d_i\) for \(i = 1,\ldots, |V|\).

\begin{lemma} 
The distribution of a patient's survival probability at the time of death
$\Csurv{\death{i}}{\inst{i}}$
is uniformly distributed on [0,1].
\comment{
Consider the distribution of probabilities at patients' event times, 
$\{\Csurv{t_i}{\inst{i}}\}_i$.
\note[RG]{Why not $d_i$? ... then don't have to mention uncensored ..}
If all \(\inst{i}\) were uncensored 
($t_i = d_i\ \forall i$), 
then these event-probability distributions will be uniformly distributed on [0,1].}
\label{lem:Uniform}
\end{lemma}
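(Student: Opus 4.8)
The plan is to prove this using the probability integral transform. First I would fix a single patient by conditioning on the covariate vector $\inst{i}$, and treat the death time $\death{i}$ as a random variable drawn from the true distribution whose survival function is $\tSF{\cdot}{i}$. Assuming this survival function is continuous and strictly decreasing on its support, I would introduce the transformed random variable $Y = \Csurv{\death{i}}{\inst{i}}$ and compute its CDF directly. For any $u \in [0,1]$, because $\tSF{\cdot}{i}$ is decreasing, the event $\{\,\Csurv{\death{i}}{\inst{i}} \le u\,\}$ coincides with $\{\,\death{i} \ge \SFinv{u}{i}\,\}$; and since $\death{i}$ is continuous (no atoms), $P(\death{i} \ge \SFinv{u}{i}) = \tSF{\SFinv{u}{i}}{i} = u$. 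Hence $P(Y \le u) = u$ for every $u\in[0,1]$, which is exactly the CDF of the uniform distribution on $[0,1]$, establishing the conditional claim.

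Second, I would lift this conditional statement to an unconditional one over the whole population. Since the computation above holds for \emph{every} fixed value of $\inst{i}$, the law of total probability (integrating the conditional CDF against the marginal distribution of the covariates) yields $P(\Csurv{\death{i}}{\inst{i}} \le u) = u$ for all $u$, so the survival-probability-at-death is marginally uniform. This is the formal backbone of the D-Calibration intuition given earlier: if $\Theta$ recovers the true curves $\tSF{\cdot}{i}$, then each of the ten decile bins receives, in expectation, exactly $10\%$ of the uncensored patients.

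The main obstacle I would need to handle carefully is the regularity of the survival function. The clean equivalence $\{\Csurv{\death{i}}{\inst{i}} \le u\} = \{\death{i} \ge \SFinv{u}{i}\}$ together with the identity $P(\death{i} > \SFinv{u}{i}) = u$ both rely on $\tSF{\cdot}{i}$ being continuous (so there are no point masses contributing an atom to the distribution of $Y$) and strictly monotone (so that the inverse $\SFinv{\cdot}{i}$ is genuinely well-defined). If the true distribution were to have flat segments or jumps, I would instead work with a generalized inverse and argue the conclusion persists up to a set of measure zero; but since the \emph{true} underlying survival distribution is standardly assumed continuous, I would simply state this continuity assumption explicitly at the start of the proof and proceed with the monotone inverse. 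I expect everything else to be routine.
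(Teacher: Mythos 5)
Your proposal is correct and rests on the same idea as the paper's own proof, namely the probability integral transform: the paper simply cites that $F(D)\sim U(0,1)$ and notes $S=1-F$, whereas you re-derive the transform from the strict monotonicity and continuity of $S(\cdot\,|\,\inst{i})$ and then marginalize over the covariates. Your explicit statement of the continuity/strict-monotonicity assumption and the lift from the conditional to the unconditional claim are welcome refinements, but the argument is essentially identical.
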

\begin{proof}
The probability integral transform~\cite{angus1994probability} states that,
for any random continuous variable, \(X\), 
with cumulative distribution function given by \(F_x(\cdot)\), 
the random variable \(Y = F_x(X)\) will follow a uniform distribution on [0,1], denoted as \(U(0,1)\).
Thus, given randomly sampled event times, \(t\), we have \(F(t) \sim U(0,1)\). 
As the survival function is simply \(S(t) = 1 - F(t)\),
its distribution is \(1 - U(0,1)\),
which also follows \(U(0,1)\) 
and hence \(S(t) \sim U(0,1)\). 
\comment{
\note[BH]{It's unclear to me why the concept of identical patients is introduced here; and if they're necessary here, it's unclear why they aren't needed for Case 2}\note[HH]{Case 2 is suggesting that they don't need to be identical -- each patient will have its own survival curve, each of which will be uniform. Then we are taking a mixture of uniforms.}

{\em Case 2}: 
In general, 
each patient $\inst{i}$ will have his/her own survival function, $\tSF{\cdot}{i}$.
Case~1 showed that for each patient $i$, individual survival functions for event times will each follow a uniform distribution,
\(\tSF{t_i}{i} \sim U(0,1)\).
Now assume each patient $i$ occurs with probability $\pr{i}$, such that \(\sum_{i=1}^{|V|} \pr{i} = 1\). This means the distribution over these event times will be a weighted, finite mixture of uniforms.
One can then make the following observation that for any \(\alpha \in [0,1]\),
\note[RG]{Does this mean for ALL $i$'s? Or for A RANDOMLY-DRAWN $i$? Or ...}
\begin{eqnarray*}
\pr{\{\tSF{t_i}{i}\}_i > \alpha}
    & =&
    \sum_{i =1}^{|V|} \pr{
\tSF{t_i}{i}
    > \alpha\ } \ \pr{i}
     \nonumber\\
 &=& (1 -\ \alpha) \times \sum_{i = 1}^{|V|} \pr{i}\\
  &=&(1 -\ \alpha)
\end{eqnarray*}
Note that 
\( \pr{\dots \alpha}\ =\ (1 - \alpha\)) is the survival function for the \(U(0,1)\) distribution and 
since survival functions uniquely define distributions,
we have that \(\{\tSF{t_i}{i})\}_i \sim U(0,1)\)\ .
}
\end{proof}

This Lemma 
shows that,
given the true survival model, producing 
\( \tSF{\cdot}{i} 
\) curves, 
the distribution of $\tSF{\death{i}}{i}$
should be uniform over event times. 
Thus if a learned model accurately learns the true survival function,
\(\estCP{\Theta}{\cdot}{\cdot} \approx S(\cdot|\cdot)\), 
we will expect the distribution across event times to be uniform. 
This is then tested using the goodness-of-fit test assuming each bin contains an equal proportions of patients. 

Of course,
conditions become more complicated when considering censored patients. 
Suppose we have a censored patient
-- \ie \(t_i  = c_i\) --
such that $\Csurv{\censor{i}}{\inst{i}} = 0.25$. 
Since the censoring time is a lower bound on the true death time,
we know that \(\Csurv{\death{i}}{\inst{i}} \leq 0.25\), 
since \(c_i < d_i\) and survival functions are monotonically decreasing as event time increases. 
If we are using 
deciles,
we would like to know the probability that the time of death occurred in the [0.2,0.3) bin
-- \ie \(P\left(\ S(d_i | \inst{i}) \in [0.2,0.3) \ |\ S(d_i| \inst{i}) \leq 0.25\right)\). 
Using the rules of conditional probability,
this is computationally straightforward%
\footnote{To simplify notation, we drop the conditioning on \(\inst{i}\) of \(S(\cdot|\cdot)\).}:

\def\pPR#1{P(\, #1\,)}
\def\pCPR#1#2{\pPR{#1\, |\, #2}}

\begin{align*}
\pCPR{S(d_i) \in [0.2,0.3) }{ S(d_i) < 0.25 } \quad&=\quad 
 \frac{\pPR{S(d_i) \in [0.2,0.3), \, S(d_i) < 0.25}}
  {\pPR{S(d_i) < 0.25 }}\\[8pt]
&=\quad\frac{P(S(d_i) \in [0.2,0.25))}{P(S(d_i) < 0.25)}\\[8pt]
&=\quad\frac{0.05}{0.25} \tag{as \(S(\cdot) \sim U(0,1)\)}\\[6pt]
&=\quad 0.2
\end{align*}
Similarly, we can use the same logic as above to compute
these probabilities for the other two bins, $[0.1, 0.2)$ and $[0.0, 0.1)$:

\begin{align*}
\pCPR{S(d_i) \in [0.1,0.2) }{ S(d_i) < 0.25} 
\quad&= \quad \frac{P(S(d_i) \in [0.1,0.2), \, S(d_i) < 0.25)}{P(S(d_i) < 0.25)}\\[8pt]
&=\quad\frac{P(S(d_i) \in [0.1,0.2))}{P(S(d_i) < 0.25)}\\[8pt]
&=\quad\frac{0.1}{0.25} \tag{as \(S(\cdot) \sim U(0,1)\)}\\[6pt]
&=\quad 0.4
\end{align*}
and similarly for the \([0.0, 0.1)\) bin. Note that these probabilities sum to one, \((0.2 + 0.4 + 0.4) = 1\), as desired. 

This example 
motivates the following procedure to incorporate censored patients into
the D-Calibration process: 
Given \(B\) bins that equally divide [0,1] into 
intervals of width \(BW = 1/B\), 
suppose a patient is censored at time \(c\) with
associated 
survival probability \(S(c)\). 
Let \(b_1\) be the infimum probability of the bin 
that contains 
\(S(c)\) -- 
\eg 0.2 for the example above where \(S(c_i) = 0.25 \in [0.2, 0.3)\).
Then we assign the following weights to bins:
\begin{enumerate}[label=(\Alph*)]
\item Bin \([b_1, b_2)\) (which contains \(S(c)\)): \(\frac{S(c) - b_1}{S(c)} = 1 - \frac{b_1}{S(c)}\)
\item All following bins
(\ie the bins whose survival probabilities are all less than \(b_1\)):
\(\frac{BW}{S(c)} = \frac{1}{B \cdot S(c)}\),
\end{enumerate}

Note this formulation follow directly from the example above.
This weight assignment effectively ``blurs'' censored patients across the bins following the bin 
where 
the patient's learned survival curve,
\( \estCP{\Theta}{c_i}{i} \) 
placed the censored patient.

To further illustrate this concept of blurring a patient across bins, 
consider a patient who is censored at \(t=0\) with \(S(c_i) = 1\). 
This patient is then blurred across all \((B=10)\) bins, adding a weight of 0.1 to all 10 bins. 
Alternatively, if a patient is censored very late, with \(S(c_i) \leq 0.1\) 
then the patient is not blurred at all -- 
only a weight of 1 is added to the last bin.
\comment{Note the weakness of D-Calibration here -- if all patients are censored at \(t=0\) then the model will be D-Calibrated (all bins contain exactly one tenth of all the patients) even though the model need not have learned anything. 
\note[BH]{How could any model learn anything in this degenerate case?  A more useful example might be a case where you have a mix of labelled data (with meaningful event/censored times) and unlabelled data (all censored at 0) - if the unlabelled dataset is very large (eg. 90\% of the overall data), then your bins will look quite balanced no matter how the labelled data is split.}
\note[HH]{This has tended to be Russ' favorite counter example to D-calibration so I included it. What is the definition of ''unlabelled'' here? We should discuss in the meeting how to change to a "better" worst case for D-Calibration.}}


This identifies a weakness of D-Calibration:
if a validation set contains \(N_0\) patients censored at time 0,
then all bins are given an equal weight of \(N_0/B\); 
if \(N_0\) is large relative to the total number of patients, 
then the bins may appear uniform,
no matter how the other 
patients are distributed, 
which means any model based on such heavily ``time 0 censored'' data
would be considered to be D-Calibrated.

To perform the goodness-of-fit test, 
we must first calculate the observed proportion of patients within each bin. 
Let \(N_k\) represent the observed proportion of patients within the interval \([p_k, p_{k+1})\)
--\eg \([p_k, p_{k+1}) = [0.2,0.3)\) in the example above. 
We can formally calculate: 
\comment{\begin{align*}
N_k\quad =\quad \frac{1}{|V|}\sum_{i = 1}^{|V|}\, \bigg[
   \quad & \Indic{S(d_i) \in [p_k, p_{k+1}) \,\wedge\, d_i \leq c_i}  \tag{Uncensored} \\
\qquad +  \,\,&   \frac{S(c_i)-p_k}{S(c_i)} \,\cdot\, \Indic{S(c_i) \in [p_k, p_{k+1}) \ \wedge\  c_i < d_i} \tag{1} \\
\qquad +\,\, &  \frac{(p_{k+1} - p_k)}{S(c_i)} \,\cdot\, \Indic{S(c_i) \geq p_{k+1} \ \wedge\  c_i < d_i} \bigg].\tag{2} 
\end{align*}
}
{\begin{eqnarray}
N_k\quad =\quad \frac{1}{|V|}\sum_{i = 1}^{|V|}\, \bigg[
  & & \Indic{S(d_i) \in [p_k, p_{k+1}) \,\wedge\, d_i \leq c_i}  \label{eqn:Uncensored} 
   \\
&+  &   \frac{S(c_i)-p_k}{S(c_i)} \,\cdot\, \Indic{S(c_i) \in [p_k, p_{k+1}) \ \wedge\  c_i < d_i} \label{eqn:1} 
\\
 & + &  \frac{(p_{k+1} - p_k)}{S(c_i)} \,\cdot\, \Indic{S(c_i) \geq p_{k+1} \ \wedge\  c_i < d_i} \bigg]. \label{eqn:2} 
\end{eqnarray}
}

\noindent Above, 
(\ref{eqn:Uncensored}) 
refers to the weight that the patients with observed events 
contribute to 
the \(k^\textrm{th}\) bin 
-- \ie each uncensored patient whose survival probability at time of death lands in \([p_k, p_{k+1})\) contribute a value of $1$. 
Here, we consider  \(d_i = c_i\)  
to be an uncensored event. 
Next, (\ref{eqn:1}) 
gives the weight from the censored patients whose survival probability at time of censoring is within the \(k^\textrm{th}\) bin (item (A) above). Lastly, (\ref{eqn:2}) 
gives the weights from censored patients whose survival probability was contained in a previous bin (item (B) above). 

Theorem~\ref{thm:DCalGOF} below proves that
the expected value of \(N_k\) is equal for all bins
-- \ie \(\mathbb{E}[N_k] = p_{k+1} - p_k\)
-- which allows us to apply the goodness-of-fit test with uniform proportions.
\comment{
To apply the goodness-of-fit test with uniform proportions,
we need 
to show that the expected value of \(N_k\) is equal for all bins
-- \ie \(\mathbb{E}[N_k] = p_{k+1} - p_k\).
The proof can be found below,
but
}

\comment{
The intuition here is that if a 
survival curve contains a large flat area then there is non-zero probability mass for \( S(c_i) = S(d_i) \) when \( c_i \neq d_i \). 
If this is the case then terms in the proof below will fail to cancel with one another leaving us with non-equivalent proportions within each bin (specifically higher proportions within bins which contain these flat lines).
}

We assume that all survival curves are \textit{strictly} monotonically decreasing meaning we have the equality,
$\death{i} \leq \censor{i}\ \iff\ S(\death{i}) \geq S(\censor{i})$).
This equivalence lets us replace  $\death{i} \leq \censor{i}$
with $S(\death{i}) \geq S(\censor{i})$,
within the indicator functions in \(N_k\). 
\comment{ Using this equality we make this replacement
($S(\death{i}) \geq S(\censor{i})$ instead of $\death{i} \leq \censor{i} $) 
within the indicator functions in \(N_k\). }%
To simplify notation, we 
define \( I_k := [p_k, p_{k+1})\), 
\(S_c := \tSF{c}{} 
\), and
\(S_d:=  \tSF{d}{}
\). 
{The proof below shows that }
the expected value of the 
summand
within Equations~(\ref{eqn:Uncensored}) -- (\ref{eqn:2})
above
is equal to \(p_{k+1} - p_k\) -- 
\ie we ignore \(\frac{1}{|V|}\sum_{i = 1}^{|V|}[\cdot]\) and take the expected value of the 
term inside the summation.

\begin{thm}
\label{thm:DCalGOF}
Given the formula for \(N_k\) 
(Equations (\ref{eqn:Uncensored}) - (\ref{eqn:2})),
if the  true survival function \(S(\cdot | \cdot)\)
is strictly monotonically decreasing 
then proportions are equal across all bins
-- \ie \(\mathbb{E}[N_k] = p_{k+1} - p_k\).

\end{thm}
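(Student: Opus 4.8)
The plan is to combine the probability integral transform with a conditioning argument on a single patient; as noted just before the theorem, it suffices to show that the expectation of the bracketed summand in (\ref{eqn:Uncensored})--(\ref{eqn:2}) equals $p_{k+1}-p_k$, since $N_k$ is merely its empirical average. First I would invoke strict monotonicity of $S(\cdot\,|\,\inst{})$ to rewrite all three indicators purely in terms of $S_d$ and $S_c$: since a strictly decreasing $S$ gives $\death{}\le\censor{}\iff S_d\ge S_c$ (and $\censor{}<\death{}\iff S_c>S_d$), the uncensored indicator becomes $\Indic{S_d\in I_k\,\wedge\,S_d\ge S_c}$, while the two censored indicators are gated by $S_c>S_d$. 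This substitution is exactly where the strict-monotonicity hypothesis is needed: a flat segment would allow $S_d=S_c$ with $\death{}\ne\censor{}$ and break the equivalence.

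Next, Lemma~\ref{lem:Uniform} gives $S_d\sim U(0,1)$, and since $\death{}\perp\censor{}$ forces $S_d\perp S_c$, conditioning on $S_c=s$ leaves $S_d\sim U(0,1)$. I would then evaluate the conditional expectation of each term given $S_c=s$, using $P(S_d<s)=s$ and $P(S_d\ge s)=1-s$. Term (\ref{eqn:Uncensored}) contributes $P(S_d\in I_k,\,S_d\ge s)$; term (\ref{eqn:1}) contributes $\frac{s-p_k}{s}\,\Indic{s\in I_k}\cdot s=(s-p_k)\Indic{s\in I_k}$; and term (\ref{eqn:2}) contributes $\frac{p_{k+1}-p_k}{s}\,\Indic{s\ge p_{k+1}}\cdot s=(p_{k+1}-p_k)\Indic{s\ge p_{k+1}}$. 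The factor of $s$ cancels in the latter two precisely because of the $1/S(c)$ normalization built into the blurring weights.

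The crux is a three-way case analysis on the position of $s$ relative to the bin $I_k=[p_k,p_{k+1})$. If $s\ge p_{k+1}$, only term (\ref{eqn:2}) is nonzero and equals $p_{k+1}-p_k$. If $s\in I_k$, term (\ref{eqn:1}) gives $s-p_k$ while term (\ref{eqn:Uncensored}) gives $P(S_d\in[s,p_{k+1}))=p_{k+1}-s$, summing to $p_{k+1}-p_k$. If $s<p_k$, only term (\ref{eqn:Uncensored}) survives, giving $P(S_d\in I_k)=p_{k+1}-p_k$. Thus the conditional expectation is $p_{k+1}-p_k$ regardless of $s$, so averaging over $S_c$ (and hence over patients) delivers $\mathbb{E}[N_k]=p_{k+1}-p_k$.

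The main obstacle is not any individual computation but recognizing why the cases telescope to fill the bin exactly: the blurring weights in (\ref{eqn:1}) and (\ref{eqn:2}) are calibrated so that, when the cut point $s$ lands inside $I_k$, the censored contribution over $[p_k,s)$ exactly replaces the uncensored mass $P(S_d\in[p_k,s),\,S_d\ge s)$ that the uncensored indicator drops. I would be most careful in the middle case $s\in I_k$, where an uncensored and a censored contribution are simultaneously active, to confirm that $s$ splits the bin cleanly into $[p_k,s)$ and $[s,p_{k+1})$ with neither double counting nor gap.
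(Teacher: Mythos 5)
Your proof is correct, but it is organized quite differently from the paper's. The paper proceeds unconditionally: it expands $\mathbb{E}[N_k]$ into three lines, recombines the two probabilities in the first line by decomposing event spaces (splitting $\Pr[S_c\in I_k \wedge S_c>S_d]$ according to whether $S_d<p_k$ or $S_d\in I_k$, then merging with the uncensored term), and evaluates the $\mathbb{E}\bigl[\tfrac{1}{S_c}\Indic{\cdot}\bigr]$ terms by writing out double integrals against the unknown density $f(S_c)$ of the censoring-probability variable, after which everything cancels algebraically. You instead condition on $S_c=s$ and show the conditional expectation equals $p_{k+1}-p_k$ for \emph{every} $s$, via a three-way case analysis on where $s$ sits relative to $I_k$; the $1/S_c$ normalization cancels immediately against $P(S_d<s)=s$, so the density of $S_c$ never appears. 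Both arguments rest on the same three ingredients (strict monotonicity to translate $d\le c$ into $S_d\ge S_c$, the probability integral transform giving $S_d\sim U(0,1)$, and $c\perp d$), but your conditioning argument is cleaner and proves something slightly stronger -- the bin proportions are correct conditionally on each censoring level, not merely on average -- and it localizes the key cancellation to the single case $s\in I_k$, where the censored mass $s-p_k$ over $[p_k,s)$ exactly complements the uncensored mass $p_{k+1}-s$ over $[s,p_{k+1})$. The paper's version, by contrast, makes the role of the (arbitrary) censoring distribution explicit, at the cost of heavier bookkeeping. Your treatment of the tie case $d_i=c_i$ matches the paper's convention and is measure-zero under continuity, so no gap there.
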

\begin{proof}

\begin{align*}
\mathbb{E}[N_k] =\,& \mathbb{E}\bigg[ \, \Indic{S_d \in I_k \,\wedge\, S_d \geq S_c} \\
&\quad +  \frac{S_c-p_k}{S_c}\cdot \Indic{S_c \in I_k \, \,\wedge\,  \, S_c > S_d} \\
&\quad + \frac{(p_{k+1} - p_k)}{S_c} \cdot \Indic{S_c > S_d \,\wedge\, S_c \in [p_{k+1},1]} \bigg] \\ \\
=\,& \quad \mathbb{E}\big[\,\Indic{S_d \in I_k \,\,\wedge\,\, S_d \geq S_c}\,\big] \\
&+  \mathbb{E}\left[\frac{S_c-p_k}{S_c}\cdot \Indic{S_c \in I_k \, \,\wedge\,  \, S_c > S_d}\right] \\
&+ \mathbb{E}\left[\frac{(p_{k+1} - p_k)}{S_c} \cdot \Indic{S_c > S_d \,\wedge\, S_c \geq p_{k+1}} \right] \\ \\
=\,&\quad \Pr[\,S_d \in I_k \,\,\wedge\,\, S_d \geq S_c\,] \\ 
&+ \Pr[S_c \in I_k \, \,\wedge\,  \, S_c > S_d] \, -p_k\,\mathbb{E}\left[ \frac{1}{S_c} \cdot \Indic{S_c > S_d \,\wedge\, S_c \in I_k}\right] \\ 
&+ (p_{k+1} - p_k)\mathbb{E}\left[\frac{1}{S_c} \cdot \Indic{S_c > S_d \,\wedge\, S_c \geq p_{k+1}} \right]\\\\
%
%
=\,& \Pr[S_d \in I_k \,\,\wedge\,\, S_d \geq S_c]
\quad+\quad  \Pr[S_c \in I_k \, \,\wedge\,  \, S_c > S_d] \, \tag{I}\\
&-p_k\,\mathbb{E}\left[ \frac{1}{S_c} \cdot \Indic{S_c > S_d \,\wedge\, S_c  \geq p_k}\right] \tag{II} \\ 
&+p_{k+1}\mathbb{E}\left[\frac{1}{S_c} \cdot \Indic{S_c > S_d \,\wedge\, S_c \geq p_{k+1}} \right] \tag{III}\\
\end{align*}
Focusing on the second probability in
line~(I), 
note \(S_c \in I_k = [p_k, p_{k+1})\) and \(S_c > S_d\) imply that \(S_d \in [0,p_{k+1})\) which can be expanded to the cases for \(S_d < p_k\) and \(S_d \in I_k\). 
Using this, we reformulate the probability by noting the equivalence of the event space,
\[
\Pr[S_c \in I_k \,\,\wedge\,\, S_c > S_d]\,\, = \,\,\Pr[S_c \in I_k \,\,\wedge\,\, S_d < p_k]\,\, +\,\, \Pr[(S_c \,\wedge\, S_d) \in I_k \,\,\wedge\,\, S_c > S_d].
\]
Combining the second piece above with the first probability
in line~(I), 
we again simplify by noting these probabilities bound \(S_c < p_{k+1}\),
\[\Pr[S_d  \in I_k \,\,\wedge\,\, S_d \geq S_c] \,+\, \Pr[(S_c \,\wedge\, S_d) \in I_k \,\,\wedge\,\, S_c > S_d]\,\, = \,\,\Pr[S_d  \in I_k \,\,\wedge\,\, S_c < p_{k+1} ].\]
Using this simplification we can rewrite the entirety of
line~(1), 
$$\begin{array}{cccc}
&  \Pr[\ S_d \in I_k \,\,\wedge\,\, S_d \geq S_c\ ] 
      &+&\Pr[\ S_c \in I_k \,\,\wedge\,\, S_c > S_d\ ] \\
= &\Pr[\ S_d \in I_k \,\,\wedge\,\, S_c < p_{k+1}\ ] 
 &+&\Pr[\ S_c \in I_k \,\,\wedge\,\, S_d < p_k\ ]
\end{array}
$$
\comment{\begin{align*}
\Pr[S_d \in I_k \,\,\wedge\,\, S_d \geq S_c]\,+\,\Pr[S_c \in I_k \,\,\wedge\,\, S_c > S_d] \,\,=\,\,  &\Pr[S_d \in I_k \,\,\wedge\,\, S_c < p_{k+1}] \\
 \,\,+\,\,&\Pr[S_c \in I_k \,\,\wedge\,\, S_d < p_k]
 \end{align*}}
Recalling the independence assumption, \(c \perp d\), we have the following equalities:
\begin{align*}
&\Pr[S_d \in I_k \,\,\wedge\,\, S_c < p_{k+1}] \, &=& \,\, \Pr[S_d \in I_k] \cdot \Pr[S_c < p_{k+1}] \, &=& \,\, (p_{k+1}-p_k) \, \Pr[S_c < p_{k+1}]  , \\  
&\Pr[S_c \in I_k \,\,\wedge\,\, S_d < p_k] \, &=& \,\, \Pr[S_c \in I_k] \cdot \Pr[S_d < p_k] \, &=& \,\,\, p_k \, \Pr[S_c \in I_k] ,
\end{align*}
where the final equalities are due to the uniformity of the survival function on \(d\), \(S(d) \sim U(0,1)\).
This then leaves the final simplification of 
line~(I) 
as,
\begin{align*}
\Pr[S_d \in I_k \,\,\wedge\,\, S_d \geq S_c]+\Pr[S_c \in I_k \,\,\wedge\,\, S_c > S_d] \,\, =\,&  (p_{k+1} - p_k)\, \Pr[S_c < p_{k+1}]\\
&+\,\, p_k \, \Pr[S_c \in I_k].
\end{align*}

Now we address line~(II) 
and analagously line~(III): 
\begin{align*}
-p_k\,\mathbb{E}\left[ \frac{1}{S_c} \cdot \Indic{S_c > S_d \,\wedge\, S_c  > p_k}\right] 
& = -p_k\,\left(\,\int_{p_k}^1\int_0^{S_c}\frac{1}{S_c} \, f(S_c) \,dS_d\,dS_c  \right) \tag{Def. of \(\mathbb{E[\cdot]}\) }\\\\%
& = -p_k\,\left( 
\int_{p_k}^1\frac{S_c}{S_c} \, f(S_c) \,dS_c \right) \\\\
%
%
%
%
&=-p_k\,\Pr[S_c > p_k] \\
\end{align*}

Here \(f\) is the probability distribution function (PDF) for the distribution generated by the survival function applied to a \textit{censored} observation. 
As the censoring distribution is unknown \(f(S_c)\) is also unknown whereas \(f(S_d)\) would be the PDF of the uniform distribution. 

Following the steps above for 
line~(III) 
analogously gives us
\[
p_{k+1}\,\mathbb{E}\left[ \frac{1}{S_c} \cdot \Indic{S_c > S_d \,\wedge\, S_c  > p_{k+1}}\right] 
\quad=\quad p_{k+1}\,\Pr[S_c > p_{k+1}]
\]

Combining the simplifications of 
lines (I), (II) and (III), 
we have the following,
\begin{align*}
\mathbb{E}[N_k]\ =\, & (p_{k+1} - p_k)\, \Pr[S_c < p_{k+1}] \,+\, p_k \, \Pr[S_c \in I_k] \tag{I}\\
&-\ p_k\,\Pr[S_c > p_k]\tag{II}\\
&+\ p_{k+1}\,\Pr[S_c > p_{k+1}]\tag{III} \\\\
=\, &p_{k+1}\,\left(\Pr[S_c < p_{k+1}] \,+\ \Pr[S_c > p_{k+1}]\right) \\
& -\ p_k\left( \Pr[S_c < p_{k+1}] - \Pr[S_c \in [p_k, p_{k+1}) + \Pr[S_c>p_k] \right) \\\\
=\, & p_{k+1} - p_k
\end{align*}
\end{proof}

This proof requires the assumption that survival curves are \textit{strictly} monotonically decreasing on [0,1]. 
This means survival curves will not contain any
 large flat areas,
 which means there will not be 
 non-zero probability mass for \( S(c_i) = S(d_i) \) when \( c_i \neq d_i \),
 which means certain terms in the proof below 
 would fail to cancel with one another,
 leaving us with non-equivalent proportions within each bin (specifically higher proportions within bins that contain these flat lines).

A natural corollary of Theorem~\ref{thm:DCalGOF}
is that all consistent estimators of the true survival distribution will be D-Calibrated 
(if the true survival distribution is strictly monotonic).
Further, if survival time is independent and identically distributed (i.i.d.) across patients
then there will only be 
{one true survival curve} for all patients,
and thus,
as Kaplan-Meier is uniformly consistent~\cite{breslow1974large,csorgHo1983rate}:
\begin{lemma}
The Kaplan-Meier distribution
is 
asymptotically D-Calibrated.
\label{lem:KM-D-Calib}
\end{lemma}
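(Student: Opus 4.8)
The plan is to obtain Lemma~\ref{lem:KM-D-Calib} as a specialization of the corollary to Theorem~\ref{thm:DCalGOF}, with uniform consistency of the Kaplan-Meier estimator supplying the only additional ingredient. First I would record the two standing hypotheses that make that corollary applicable. By the i.i.d.\ assumption stated just before the lemma, there is a single shared true survival curve $S(\cdot)$ rather than a distinct $\tSF{\cdot}{i}$ per patient; and we assume $S(\cdot)$ is strictly monotonically decreasing (hence continuous, with no atoms), which is exactly the regularity condition required both by Theorem~\ref{thm:DCalGOF} and by the probability integral transform underlying Lemma~\ref{lem:Uniform}. Under these hypotheses Theorem~\ref{thm:DCalGOF} gives $\mathbb{E}[N_k] = p_{k+1}-p_k$ for every bin when the probabilities $S(\death{i})$ and $S(\censor{i})$ are read off the \emph{true} curve; that is, the true model is exactly D-Calibrated in expectation, and the censoring distribution has already been neutralized inside the proof of that theorem (it survives only through the factor $f(S_c)$, which integrates out under $c \perp d$).

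Next I would transfer this conclusion from $S$ to its Kaplan-Meier estimate $\estP{KM}{\cdot}$. The key fact, cited from~\cite{breslow1974large,csorgHo1983rate}, is that KM is uniformly consistent: $\sup_t |\estP{KM}{t} - S(t)| \to 0$ as the sample size $n \to \infty$. I would then argue that each bin weight $N_k$ (Equations~(\ref{eqn:Uncensored})--(\ref{eqn:2})) is a bounded functional of the survival curve at which the event and censoring probabilities are evaluated: the uncensored term depends on the curve only through $\Indic{\estP{}{\death{i}} \in [p_k,p_{k+1})}$, while the censored ``blurring'' terms depend continuously on $\estP{}{\censor{i}}$. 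Because $S$ is continuous and strictly decreasing it assigns no mass to the finite set of bin boundaries $\{p_k\}$, so these functionals are continuous at $S$ in the sup-norm. A continuous-mapping argument then shows the KM bin weights $N_k^{KM}$ converge in probability to the true-curve weights $N_k$, whence $\mathbb{E}[N_k^{KM}] \to p_{k+1}-p_k$.

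Finally I would close the loop through the goodness-of-fit test. Since the observed decile proportions converge to the uniform proportions $1/B$, the systematic Pearson $\chi^2$ deviations from uniformity vanish as $n\to\infty$, so the rejection probability converges to the nominal test level rather than to one; this is precisely the statement that KM is \emph{asymptotically} D-Calibrated, and it explains the $p=1.000$ entries for \KM\ in Table~\ref{tab:Dcal}.

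I would expect the main obstacle to be the transfer step: $\estP{KM}{\cdot}$ is a right-continuous step function and is \emph{not} strictly monotone, so Theorem~\ref{thm:DCalGOF} cannot be invoked directly on the estimate itself. The care needed is to apply the theorem only to the idealized continuous $S$ and to justify the passage to the estimate purely through uniform consistency together with continuity of the bin functionals at a boundary-atom-free limit. A secondary subtlety is that each $\estP{KM}{\death{i}}$ is evaluated exactly at a jump of $\estP{KM}{\cdot}$; I would note that, in the all-uncensored i.i.d.\ case, these plugged-in values are the uniform grid $\{i/n\}$ by construction, which both confirms that the jump evaluations introduce no asymptotic bias and gives a direct finite-sample explanation of the near-perfect empirical calibration of \KM.
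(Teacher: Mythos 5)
Your proposal is correct and follows essentially the same route as the paper: the paper likewise derives the lemma as a corollary of Theorem~\ref{thm:DCalGOF} (the true, strictly monotone survival function is D-Calibrated in expectation), combined with the i.i.d.\ assumption yielding a single shared curve and the uniform consistency of Kaplan-Meier. Your write-up is in fact more careful than the paper's one-sentence argument---in particular the continuous-mapping transfer step and the observation that the theorem cannot be applied directly to the step-function estimate are details the paper leaves implicit.
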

This is consistent with
the results given in 
Section~\ref{sec:Empirical Results},
which showed that \KM\ 
always passed the D-Calibration test with a \(p\)-value 1.000, in all 8 datasets.
Under all uncensored data,
we would expect the typical 5\% Type I error rate for claiming \(p<\)0.05 as significant, 
however in the presence of censored data
a correct estimate of the survival distribution the proportion within bins become
smoothed, boosting the \(p\)-value.

\begin{figure} \centering
\includegraphics[width=\textwidth,height=2in]
{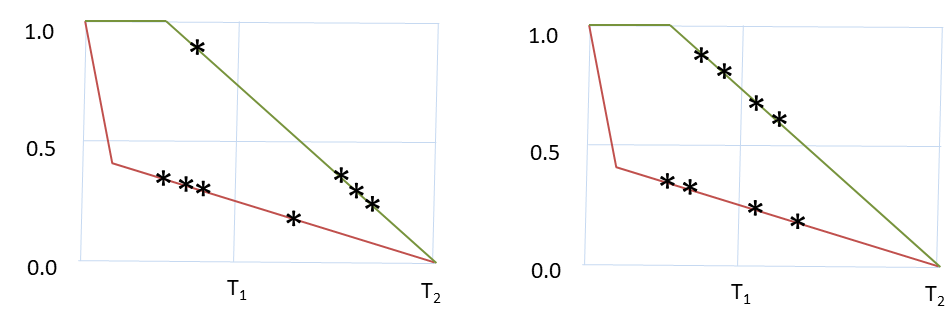}
\caption{  \label{fig:1Cal-Not-DCal}
Simplified models to illustrate:
[left] a model can have perfect 1-Calibration for a time, but not be D-Calibrated,
and 
[right] a model can have perfect D-Calibration, 
but not be 1-Calibrated for a time. (See text for description.)
}
\end{figure}

\begin{prop}
It is possible for a \ISD\ model to be
perfectly D-calibrated but not 1-calibrated at a time $\tzero$;
and for (another) \ISD\ model to be perfectly 1-calibrated at time $\tzero$ but not D-calibrated.
\label{Lma:1Cal.not.DCal}
\end{prop}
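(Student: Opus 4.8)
The plan is to prove this separation result by exhibiting two explicit (model, dataset) pairs, since Proposition~\ref{Lma:1Cal.not.DCal} is a pure existence claim and its two halves are logically independent. The guiding observation is that D-Calibration pools every patient's death-time percentile $\estCP{\Theta}{\death{i}}{\inst{i}}$ into a single distribution and asks only whether that pooled distribution is uniform, whereas 1-Calibration first \emph{stratifies} patients by their single-time prediction $\estCP{}{\tzero}{\inst{i}}$ and then checks each stratum separately. The separation will come entirely from this ``pool versus stratify'' difference, and both constructions will assume all patients are uncensored so that each contributes exactly one percentile.

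For the first half (D-Calibrated but not 1-Calibrated at $\tzero$), I would take two subpopulations of equal size. Every patient in group $A$ receives the single-time prediction $\estCP{}{\tzero}{\inst{i}}=0.8$ and every patient in group $B$ receives $\estCP{}{\tzero}{\inst{i}}=0.2$. I then fix the actual death times so that \emph{no} group-$A$ patient dies before $\tzero$ while \emph{every} group-$B$ patient does. Because the predicted death rates ($0.2$ and $0.8$) are grossly contradicted by the observed rates ($0$ and $1$), the Hosmer--Lemeshow statistic of Equation~\ref{eqn:hlstat} blows up and 1-Calibration fails. The remaining work is to verify that D-Calibration survives: since survival curves are monotone, a group-$A$ patient ($\death{i}>\tzero$) must have $\estCP{\Theta}{\death{i}}{\inst{i}}<0.8$ and a group-$B$ patient ($\death{i}<\tzero$) must have $\estCP{\Theta}{\death{i}}{\inst{i}}>0.2$, but the exact percentile inside those ranges is still a free parameter. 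A short counting argument then shows these degrees of freedom suffice to place exactly one tenth of the pooled patients in each decile $[0,0.1),\dots,[0.9,1]$: group $A$ fills the low bins $[0,0.2)$ together with its share of the overlap region $[0.2,0.8)$, while group $B$ fills the high bins $[0.8,1]$ together with its share of $[0.2,0.8)$. Hence $\DataInt{D}{\Theta}{[a,b]}$ contains the expected proportion $b-a$ for every decile and Pearson's $\chi^2$ test reports a perfect fit.

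For the second half (1-Calibrated at $\tzero$ but not D-Calibrated) I would run the argument in reverse, making the single-time behaviour correct but corrupting the behaviour at the times where deaths actually occur. Concretely, I give all patients survival curves that are correctly calibrated at $\tzero$ — within each Hosmer--Lemeshow bin the observed fraction dying by $\tzero$ matches the predicted $1-\estCP{}{\tzero}{\inst{i}}$ — so 1-Calibration passes exactly. Then, using the freedom to shape each curve away from $\tzero$ while respecting monotonicity and the $\tzero$-level split, I arrange the death times so that every $\estCP{\Theta}{\death{i}}{\inst{i}}$ is squeezed into a narrow band around $0.5$ (say $[0.4,0.6)$), leaving the outer deciles empty. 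The pooled percentile distribution is then far from uniform, so the $\chi^2$ goodness-of-fit test rejects and D-Calibration fails.

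The main obstacle is the feasibility/consistency check hidden in both constructions: I must confirm that the percentile placements demanded by the target (uniform in the first example, clustered in the second) can be realized \emph{simultaneously} with the monotonicity of each survival curve and with the $\tzero$-level constraints already imposed. This reduces to the bookkeeping sketched above — verifying that the counts of patients forced above versus below the percentile $\estCP{}{\tzero}{\inst{i}}$ in each group are exactly compatible with the desired per-decile totals (the $50/50$ split into deciles balances). It is worth highlighting that the relation ``the fraction of patients predicted $\estCP{}{\tzero}{\cdot}=p$ whose percentile exceeds $p$ equals the fraction of that group dying before $\tzero$'' is precisely the quantity 1-Calibration controls but D-Calibration does not, which is the structural reason the two tests can be driven apart.
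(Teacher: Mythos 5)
Your proposal is correct and takes essentially the same route as the paper's own proof: both halves are established by explicit counterexamples that exploit the fact that 1-Calibration stratifies patients by their prediction at the single time $\tzero$ while D-Calibration pools each patient's survival probability at his/her own death time. The only differences are cosmetic — the paper uses a fully concrete eight-patient example with two bins and probability levels $0.75/0.25$, whereas you use a $0.8/0.2$ split with decile bins and a counting argument for feasibility — and your bookkeeping does balance, so the construction goes through.
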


\begin{proof}
\comment{While 1-Calibration and D-Calibration do appear similar,
they are very different.
In particular, 
it is possible for a model to be perfectly D-calibrated, 
but have a very low 1-Calibration score (at a time),
and vice versa.
To illustrate that
}
{\bf ``1-Calibration $\not\Rightarrow$ D-Calibration'':}\quad
Consider the model shown in
Figure~\ref{fig:1Cal-Not-DCal}[left].
Here, the green curve corresponds to 4 apparently-identical patients $\{ \inst{g,1}, \dots, \inst{g,4} \}$, 
and the red curve,
to apparently-identical
$\{ \inst{r,1}, \dots, \inst{r,4} \}$.
The ``$\ast$''s mark the time when each patient died, denoted as $\death{\inst{}}$ 
for $\inst{}$. 
We intentionally use simple examples,
with no censored patients,
with curves that go to 0.
Note this model assigns $\estCP{}{T_1}{\inst{g,i}}\ =\ 0.75$
for each of the 4 green patients,
and $\estCP{}{T_1}{\inst{r,j}}\ =\ 0.25$ for 
each of the 4 red patients

To show that this model is 1-Calibrated, with respect to $T_1$:
Recall we first sort the  $\estCP{}{T_1}{\inst{}}$ values, then partition them into $k$ sets.
Here, we consider $k=2$, rather than the deciles earlier.
The first set contains the 4 patients with $\estCP{}{T_1}{\inst{}}  = 0.75$
  (\ie the green patients);
  and the second,
  the 4 patients with $\estCP{}{T_1}{\inst{}}  = 0.25$.
 Now note that 3 of the 4 
 ``$\estCP{}{T_1}{\inst{}}  = 0.75$ patients''
 are alive at $T_1$;
 and 1 of the 4 
 ``$\estCP{}{T_1}{\inst{}}  = 0.25$ patients''
 are alive at $T_1$
 -- which means this model is perfectly 
 1-Calibrated at $T_1$.


However, this model is not D-Calibrated:
To be consistent with the earlier 1-Calibration analysis,
we partition the time intervals into 2 sets (not 10),
as shown in Figure \ref{fig:1Cal-Not-DCal}.
Here,
$\estCP{}{\death{\inst{}}}{\inst{}} \in [0.5, 1]$   holds for only 1 patient,
and
$\estCP{}{\death{\inst{}}}{\inst{}} \in [0, 0.5]$  holds for 7;
if the model was D-Calibrated, each of these sets should contain 4 patients.

\noindent {\bf ``D-Calibration $\not\Rightarrow$ 1-Calibration'':}
\quad
See 
Figure~\ref{fig:1Cal-Not-DCal}[right],
where again, each line represent 4 different patients;
notice the outcomes are different from those on the left.
To see that this model is D-Calibrated,
note there are 4 patients with
$\estCP{}{\death{\inst{}}}{\inst{}} \in [0.5, 1]$   (the green patients),
and 4 with
$\estCP{}{\death{\inst{}}}{\inst{}} \in [0, 0.5]$
(for the red patients).
However, the model is not 1-Calibrated, at $T_1$:
  Of the 4 patients with $\estCP{}{T_1}{\inst{}}  = 0.75$,
  2 are alive at $T_1$;
 and 
 of the 4 patients with $\estCP{}{T_1}{\inst{}}  = 0.25$,
  2 are alive at $T_1$.
 To be 1-Calibrated, 
 there should be 3 living patients in the first set, 
 and 1 in the second;
 hence this model is not 1-Calibrated at $T_1$.
\end{proof}

%
 
\subsection{Other Subtle Points} 
\label{sec:SubtlePts}
All of these tools for producing survival curves are able to deal with ``right censored'' events:
where the censored event time is a {\em lower bound}\ of the time of death.
(This corresponds to, perhaps, the termination of a study, or when a participant left the study early.)
There are other types of censoring, including ``left censoring'', which provides an upper-bound on the time of death (\eg when a survey finds that the patient is currently dead, but does not know when previously this happened),
and ``interval censoring'', when we can constrain the time of death to some interval.
While there are extensions of each of these tools that can accommodate these alternate types of censoring, here we considered the most common case of having right-censored  instances, and included only datasets that had only such instances.

As a second subtle issue:
some of the methods involve taking the log of a predicted value, or of a true value;
see Appendix~\ref{app:LogL1-loss}.
This is clearly problematic if that value is 0
-- \eg if a patient died during a transplantation surgery.
To avoid these errors, we replace any such 0 with the $\eta$ for a database, 
which is defined as 1/2 of the minimum observed positive time of any event, in that dataset.
That is, we ignore all time=0 events,
and then consider the smallest remaining value.
If that value is, say, 1.0 day, then we set
$\eta = $0.5 days.
Note that all other times are left unchanged.

\comment{
\note[BH]{Another subtle issue we could consider mentioning - if people die at time 0 (eg. in transplantation surgery), then should \(S(0)\neq 1\)?  Should we add \(\epsilon\) to all event times to prevent this?} \note[HH]{Currently we impute zero probability times to 1/2 the minimum positive time. This is likely not the best thing but its an easy thing to do.}
}

\comment{
\note[RG]{should we mention this?  This is not about survival analysis -- this is just a general issue wrt learning.
Ie, if we discuss this, we could also discuss overfitting in general, and ...}\note[HH]{If we are including this point to include other subjects then I don't think it's worth it. I think it's a nice point in itself but I think it's just adding to a paper that's already too long.}
A final subtle point deals with 
distribution of the training data: 
each of these tools uses the training data to estimate 
(the parameters of) a model, from which it can then produce a survival curve for a new individual, given a description of that instance.
As with any other model estimated from data,
the resulting curve is likely to be ``good'' if the training data contains instances that are similar to the new instance.
With purely observational data, this is typically true. 
However, this might not happen if we may want to predict the outcomes of different treatments --
that is, for each patient P, we may want to compute the survival curves for both [P, T1] and [P, T2], for two treatment options T1 and T2.  
Here, we need a training set that includes instances that “correspond” to [P, T1], and to [P, T2].  Unfortunately, this might not happen, especially if the treatments applied to the training patients depended on a clinician’s assessment: 
\eg perhaps treatment T1 is typically given to the (relatively) healthy patients, but treatment T2 to the sicker ones. 
If P is relatively healthy, this means there may be few training instances that match [P,T2], which means the learned \ISD\ model may lead to a poor curve for [P,T2].  
(This is not a problem when the training data is produced by a Randomized Control Trial, as that means the data includes both relatively healthy and relatively sick patients with treatment T1, and also both subpopulations with treatment T2.) 
}

\section{Comments about Various \ISD's} 
\label{app:ISDDetails}
\subsection{Comments about \CoxKP}
\label{app:CoxKP}
Notice Equation~\ref{eqn:CoxPH} embodies two strong assumptions:
(1)~that the individual features 
are independent of one another 
(\eg the outcome does not depend on a non-linear combination of the features), and 
(2)~that these covariates are independent of time 
-- which means that a blood test is as important just after an operation, as it is a year later, or a decade later.
These assumptions mean the survival curves for different patients will have the same basic shape, and will not cross;
see Figure~\ref{fig:AllCurves} (middle-left).
These simplifications allow the Cox model to suggest important information about individual features by examining the single coefficient $\beta_j$ associated with the $j^{th}$ feature,
\eg {\em does ``being male''
 increase the risk of dying from this specific cancer, or 
 does it protect against this outcome (or neither).}
 This ``neither'' case suggests that a given feature is not relevant to the prediction;
for this reason,
we used univariate Cox is a feature selection technique for our results.

By contrast, \MTLR\ and \RSFKM\ do not make these extreme assumptions, which means that a given feature can have different levels of importance at different times. Moreover, the curves for different patients can cross; 
see Figure~\ref{fig:AllCurves}. 
More relevant, however, we found that \MTLR\ is more often D-Calibrated, 
and hence more meaningful for individual patients, than this ``predictive Cox'' system;
see Table~\ref{tab:Dcal}. 
While this Cox analysis of survival
may not be directly relevant for individual patients,
there are still extreme benefits in being able to identify important features. 
By observing how different features impact survival, clinicians can be made aware of treatments or lifestyle changes that best help patients survival.

\subsection{Overview of \MTLR} 
\label{app:PSSP-intro}

\def\vt#1{\vec{\theta}_{#1}}
\newcommand{\vtset}{{\mathbf{\Theta}}}
\def\andB{} 
\newcommand{\real}{\mathbb{R}}
\newcommand{\x}{\vec{x}}
\newcommand{\vb}{\vec{b}}

{
Consider%
\footnote{
This paragraph is paraphrased from \cite{PSSP-NIPS};
reprinted with permission of publisher/author.
}
modeling the probability of survival of patients at each of a vector of time points
$\tau = [t_1, t_2, \ldots, t_m]$
-- \eg $\tau$ could be the 60 monthly intervals from 1 month up to 60 months. 
We can set up a series of logistic regression models:
For each patient, represented as $\x$,
\begin{equation} 
  \CPRsub{\vt{i}}{T \geq t_i}{\x}
  \quad=\quad 
   \left(1 + \exp(\vt{i}\cdot\x \andB)\right)^{-1},\qquad 1\leq i\leq m, \
   \label{eq:lr_series} 
\end{equation}
where $\vt{i}$ 
are the time-specific parameter vectors. 
While the input features $\x$ stay the same for all these classification tasks,  
the binary labels $y_i = [T\geq t_i]$ can change depending on the threshold $t_i$.%
\comment{
This particular setup allows us to answer queries about the survival probability of individual patients at each of the time snapshots $\{t_i\}$, getting close to our goal of modeling a personal survival time distribution for individual patients.  
The use of time-specific parameter vector naturally allows us to capture the effect of time-varying covariates, similar to many dynamic regression models \cite{gamerman1991dynamic, hastie1993varying}. 
}
We encode the survival time $\death{}$ of a patient as a  sequence of binary values:
$y = y(\death{}) = [y_1, y_2, \ldots, y_m]$,
where $y_i = y_i(\death{}) \in \{0,1\}$ denotes the survival status of the patient at time $t_i$, 
so that $y_i = 0$ (no death event yet) for all $i$ with $t_i <\death{}$, and $y_i = 1$ (death) for all $i$ with $t_i \geq \death{}$.
Here there are $m+1$ possible legal sequences of the form%
\footnote{
Notice there are no `0's after a `1'.
This is the `no zombie' rule: once someone dies, 
that person stays dead.
}
$[0,0,\ldots,1,1,\ldots,1]$, including the sequence of all `0's and the sequence of all `1's. 
Our \MTLR\ model computes 
the probability of observing the survival status sequence
$y = [y_1, y_2, \ldots, y_m]$ 
as:
\[
  \CPRsub{\vtset}{Y\!\! =\!\! [y_1, y_2, \ldots, y_m]}{\x}
  \quad=\quad \frac{\exp(\sum_{i=1}^m y_i \times \vt{i}\cdot\x \andB )}
     {\sum_{k=0}^m \exp(f_{\vtset}(\x, k))}, 
\]
where $\vtset = [\vt{1}, \ldots, \vt{m}]$, and
$f_{\vtset}(\x, k) = \sum_{i=k+1}^m (\vt{i}\cdot\x \andB)$ 
for $0\leq k\leq m$ is the score of the sequence with the event occurring in the interval $[t_k, t_{k+1})$ before taking the logistic transform, with the boundary case $f_{\vtset}(\x, m) = 0$ being the score for the sequence of all `0's. 
Given a dataset of $n$ patients $\{ \inst{r} \}$ with associated time of deaths $\{ \death{r} \}$, 
we find the optimal parameters (for the \MTLR\ model) $\vtset^*$ as
\begin{equation}
\vtset^*\ =\
\arg\max_{\vtset} 
\sum_{r=1}^n \left[\sum_{i=1}^m y_j(\death{r})(\vt{i}\!\cdot\!\x_r\andB)\! - \log \sum_{k=0}^m \exp f_{\vtset}(\x_r,k) \right]
  -  
\frac{C}{2}\!\sum_{j=1}^m\! \|\vt{j}\|^2\! 
\label{eq:multitask_lr_opt}
\end{equation}
where the $C$ (for the regularizer) is found by an internal cross-validation process.
}
\comment{
In a nutshell, the \MTLR-Learner first learns a logistic regression model
$f_t(\inst{}, \vt{t} ) \approx (1+ \exp(\inst{}^\top \vt{t}))^{-1}$
for each of a number ($r$) of time points $t \in \{ t_1, t_2, \dots, t_r \}$
(\eg perhaps for the $r=60$ times $t$ ranging over 1month, 2months, ..., 60months).
If there are $k$ covariates, this meaning \MTLR-Learner will learn $k+1$ values for each time -- that is, 
it will learn the $k+1$-tuple $\vt{1}$ associated with the time $t_1$,  
then the $k+1$-tuple $\vt{2}$ associated with the time $t_2$,
and so forth.
The value of $f_i(\inst{}, \vt{i} )$ is designed to correspond to the probability mass function (PMF), for the interval $[t_i, t_{i+1}]$.
The associated survival function for any patient $\inst{j}$ is 1.0 minus the Cumulative Distribution Function, which for each time $t$ is the sum of PMFs up until $t$. 
}

There are many details here -- \eg 
to insure that the survival function starts at 1.0, and decreases monotonically and smoothly until reaching 0.0 for the final time point; 
to deal appropriately with censored patients; 
to decide how many time points to consider ($m$); and 
to minimize the risk of overfitting (by regularizing),
and by selecting the relevant features.
The paper by Yu~\etal~\cite{PSSP-NIPS} provides the details.

Afterwards, the \ISD-Predictor can use the learned \MTLR-model 
$\vtset^*= [\vt{1}, \dots, \vt{m}]$
to produce a curve for a novel patient, 
who is represented as the vector of his/her covariates $\inst{j}$.
This involves computing the $m$ values, 
$[f_1(\inst{j}, \vt{1} ), \dots,\ f_r(\inst{j}, \vt{m} )]$;
the running sum of these values is essentially the survival curve.
We then use splines to produce a smooth monotonically decreasing curve -- such as the 10 such curves shown in 
Figure~\ref{fig:AllCurves} (bottom-right).

\subsection{Extension to Random Survival Forests (\RSFKM)}
\label{app:RSFKM}
Given a labeled dataset, 
a random survival forest learner will  produce
a set of $T$ decision trees from a bootstrapped sample of the training data.
It grows each tree recursively, starting from the root
-- identifying each position with the set of patients 
who arrive there.
For each position, the growth stops if there are 
fewer than  $d_0$ deaths
(where $d_0$ is chosen via cross-validation).
Otherwise, it identifies the feature for this node:
it first randomly draws a small random subset of the features to consider,
then selects the feature (from that subset)
that maximizes 
the difference in survival between two daughter nodes,
based on the logrank test statistic (or some other chosen splitting rule).
This becomes the rule of that node;
and the learner then considers its two daughters,
by splitting on the node's feature.

\comment{For each node in every tree, a small random subset of the features is chosen on which to split such that the feature maximizes the difference in survival between daughter nodes. Here the difference in survival is based on the logrank test statistic . These trees are grown to full length such that terminal nodes have no fewer than $d_0$ deaths, where $d_0$ is chosen via cross-validation.}


Each leaf node in each tree corresponds to the 
set of training instances that reached that node.
Given these learned trees,
to classify a novel instance $\inst{}$,
the random forest performance system will 
drop $\inst{}$ into each of the trees,
which will lead to $T$ different leaf nodes, 
then use the $T$ subsets of training instances to make a decision. Since each terminal node in the random survival forest 
contains a set of instances,
we can use these instances to produce a Kaplan-Meier curve.%
\footnote{
While the original paper does not consider survival curves,  documentation
\url{https://kogalur.github.io/randomForestSRC/theory.html\#section8.1}
describing the inner workings of the R package states that survival curves in terminal nodes are created via the Kaplan-Meier estimator.}
\comment{
In a random forest architecture, a set of $T$ trees are learned to divide patients into useful subsets; at each inner node of a tree, patients are partitioned according to some feature value (eg. if a node splits on gender, male patients could be in the left subtree and female patients in the right subtree).  Each terminal node of a tree represents a group of training data patients that have features in common that lead to this terminal node (eg. Female, Age$>$50, Blood Type A, etc).  Randomization is used when selecting features to split on and to restrict the training data available to each tree, and has been shown to reduce overfitting of the training data.  When a novel patient is presented, you can traverse each tree according to the feature values of the patient to a terminal node, thus finding $T$ (not necessarily disjoint) patient subsets of the training data that are similar to the patient under consideration.

The original random survival forests article by 
Ishwaran et al. \cite{RandomSurvivalForests} primarily described the random survival forest algorithm and how to attain risk scores for individual patients by using individual cumulative hazards. However, since each terminal node in the random survival forest generates a survival curve (within the available R implementation)
, an individual survival curve can be attained in addition to a risk score. While the original paper does not consider survival curves,  documentation\footnote{https://kogalur.github.io/randomForestSRC/theory.html\#section8.1} describing the inner workings of the R package states that survival curves in terminal nodes are created via the Kaplan-Meier estimator. 
}

Once the survival forest has been learned (with \(T\) trees),
a patient is dropped into each of the \(T\) survival trees,
leading to $T$ leaf nodes, which produces 
\(T\) Kaplan-Meier curves. The \RSFKM\ implementation then ``averages'' these curves,
by taking a point-wise average across the curve
for all time points -- see Figure~\ref{fig:RSFKMPlot}.%
\footnote{
The method for generating individual survival curves could not be found in any of the literature by the authors of random survival forests. Survival curves were reverse-engineered by the authors of this paper -- all survival curves tested matched the methodology explained here.} 

Note that the risk score generated by the  median of the individual survival curves (produced here) does not 
necessarily result in the same ordering of patients as the risk scores of the original \RSF\ implementation,
which uses averaged cumulative hazards as a risk score. 
For this reason,
we also applied the original \RSF\ process to the datasets presented in the paper.
We found that the Concordance scores were similar to that of \RSFKM; 
\MTLR\ still outperformed \RSF\ on the datasets
where \MTLR\ outperformed \RSFKM\ (data not shown).

\begin{figure}[t]
\centering
\includegraphics[width = 0.5\textwidth]{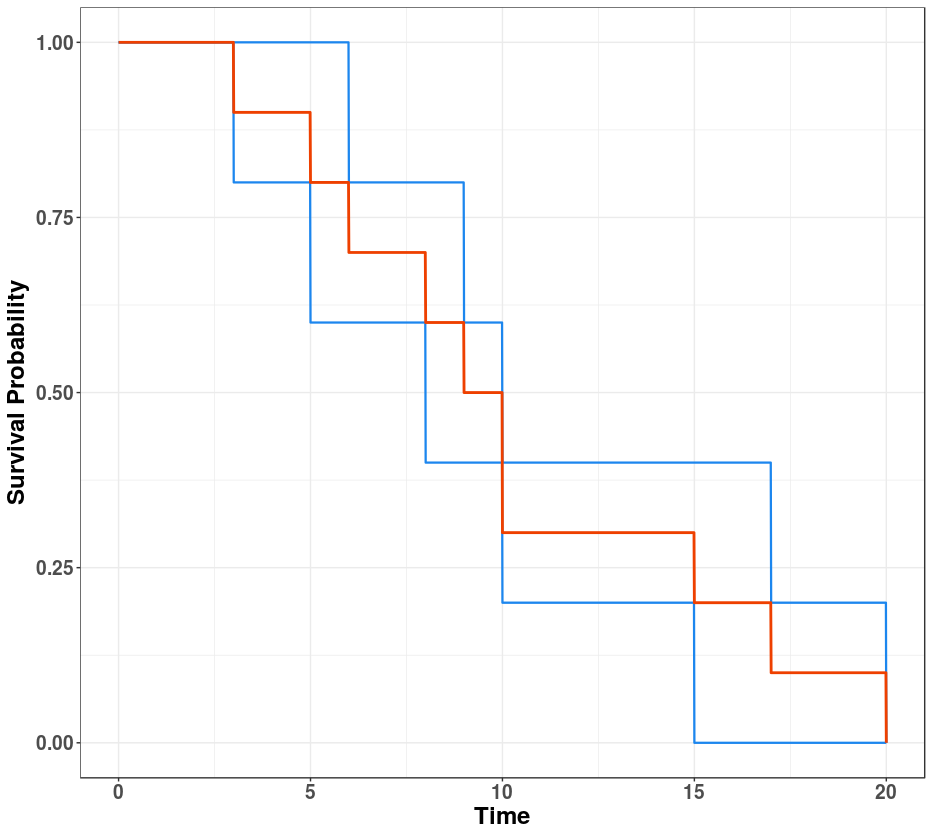}
\caption{\label{fig:RSFKMPlot}
This figure illustrates how to combine two different survival curves, to produce a new one. (\RSFKM\ uses this idea to ``merge'' the curves
obtained from the various leaf nodes reached by a novel instance.)
Here, two survival curves, given in blue,
are averaged to produce the survival curve shown in dark orange.
Note that the averaged curve is generated from a point-wise average, \ie new calculations must only be computed at each death time -- that is a drop in either (blue) Kaplan-Meier curve.}

\end{figure}

\comment{
\subsection{Comments about \CoxENKP}
\label{app:CoxKPEN}

The objective function of \CoxENKP a mixture of the partial log-likelihood for the Cox model and the penalty term, 
\newcommand{\norm}[1]{\left\lVert#1\right\rVert}
\[
\lambda \left(\frac{1- \alpha}{2}\norm{\beta}_2^2 + \alpha\norm{\beta}_1\right),
\]
where \(\beta\) are the feature coefficients and \(\lambda\) and \(\alpha\) are tuning parameters. Note that \(\alpha =1\) corresponds to the LASSO penalty. The \textit{cv.cocktail} function given by the \textit{fastcox} package~\cite{coxKPEN} in R will, by default, test 100 values for \(\lambda\) for a given value of \(\alpha\) using 5CV. As there is no default values for \(\alpha\) we performed grid search using \(\alpha \in \{0.01,0.2,0.4,0.6,0.8,1\}\) -- 0 is not an allowable parameter for \(\alpha\).  
}

\section{Detailed Empirical Results}  
\label{app:EmpiricalDetail}

This sub-appendix includes the tables 
that correspond to the figures given in Section~\ref{sec:Empirical Results}. 
Further, Appendix~\ref{app:1-CalDetails}
provides the all \(p\)-values for the 1-Calibration tests.

\subsection{Concordance} 
\label{sec:Concord}

See Table~\ref{tab:Conc} for the results corresponding to Figure~\ref{fig:ConcEvaluation}.

\begin{table}[th]	\centering
\caption{\label{tab:Conc} Concordance results corresponding to Figure~\ref{fig:ConcEvaluation}. \textbf{Bold} values indicate the best (highest Concordance) performing model, for each dataset.}
\resizebox{\textwidth}{!}{
	\begin{tabular}{ >{\columncolor{LightCyan}}r|cccccHccc}
		\hline
		\rowcolor{gray!50}   & GBM & GLI & \NacdCol & NACD & READ & THCA & BRCA & DBCD & DLBCL \\ 
		\hline
	\KM & 0.500 (0.000) & 0.500 (0.000) & 0.500 (0.000) & 0.500 (0.000) & 0.500 (0.000) & 0.500 (0.000) & 0.500 (0.000) & 0.500 (0.000) & 0.500 (0.000)) \\ 
	\AFT & 0.692 (0.026) & 0.802 (0.011) & 0.722 (0.015) & 0.755 (0.008) & 0.700 (0.183) & 0.705 (0.285) & 0.738 (0.041)&0.474 (0.056) & 0.558 (0.046) \\ 
	\CoxKP & 0.696 (0.026) & 0.805 (0.01) & 0.722 (0.015) & 0.755 (0.008) & 0.716 (0.157) & 0.675 (0.300)&0.747 (0.030) & - & - \\ 
\CoxENKP & 0.698 (0.032) & 0.801 (0.006) & 0.726 (0.024) & 0.754 (0.009) & 0.731 (0.141) & \textbf{0.955 (0.032)} & 0.761 (0.032) &0.744 (0.055) & 0.695 (0.038) \\ 
  \RSFKM & 0.650 (0.038) & 0.776 (0.021) & \textbf{0.743 (0.024)} & \textbf{0.758 (0.005)} & 0.582 (0.093) & 0.950 (0.027) &0.704 (0.036)& 0.738 (0.060) & 0.639 (0.038) \\ 
  \MTLR & \textbf{0.703 (0.032)} & \textbf{0.812 (0.013)} & 0.734 (0.023) & 0.757 (0.008) & \textbf{0.768 (0.091)} & 0.913 (0.080) & \textbf{0.770 (0.027)} &\textbf{0.766 (0.065)} & \textbf{0.704 (0.034)} \\ 
				\hline
			\end{tabular}}
\end{table}

\subsection{Brier Score}

See Table~\ref{tab:Brier} for the results corresponding to Figure~\ref{fig:BrierEvaluation}.

\begin{table}[H]
	\caption{\label{tab:Brier} Integrated Brier score results corresponding to Figure~\ref{fig:BrierEvaluation}. 
{\bf Bold} values indicate the best performing model for each dataset
-- with the lowest Integrated Brier score. 
Note that this table (and all following tables)
may show ties (up to three digits),
but will only bold the one with the best performance,
based on additional digits, not shown.
 }
	\centering
	\resizebox{\textwidth}{!}{
		\begin{tabular}{ >{\columncolor{LightCyan}}r|cccccHccc}
			\hline
			\rowcolor{gray!50}  & GBM & GLI & \NacdCol & NACD & READ & THCA &BRCA& DBCD & DLBCL \\ 
			\hline
			 \KM & 0.046 (0.001) & 0.034 (0.000) & 0.083 (0.001) & 0.089 (0.000) & 0.027 (0.005) & 0.011 (0.002) & 0.017 (0.001) &0.097 (0.003) & 0.109 (0.004) \\ 
			 \AFT & 0.041 (0.003) & 0.021 (0.002) & 0.066 (0.003) & 0.065 (0.002) & 0.026 (0.017) & \textbf{0.007 (0.005)} &0.0133 (0.002)& 0.254 (0.023) & 0.295 (0.038) \\ 
			 \CoxKP & 0.040 (0.003) & 0.022 (0.003) & 0.066 (0.003) & 0.067 (0.002) & 0.026 (0.017) & 0.010 (0.012)&0.014 (0.002) & - & - \\ 
\CoxENKP & 0.040 (0.003) & 0.021 (0.002) & 0.064 (0.003) & 0.065 (0.001) & 0.024 (0.011) & 0.007 (0.003) & 0.015 (0.002) &\textbf{0.070 (0.004)} & 0.078 (0.013) \\ 
  \RSFKM & 0.059 (0.009) & 0.051 (0.009) & 0.079 (0.006) & 0.079 (0.001) & 0.047 (0.013) & 0.007 (0.003) & 0.028 (0.002)&0.077 (0.003) & 0.095 (0.013) \\ 
  \MTLR & \textbf{0.039 (0.004)} & \textbf{0.019 (0.002)} & \textbf{0.062 (0.003)} &\textbf{ 0.063 (0.001)} & \textbf{0.023 (0.006)} & 0.008 (0.002) &\textbf{0.012 (0.001)} & 0.070 (0.003) & \textbf{0.078 (0.011)} \\  
			\hline
		\end{tabular}}
	\end{table}
\subsection{Empirical Values of L1-Loss, and Variants}
\label{app:L1LossEmpirical}

Here we give the the results for the Margin-L1-loss
(Table~\ref{tab:L1}) as given in Figure~\ref{fig:L1Evaluation}.
Additionally, we give results for 
the Uncensored L1-loss (Table~\ref{tab:L1Unc})
and the Log-Margin-L1-loss (Table~\ref{tab:LogL1}).

\begin{table}[H]
	\caption{\label{tab:L1} Margin-L1-loss results corresponding to Figure~\ref{fig:L1Evaluation}.  {\bf Bold} values indicate the best performing model for each dataset -- with the lowest Margin-L1-loss.}
	\centering
	\resizebox{\textwidth}{!}{
		\begin{tabular}{ >{\columncolor{LightCyan}}r|cccccHccc}
			\hline
			\rowcolor{gray!50} & GBM & GLI & \NacdCol & NACD & READ & THCA & BRCA & DBCD & DLBCL \\ 
			\hline
			\KM & 1431.31 (59.25) & 2746.70 (91.85) & 56.45 (1.95) & 61.97 (0.50) & 3677.90 (222.77) & 26648.06 (3655.84) & 5392.04 (128.19)& 24.88 (0.78) & 20.28 (2.06) \\ 
			\AFT & \textbf{1240.60 (57.38)} & 1838.20 (105.23) & 47.89 (1.85) & 43.99 (1.44) & 4068.72 (1451.14) & 43553.71 (15361.15) & 5156.1 (264.50)&47.01 (4.38) & 27.29 (3.31) \\ 
			\CoxKP & 1278.18 (44.02) & 1824.06 (127.49) & 45.53 (2.23) & 44.26 (1.29) & 4799.91 (1460.52) & 37948.81 (15235.14) &6247.01 (612.3)& - & - \\ 
\CoxENKP & 1347.36 (51.6) & 1683.04 (110.82) & 45.25 (1.94) & 45.72 (1.43) & 3564.01 (1163.92) & 29937.76 (4821.03) & \textbf{4593.52 (370.75)} & 24.28 (2.14) & 15.97 (1.10) \\ 
  \RSFKM & 1399.17 (99.06) & 4503.49 (465.47) & 58.81 (2.28) & 49.69 (1.38) & 6805.00 (1710.50) & 30600.90 (6000.37) & 10934.45 (579.44)& 26.58 (2.35) & 17.68 (1.81) \\ 
  \MTLR & 1271.73 (37.71) & \textbf{1582.72 (131.1)} & \textbf{43.48 (2.52)} &\textbf{ 43.97 (1.20)} & \textbf{3417.49 (256.83)} & \textbf{22507.2 (5265.95)} &4669.55 (153.50)& \textbf{20.01 (1.47)} & \textbf{15.52 (1.97)} \\ 
			\hline
		\end{tabular}}
	\end{table}

\begin{table}[H]
	\caption{\label{tab:L1Unc} Uncensored L1-loss (not the L1-Margin loss given in Figure \ref{fig:L1Evaluation}).  {\bf Bold} values indicate the best performing model for each dataset -- with the lowest L1-loss.}
	\centering
	\resizebox{\textwidth}{!}{
		\begin{tabular}{ >{\columncolor{LightCyan}}r|cccccHccc}
  \hline
			\rowcolor{gray!50} & GBM & GLI & \NacdCol & NACD & READ & THCA & BRCA & DBCD & DLBCL \\ 
			\hline
  \KM & 318.53 (10.49) & 520.32 (9.56) & 19.40 (0.15) & 12.17 (0.08) & 1829.99 (1261.87) & 24927.82 (3508.1) & 2418.79 (24.55)& 18.69 (0.64) & \textbf{2.90 (0.20)} \\ 
  \AFT & 291.28 (28.99) & 524.78 (64.89) & 19.58 (1.38) & 12.54 (1.07) & 1738.27 (1191.64) & 19979.11 (17054.45) & 2681.47 (453.66)&23.77 (3.95) & 13.48 (3.01) \\ 
  \CoxKP & 281.61 (26.53) & 542.35 (76.2) & 18.91 (1.35) & 12.63 (1.12) & 2248.56 (1263.39) & 13803.72 (17229.5) &3141.20 (521.90) & - & - \\ 
  \CoxENKP & 284.86 (14.93) & 482.70 (44.76) & 16.14 (1.12) & \textbf{10.64 (0.66)} & 1936.34 (1186.59) & 8782.37 (5531.82) &2647.76 (399.62) & 11.52 (1.51) & 3.19 (1.38) \\ 
  \RSFKM & 373.14 (72.48) & 1204.95 (276.78) & 28.33 (2.15) & 15.61 (1.07) & 4099.14 (1482.32) & 8047.65 (6798.73)& 5671.22 (511.99) & 13.07 (1.51) & 3.91 (0.93) \\ 
  \MTLR & \textbf{272.20 (27.15)} & \textbf{436.99 (62.40)} & \textbf{15.87 (1.03)} & 10.71 (0.57) & \textbf{1411.46 (306.97)} &\textbf{7691.34 (3677.10)}& \textbf{2167.09 (208.32)} & \textbf{9.30 (1.30)} & 3.70 (0.92) \\ 
   \hline
\end{tabular}}
\end{table}

\begin{table}[H]
	\caption{\label{tab:LogL1} Log-Margin-L1-loss (not the L1-Margin loss given in Figure \ref{fig:L1Evaluation}).  {\bf Bold} values indicate the best performing model for each dataset -- with the lowest Log-Marign-L1-loss.}
	\centering
	\resizebox{\textwidth}{!}{
		\begin{tabular}{ >{\columncolor{LightCyan}}r|cccccHccc}
  \hline
			\rowcolor{gray!50} & GBM & GLI & \NacdCol & NACD & READ & THCA & BRCA &  DBCD & DLBCL \\ 
			\hline
    \KM & 1.98 (0.01) & 2.35 (0.02) & 1.76 (0.04) & 2.20 (0.01) & 2.05 (0.19) & 3.41 (0.26) & 1.78 (0.03) & 2.02 (0.04) & 3.37 (0.11) \\ 
   \AFT & \textbf{1.64 (0.08)} & 1.36 (0.06) & 1.44 (0.01) & \textbf{1.48 (0.04)} & 2.00 (0.60) & 2.96 (1.20) & 1.65 (0.14)& 6.21 (0.68) & 10.54 (2.62) \\ 
  \CoxKP & 1.68 (0.07) & 1.35 (0.06) & 1.42 (0.02) & 1.48 (0.04) & 2.02 (0.56) & 2.96 (1.13)&1.73 (0.11) & - & - \\ 
  \CoxENKP & 1.80 (0.01) & 1.35 (0.05) & 1.43 (0.03) & 1.55 (0.05) & \textbf{1.88 (0.56)} & 2.63 (0.42) & \textbf{1.56 (0.06)} & 1.77 (0.08) & 2.70 (0.11) \\ 
  \RSFKM & 1.85 (0.20) & 1.98 (0.10) & 1.51 (0.05) & 1.54 (0.04) & 2.62 (0.47) & \textbf{2.52 (0.64)} & 2.21 (0.12) & 1.90 (0.05) & 3.04 (0.14) \\ 
  \MTLR & 1.67 (0.07) & \textbf{1.25 (0.08)} & \textbf{1.37 (0.05)} & 1.50 (0.04) & 1.95 (0.17) & 2.89 (0.55) & 1.57 (0.06) &\textbf{ 1.67 (0.07)} & \textbf{2.60 (0.18)} \\ 
   \hline
\end{tabular}}
\end{table}

\subsection{1-Calibration}
Each table corresponds to a different percentile of event times for each dataset. Moving down the 10th, 25th, 50th, 75th, and 90th percentiles are given. \textbf{Bolded} values indicate models which passed 1-Calibration (\(p > 0.05\)).
The ``Total'' column of each table gives the total number of datasets 
passed by each model -- 
that is, the values in that 
columns correspond to Table~\ref{tab:cum1Cal}.

\label{app:1-CalDetails}

\begin{table}[ht]
		\caption{1-Calibration Results at \(\tzero\) = 10th Percentile of Event Times}
	\centering
	\resizebox{\textwidth}{!}{
		\begin{tabular}{ >{\columncolor{LightCyan}}r|cccccHccc|c}
			\hline
			\rowcolor{gray!50}  & GBM & GLI & \NacdCol & NACD & READ & THCA & BRCA & DBCD & DLBCL & Total \\ 
			\hline
  \AFT & 0.001 & \textbf{0.159} & \textbf{0.794} & 0.012 & \textbf{1.000} & 0.000 &\textbf{0.919}& 0.000 & 0.000 & 4 \\ 
  \CoxKP & 0.001 & \textbf{0.140} & \textbf{0.794} & 0.008 & \textbf{0.999} & 0.000 &\textbf{0.782}& - & - & 4 \\ 
  \CoxENKP & 0.000 & 0.033 & 0.043 & 0.000 & \textbf{0.999} & \textbf{0.871} & \textbf{0.561} & \textbf{0.454} & \textbf{0.646} & 4 \\ 
  \RSFKM & 0.000 & 0.000 & \textbf{0.078} & 0.016 & \textbf{0.998} & \textbf{1.000} & 0.000 & \textbf{0.164} & \textbf{0.273} & 4 \\ 
  \MTLR & \textbf{0.908} & \textbf{0.450} & \textbf{0.440} & 0.047 & \textbf{1.000} & \textbf{0.506} & \textbf{0.929} &  0.000 & \textbf{0.177} & 6\\ 
			\hline
		\end{tabular}}

	\end{table}
	
	\begin{table}[ht]
		\caption{1-Calibration Results at \(\tzero\) = 25th Percentile of Event Times}
		\centering
		\resizebox{\textwidth}{!}{
			\begin{tabular}{ >{\columncolor{LightCyan}}r|cccccHccc|c}
				\hline
				\rowcolor{gray!50}  & GBM & GLI & \NacdCol & NACD & READ & THCA & BRCA & DBCD & DLBCL & \textbf{Total} \\ 
				\hline
				\AFT & 0.000& 0.040 & \textbf{0.586} & 0.009 & 0.000& 0.000& \textbf{0.205} & 0.000& 0.000 & 2\\ 
				\CoxKP & 0.000& 0.008 & \textbf{0.379} & 0.003 & 0.000& 0.000&\textbf{0.535}& - & - & 2 \\ 
			\CoxENKP & 0.000 & 0.002 & 0.003 & 0.000 & \textbf{0.238} & \textbf{0.915} & 0.044 & \textbf{0.436} & \textbf{0.547} & 3 \\ 
              \RSFKM & 0.000 & 0.000 & \textbf{0.312} & 0.006 & 0.000 & \textbf{0.999}& 0.000 & 0.042 & \textbf{0.227} & 2 \\ 
              \MTLR & \textbf{0.963} & \textbf{0.312} & \textbf{0.645} & \textbf{0.254} & \textbf{0.449} & 0.046 &\textbf{0.448}& \textbf{0.177} & \textbf{0.052} &8 \\ 
				\hline
			\end{tabular}}
		\end{table}
		
		\begin{table}[ht]
		\caption{1-Calibration Results at \(\tzero\) = 50th Percentile of Event Times}
			\centering
			\resizebox{\textwidth}{!}{
				\begin{tabular}{ >{\columncolor{LightCyan}}r|cccccHccc|c}
					\hline
					\rowcolor{gray!50}  & GBM & GLI & \NacdCol & NACD & READ & THCA & BRCA & DBCD & DLBCL & \textbf{Total}\\ 
					\hline
					\AFT & \textbf{0.117} & 0.030 & 0.035 & 0.043 & 0.000& 0.000& 0.000& 0.000& 0.000& 1\\ 
					\CoxKP & \textbf{0.495 }& 0.005 & 0.038 & \textbf{0.124} & 0.000& 0.000 & 0.017& - & - & 2 \\ 
  \CoxENKP & 0.019 & 0.000 & 0.000 & 0.000 & 0.049 & \textbf{0.978} & 0.000& 0.025 & \textbf{0.822} & 1 \\ 
  \RSFKM & 0.000 & 0.000 & \textbf{0.761} & 0.001 & 0.000 & \textbf{0.992} & 0.000 & 0.000 & \textbf{0.068} & 2 \\ 
  \MTLR & \textbf{0.796} & \textbf{0.306} & \textbf{0.813} & \textbf{0.112} & \textbf{0.995}  & \textbf{0.765} &0.013& 0.041 & \textbf{0.262} & 6 \\ 
					\hline
				\end{tabular}}
			\end{table}
			
			\begin{table}[ht]
				\caption{1-Calibration Results at \(\tzero\) = 75th Percentile of Event Times}
				\centering
				\resizebox{\textwidth}{!}{
					\begin{tabular}{ >{\columncolor{LightCyan}}r|cccccHccc|c}
						\hline
						\rowcolor{gray!50}  & GBM & GLI & \NacdCol & NACD & READ & THCA & BRCA & DBCD & DLBCL & \textbf{Total}\\ 
						\hline
						\AFT & \textbf{0.378} & 0.000& 0.002 & 0.002 & 0.000& 0.000& 0.000&  0.000& 0.000& 1\\ 
						\CoxKP & 0.008 & 0.000& 0.003 & 0.004 & \textbf{0.087} & 0.000&0.016 &- & - & 1\\ 
  \CoxENKP & \textbf{0.338} & 0.000 & 0.000 & 0.000 & 0.001 & \textbf{0.355} & 0.000& 0.003 & \textbf{0.436} & 2 \\ 
  \RSFKM & 0.000 & 0.000 & \textbf{0.070} & 0.003 & 0.000 & 0.010 & 0.000& 0.002 & 0.038  & 1\\ 
  \MTLR & \textbf{0.140} & \textbf{0.565} & 0.044 & 0.045 & 0.026 & 0.011 & 0.000 & 0.036 & \textbf{0.218} & 3 \\
						\hline
					\end{tabular}}
				\end{table}
				
				\begin{table}[ht]
					\caption{1-Calibration Results at \(\tzero\) = 90th Percentile of Event Times}
					\centering
					\resizebox{\textwidth}{!}{
						\begin{tabular}{ >{\columncolor{LightCyan}}r|cccccHccc|c}
							\hline
							\rowcolor{gray!50}  & GBM & GLI & \NacdCol & NACD & READ & THCA & BRCA &  DBCD & DLBCL & \textbf{Total} \\ 
							\hline
							\AFT & 0.000& 0.000& 0.000& 0.000& 0.000& 0.000& 0.000 & 0.000& 0.000 & 0\\ 
							\CoxKP & 0.000& 0.000& 0.000& 0.000& 0.000& 0.000& 0.000 & - & - & 0 \\ 
  \CoxENKP & \textbf{0.050} & 0.000 & 0.004 & 0.000 & 0.000 & \textbf{0.058} & 0.000 & 0.010 & \textbf{0.112} & 2\\ 
  \RSFKM & 0.000 & 0.000 & 0.000 & 0.000 & 0.000 & 0.000 & 0.000 & 0.000 & 0.023 & 0\\ 
  \MTLR & \textbf{0.109} & \textbf{0.148} & 0.000 & 0.001 & 0.000 & 0.001 & 0.000 & \textbf{0.098} & \textbf{0.157} & 4\\ 
							\hline
						\end{tabular}}
					\end{table}
\end{document}